\def \lmi {LMI}
\def \reals {\mathbb{R}}
\def \p {\mathbb{P}}
\def\decide#1#2{
  \mathrel{
    \mathop{
      \begin{array}{c}
        >\vspace{-1.5ex}\\<
      \end{array}
      }\limits_{#2}\limits^{#1}
    }
  }
\theoremstyle{plain} \newtheorem{thm}{Theorem}%[section]
 \newtheorem{lem}[thm]{Lemma}
\theoremstyle{definition} \newtheorem{defn}{Definition}%[section]
\theoremstyle{remark} %[section]
\title{Efficient Minimax Signal Detection on Graphs}
\author{
Jing Qian \\
Division of Systems Engineering \\
Boston University \\
Brookline, MA 02446 \\
\texttt{jingq@bu.edu} \\
\And
Venkatesh Saligrama  \\
%\thanks{ Use footnote for providing further information about author (webpage, alternative address)---\emph{not} for acknowledging funding agencies.} \\
Department of Electrical and Computer Engineering\\
Boston University\\
Boston, MA 02215 \\
\texttt{srv@bu.edu} \\
%\AND
%Coauthor \\
%Affiliation \\
%Address \\
%\texttt{email} \\
%\And
%Coauthor \\
%Affiliation \\
%Address \\
%\texttt{email} \\
%\And
%Coauthor \\
%Affiliation \\
%Address \\
%\texttt{email} \\
%(if needed)\\
}
\begin{document}

\maketitle

\begin{abstract}
Several problems such as network intrusion, community detection, and disease outbreak can be described by observations attributed to nodes or edges of a graph. In these applications presence of intrusion, community or disease outbreak is characterized by novel observations on some unknown connected subgraph. These problems can be formulated in terms of optimization of suitable objectives on connected subgraphs, a problem which is generally computationally difficult. We overcome the combinatorics of connectivity by embedding connected subgraphs into linear matrix inequalities (LMI). Computationally efficient tests are then realized by optimizing convex objective functions subject to these LMI constraints. We prove, by means of a novel Euclidean embedding argument, that our tests are minimax optimal for exponential family of distributions on 1-D and 2-D lattices. We show that internal conductance of the connected subgraph family plays a fundamental role in characterizing detectability. %The proof of our result follows from a novel Euclidean graph embedding argument.
\end{abstract}

%%%%%%%%%%%%%%%%%%%%%%%%%%%%%%%%%%%%%%%%%%%%%%%%%%%%%%%%%%%%%%%%%%%%%%
\section{Introduction}
%%%%%%%%%%%%%%%%%%%%%%%%%%%%%%%%%%%%%%%%%%%%%%%%%%%%%%%%%%%%%%%%%%%%%%
%Recent years attention has been focused on the problem of detecting whether or not there is a connected cluster of nodes within a network that exhibit abnormal behavior with respect to the rest nodes.
Signals associated with nodes or edges of a graph arise in a number of applications including sensor network intrusion, disease outbreak detection and virus detection in communication networks. Many problems in these applications can be framed from the perspective of hypothesis testing between null and alternative hypothesis. Observations under null and alternative follow different distributions. The alternative is actually composite and identified by sub-collections of connected subgraphs.

%We consider a connected graph $G=(V,E)$ where nodes $v\in V$ are associated with features values $x_v$ that follow some statistical distribution. Our goal is to optimize some objective function of the feature values over all connected sub-graphs.
To motivate the setup consider the disease outbreak problem described in \cite{patil03}. Nodes there are associated with counties and observations associated with each county correspond to reported cases of a disease. Under the null distribution, observations at each county are assumed to be poisson distributed and independent across different counties. Under the alternative there are a contiguous sub-collection of counties (connected sub-graph) that each experience elevated cases on average from their normal levels but are otherwise assumed to be independent. The eventual shape of the sub-collection of contiguous counties is highly unpredictable due to uncontrollable factors.% on account of underlying unknown and uncontrollable factors such as wind-direction, winding river etc.

%The problem of signal detection on graphs has been studied by a number of authors from both theoretical and algorithmic perspectives for Gaussian and Poisson models. The test statistic is based on generalized likelihood ratio test (GLRT). The GLRT is in turn based on solution to constrained optimization problems. It turns out that the test statistics are effect if  one can incorporate connectivity, shape, size, and network structure into the constraints.

%Our formulation of signal detection on graphs, is related to generalized likelihood ratio tests (GLRT), and based on optimizing an objective function subject to subgraph connectivity constraints. GLRTs have been shown to be asymptotically minimax optimal for exponential class of distributions on Lattice graphs~\cite{}. Nevertheless, GLRTs as currently conceived are computationally intractable since imposing graph connectivity constraints is in general computationally hard.

In this paper we develop a novel approach for signal detection on graphs that is both statistically effective and computationally efficient. Our approach is based on optimizing an objective function subject to subgraph connectivity constraints, which is related to generalized likelihood ratio tests (GLRT). GLRTs maximize likelihood functions over combinatorially many connected subgraphs, which is computationally intractable. On the other hand statistically, GLRTs have been shown to be asymptotically minimax optimal for exponential class of distributions on Lattice graphs \& Trees~\cite{Castro08} thus motivating our approach.%, it is computationally intractable due to the combinatorial nature of graph connectivity constraints. Furthermore, it is difficult to extend statistical results to more general settings. %guarantees for more general graphs and general class of distributions.
%
%unclear how to extend these results to general graphs, statistical guarantees are harder to  are unknowncomputationally intractable on account of graph connectivity constraints and statistically difficult to.
We deal with combinatorial connectivity constraints by obtaining a novel characterization of connected subgraphs in terms of convex Linear Matrix Inequalities (LMIs). %thus providing a convex characterization of connected subgraphs.
In addition we show how our LMI constraints naturally incorporate other features such as shape and size. %We then propose hypothesis tests that are based on optimization of convex objective functions subject to these convex LMI constraints.
%Computationally efficient tests are then realized by optimizing convex objective functions subject to these LMI constraints.
We show that the resulting tests are essentially minimax optimal for exponential family of distributions on 1-D and 2-D lattices. Conductance of the subgraph, a parameter in our LMI constraint, plays a central role in characterizing detectability. %The proof of our result follows from a novel Euclidean graph embedding argument.  %The difficulty arises because there are few ways of characterizing the family of connected sub-graphs on a general graph.

%Here a cholera outbreak along a winding river can lead to elevated numbers of cases in those counties near the river, which form a connected cluster in the graph representing geographical counties. Moreover, this region can be irregularly shaped as seen in Fig.\ref{fig:irregular_general}. Many other problems arising in network intrusion and video surveillance can be cast in this manner.

%\begin{figure}[htb]
%\vspace{10pt}
%\begin{centering}
%\begin{minipage}[t]{.4\textwidth}
%\includegraphics[width=1\textwidth,angle=180]{fig1__.eps}
%\end{minipage}
%\caption{Graph representation of perhaps a cholera outbreak along a winding river floodplain, where each cell represents a county. The corresponding region forms a connected and irregularly shaped cluster. The figure of lower panel is directly taken from \cite{patil03}. }
%\label{fig:irregular_general}
%\end{centering}
%\end{figure}

%\begin{figure}[h]
%\centering
%\includegraphics[width=.4\textwidth,angle=270]{fig1.eps}
%\caption{\small Graph representation of perhaps a cholera outbreak along a winding river floodplain, where each cell represents a county. The corresponding region forms a connected and irregularly shaped cluster. The figure of lower panel is directly taken from \cite{patil03}.}
%\label{fig:irregular_general}
%\end{figure}

{\bf Related Work:}\, The literature on signal detection on graphs can be organized into parametric and non-parametric methods, which can be further sub-divided into computational and statistical analysis themes.  Parametric methods originated in the scan statistics literature \cite{Glaz02} with more recent work including that of \cite{Duczmal06a,Duczmal06b,Priebe06,patil03,lad12,vlad12} focusing on graphs. Much of this literature develops scanning methods that optimize over rectangles, circles or neighborhood balls \cite{Duczmal06b,Priebe06} across different regions of the graphs. However, the drawbacks of simple shapes and the need for non-parametric methods to improve detection power is well recognized. This has led to new approaches such as simulated annealing~\cite{Duczmal06b,Duczmal06a} but is lacking in statistical analysis. More recent work in ML literature~\cite{Qian14} describes semi-definite programming algorithm for non-parametric shape detection, which is similar to our work here. However, unlike us their method requires a heuristic rounding step, which does not lend itself to statistical analysis. In this context a number of recent papers have focused on statistical analysis \cite{Castro05,Castro08,Addaria10,Castro11} with non-parametric shapes. They derive fundamental bounds for signal detection for the elevated means testing problem in the Gaussian setting on special graphs such as trees and lattices. In this setting under the null hypothesis the observations are assumed to be independent identically distributed (IID) with standard normal random variables. Under the alternative the Gaussian random variables are assumed to be standard normal except on some {\it connected subgraph} where the mean $\mu$ is elevated. They show that GLRT achieves ``near''-minimax optimality in a number of interesting scenarios. While this work is interesting the suggested algorithms are computationally intractable. To the best of our knowledge only ~\cite{Sharpnack12b,Sharpnack13} explores a computationally tractable approach and also provides statistical guarantees. Nevertheless, this line of work does not explicitly deal with connected subgraphs (complex shapes) but deals with more general clusters. These are graph partitions with small out-degree. Although this appears to be a natural relaxation of connected subgraphs/complex-shapes it turns out to be quite loose\footnote{A connected subgraph on a 2-D lattice of size $K$ has out-degree at least $\Omega(\sqrt{K})$ while set of subgraphs with out-degree $\Omega(\sqrt{K})$ includes disjoint union of $\Omega(\sqrt{K}/4)$ nodes. So statistical requirements with out-degree constraints can be no better than those for arbitrary $K$-sets.}
and leads to substantial gap in statistical effectiveness for our problem.
%graph partitions with small weight do not necessarily correspond to connected subgraphs
%\footnote{A connected subgraph on a 2-D lattice of size $K$ has out-degree at least $\Omega(\sqrt{K})$ while set of subgraphs with out-degree $\Omega(\sqrt{K})$ includes disjoint union of $\Omega(\sqrt{K}/4)$ nodes} and the statistical limits obtained with out-degree constraint mirror those obtained for arbitrary $K$-sets in contrast to connected subgraph constraint\cite{castro}.
In contrast we develop a new method for signal detection of complex shapes that is not only statistically effective but also computationally efficient.

\section{Problem Formulation}\label{sec:setup}
%%%%%%%%%%%%%%%%%%%%%%%%%%%%%%%%%%%%%%%%%%%%%%%%%%%%%%%%%%%%%%%%%%%%%%
%We begin by introducing some mathematical notation used in this paper.
%
Let $G=(V,E)$ denote an undirected unweighted graph with $|V|=n$ nodes and $|E|=m$ edges. Associated with each node, $v \in V$, are observations $x_v \in \reals^p$. We assume observations are distributed $\p_0$ under the null hypothesis. The alternative is composite and the observed distribution, $\p_S$, is parameterized by $S \subseteq V$ belonging to a class of subsets $\Lambda \subseteq {\cal S}$, where ${\cal S}$ is the superset. We denote by ${\cal S}_K \subseteq {\cal S}$ the collection of size-$K$ subsets.
$E_S=\{ (u,v)\in E: u\in S, v\in S \}$ denotes the induced edge set on $S$.
%Associated with $S \subset V$ is a unique induced subgraph $H=(S,E_S)$; so we overload notation and use class of subsets and subgraphs interchangeably. We denote by ${\cal C}$ the sub-collection of connected subgraphs/subsets.
We let $x_S$ denote the collection of random variables on the subset $S \subseteq V$. $S^c$ denotes nodes $V-S$. %The alternative is composite consisting of a family of probability distributions, $\p_S$, for $S \in {\cal C}$.  %Given these observations our goal is to decide between null and alternative hypothesis.
Our goal is to design a decision rule, $\pi$, that maps observations $x^n = (x_v)_{v\in V}$ to $\{0,\,1\}$ with zero denoting null hypothesis and one denoting the alternative. %Let $\Omega_{\pi} = \{x \mid \pi(x) = 1\}$
We formulate risk following the lines of \cite{Castro11} and combine Type I and Type II errors:
%, parameterized by some parameter $\lambda \in \Lambda$. This sub-collection is iof connected sets ${\cal C}$. Specifically
\begin{eqnarray}\label{eq:risks}
% \nonumber to remove numbering (before each equation)
  R(\pi) &=& \mathbb{P}_0\left( \pi(x^n)=1 \right) + \max_{S\in \Lambda} \mathbb{P}_S\left( \pi(x^n)=0\right)
\end{eqnarray}
%where $\{R}(\pi)$ considers the worst case within the collection $\Lambda$, and $\bar{R}(T)$ assumes $S$ is selected uniformly at random from $\Lambda$.
%The minimax risk is given by:
%$$
%R_* = \inf_{\pi \in \Pi} R(\pi)
%$$
%where $\Pi$ is the set of all measurable detectors. We first describe a non-asymptotic notion of separability.
\begin{defn}[$\delta$-Separable]
We say that the composite hypothesis problem is $\delta$-separable if there exists a test $\pi$ such that, $R(\pi) \leq \delta$.
\end{defn}
We next describe asymptotic notions of detectability and separability. These notions requires us to consider large-graph limits. To this end we index a sequence of graphs $G_n=(V_n, E_n)$ with $n \rightarrow \infty$ and an associated sequence of tests $\pi_n$.
\begin{defn}[Separability]
We say that the composite hypothesis problem is asymptotically $\delta$-separable if there is some sequence of tests, $\pi_n$, such that $R(\pi_n) \leq \delta$ for sufficiently large $n$. It is said to be asymptotically separable if $R(\pi_n) \longrightarrow 0$. The composite hypothesis problem is said to be asymptotically inseparable if no such test exists.
\end{defn}
Sometimes, additional granular measures of performance are often useful to determine asymptotic behavior of Type I and Type II error. This motivates the following definition:
%\begin{defn}[$(\epsilon-\delta)$-Performance]
%A test $\pi$ is said to achieve $(\epsilon-\delta)$ performance if
%$$\p_0(\pi(x) = 1) \leq \delta,\,\,\, \sup_{S \in {\cal C}} \p_S(\pi(x) = 0) \leq \epsilon$$
%\end{defn}
%Note that $(\epsilon,\delta)$ performance implies that there is a test which achieves a missed detection probability of $\epsilon$ and a false alarm probability of $\delta$ in the worst-case. For $\epsilon = \delta$, note that the risk $R(\pi) = 2 \delta$. We next describe asymptotic notions of detectability and indistinguishability. These notions requires us to consider large-graph limits. To this end we index a sequence of graphs $G_n=(V_n, E_n)$ with $n \rightarrow \infty$ and an associated sequence of tests $\pi_n$ applied to observations obtained on $G_n$.
\begin{defn}[$\delta$-Detectability]
We say that the composite hypothesis testing problem is $\delta$-detectable if there is a sequence of tests, $\pi_n$, such that,
$$\sup_{S \in \Lambda } \p_S(\pi_n(x^n) = 0) \stackrel{n\rightarrow \infty} {\longrightarrow} 0,\,\,\,\limsup_n \p_0(\pi_n(x^n) = 1) \leq \delta$$
\end{defn}
%It is easy to construct examples where we have
In general $\delta$-detectability does not imply separability. For instance, consider $x \stackrel{H_0}{\sim} {\cal N}(0,\sigma^2)$ and $x \stackrel{H_1}{\sim} {\cal N}(\mu,{\frac{\sigma^2}{n}})$. It is $\delta$-detectable for ${\frac{\mu}{\sigma}} \geq 2\sqrt{\log{ \frac{1}{\delta}}}$ but not separable.
\paragraph{Generalized Likelihood Ratio Test (GLRT)}
%%%%%%%%%%%%%%%%%%%%%%%%%%%%%%%%%%%%%%%%%%%%%%%%%%%%%%%%%%%%%%%%%%%%%%
is often used as a statistical test for composite hypothesis testing. Suppose $\phi_0(x^n)$ and $\phi_S(x^n)$ are probability density functions associated with $\p_0$ and $\p_S$ respectively. The GLRT test thresholds the ``best-case'' likelihood ratio, namely,
%\begin{eqnarray} \nonumber
\begin{equation}
\mbox{GLRT:}\,\,\,\,\,\,\,\ell_{\max} (x^n) = \max_{S \in \Lambda } \ell_S(x^n) \decide{H_1}{H_0} \eta,\,\,\,\, \ell_S(x)=\log { \frac{\phi_S(x^n)}{\phi_0(x^n)}} \label{eq:glrt}
\end{equation}
%\end{eqnarray}
%We point out some salient aspects of GLRT objective. \\
%\begin{itemize}
%\item
\underline{\it Local Behavior}: Without additional structure, the likelihood ratio, $\ell_S(x)$ for a fixed $S \in \Lambda$ is
a function of observations across all nodes. Many applications exhibit {\it local behavior}, namely, the observations under the two hypothesis behave distinctly only on some small subset of nodes (as in disease outbreaks). This justifies introducing local statistical models in the following section.
\underline{\it Combinatorial}: The class $\Lambda$ is combinatorial such as collections of connected subgraphs and GLRT is not generally {\it computationally} tractable. On the other hand GLRT is minimax optimal for special classes of distributions and graphs and motivates development of tractable algorithms.
%{\bf Computational:} GLRT is difficult to compute because of combinatorially many subsets in ${\cal C}$. \\
%\item
%
%\item
%
%\noindent
%{\bf Statistical Issues:} The GLRT objective does not account for different sizes of the sets $S$ in the collection ${\cal C}$ and has to be modified. GLRT is also generally difficult to analyze because of the highly non-convex nature of the objectives and constraints. On the other hand it is still important because one can show optimality for some important special cases. Consequently, these cases inform us about inherent structural issues that we should pay attention to in characterizing detectability.
%Therefore, GLRT is an important context for describing the results of this paper.
%\end{itemize}
%We will describe how to overcome these computational and statistical issues next.

%%%%%%%%%%%%%%%%%%%%%%%%%%%%%%%%%%%%%%%%%%%%%%%%%%%%%%%%%%%%%%%%%%%%%%
\subsection{Statistical Models \& Subgraph Classes}\label{sec:model}
%%%%%%%%%%%%%%%%%%%%%%%%%%%%%%%%%%%%%%%%%%%%%%%%%%%%%%%%%%%%%%%%%%%%%%
The foregoing discussion motivates introducing local models, which we present next. Then informed by existing results on separability we categorize subgraph classes by shape, size and connectivity.% in characterizing detectability. This justifies introducing structured subgraph collections in the sequel. % \\
%\noindent
%{\bf Non-Locality:}
%In general likelihood ratio $\ell_S(x)$ for a fixed $S \in {\cal C}$ in Eq.~\ref{eq:glrt} is %not local even though it is indexed by the subset $S$. Indeed, it could be
%a function of observations across all nodes. Many applications exhibit local behavior, namely, the observations under two hypothesis behave distinctly only on some small subset of nodes (disease outbreak) motivating local models.
%%%%%%%%%%%%%%%%%%%%%%%%%%%%%%%%%%%%%%%%%%%%%%%%%%%%%%%%%%%%%%%%%%%%%%
%\noindent
%{\bf Local Models:}\\
%\subsubsection

%\noindent
\subsubsection{Local Statistical Models}
\noindent
\underline{\it Signal in Noise Models} arise in sensor network (SNET) intrusion \cite{lad12,ermis10} and disease outbreak detection~\cite{patil03}. They are modeled with Gaussian (SNET) and Poisson (disease outbreak) distributions.
%This model arises in several applications including sensor network intrusions \cite{Sricharan10} and disease outbreak detection~\cite{patil03}.
%The nodes here refer to sensor or county locations and edges are obtained by connecting neighboring locations. A specific sensor intrusion model is described below:
\begin{equation}\label{eq:H0H1_decay}
\mathbf{H_0:}\,\, \,\,x_v=w_v; \,\, \,\,\,\,\,\,  %\,\,(\mbox{Under}\, H_0)\\
\mathbf{H_1:}\,\, \,\,x_v=% {\mu \over d_{v,u}^{\alpha}}
\mu \alpha_{uv}\mathbf{1}_S(v) + w_v,\,\, \mbox{for some},\,\,\,S \in \Lambda,\,\,u\in S
\end{equation}
For Gaussian case we model $\mu$ as a constant, $w_v$ as IID standard normal variables, $\alpha_{uv}$ as the propagation loss from source node $u \in S$ to the node $v$. In disease outbreak detection $\mu =1$, $\alpha_{uv}\sim Pois(\lambda N_v)$ and $w_v \sim Pois(N_v)$ are independent Poisson random variables, and $N_v$ is the population of county $v$.
%\end{eqnarray}
%Here each subset $S$ is associated with a source node $u$ and $d_{v,u}$ is distance between the source node $u \in S$ to $v$, $\alpha$ is the propagation loss factor. The mean $\mu>0$ and $w_v$ is a normal Gaussian random variable independent across nodes. Note that for this model we can compute $\ell_S(x)$ and verify that it is indeed local, which is a consequence of the fact that under the alternative signal is absent outside the set $S$. %This setting naturally arises in a number of other examples such as disease outbreaks, video surveillance and community detection.
%\begin{figure}[h]
%\centering
%\includegraphics[width=.42\textwidth]{}
%\caption{\small Graph representation of the county map of northeast U.S. The nodes in the irregularly shaped region is a possible outbreak.}
%\label{fig:irregular_general}
%\vspace{-5pt}
%\end{figure}
%\begin{figure}[h]
%\centering
%\includegraphics[width=.42\textwidth]{snet.eps}
%\caption{\small Sensor Network Intrusion. Node in the center of the intrusion has large signal and the signal decays with distance. We typically have circularly shaped connected clusters. }
%\label{fig:snetl}
%\vspace{-5pt}
%\end{figure}
%In other applications such as disease outbreaks where the specific statistical models are no longer Gaussian (such as poisson models, correlated models etc.) the independent and locality structure described by Defn.~\ref{def:local} holds. In particular
In these cases $\ell_S(x)$ takes the following local form where $Z_v$ is a normalizing constant.
%$$
%\ell_S(x) = \sum_{v \in S} (x_v - {\mu \over d_{v,u}^{\alpha}}){\mu \over d_{v,u}^{\alpha}} \stackrel{\alpha=0}{\longrightarrow} \ell_S(x) = \sum_{v \in S} \mu x_v
%$$
\begin{equation} \label{indic_expr}
\ell_S(x) = \ell_S(x_S) \propto \sum_{v \in V} (\Psi_v(x_v) - \log(Z_v))\mathbf{1}_S(v)
%{\mu \over d_{v,u}^{\alpha}} x_v \mathbf{1}_S(v)
\end{equation}
We characterize $\mu_0,\lambda_0$ as the minimum value that ensures separability for the different models:
\begin{equation} \label{musep}
\mu_0 = \inf \{ \mu \in \reals^+ \mid \exists \pi_n, \lim_{n \rightarrow \infty} R(\pi_n) =0\},\,\,\lambda_0 = \inf \{ \lambda \in \reals^+ \mid \exists \pi_n, \lim_{n \rightarrow \infty} R(\pi_n) =0\}
\end{equation}
%The locality property arises from the fact that the signal is absent from the observation outside the set $S$. Note that Mask property holds because the intrusion is undetectable from sensor measurements that are not in the immediate neighborhood of the sensing radius. Markov property holds because the sensor measurements are independent.
\noindent
\underline{\it Correlated Models} arise in textured object detection~\cite{Cross1983} and protein subnetwork detection~\cite{Bailly2011}. For instance consider a common random signal $z$ on $S$, which results in uniform correlation $\rho >0$ on $S$.
\begin{equation}\label{eq:H0H1_correl}
\mathbf{H_0:}\,\, \,\,x_v=w_v; \,\, \,\,\,\,\,\,  %\,\,(\mbox{Under}\, H_0)\\
\mathbf{H_1:}\,\, \,\,x_v=% {\mu \over d_{v,u}^{\alpha}}
(\sqrt{\rho(1-\rho)^{-1}}) z\mathbf{1}_S(v) + w_v,\,\, \mbox{for some},\,\,\,S \in \Lambda,
\end{equation}
$z,\,w_v$ are standard IID normal random variables. Again we obtain $\ell_S(x)=\ell_S(x_S)$. These examples motivate the following general setup for local behavior:% leading to $\ell_S(x) \propto \sum_{(u,v) \in E} \Psi(x_u,x_v) \mathbf{1}_S(u)\mathbf{1}_S(v)$.%; $\rho>0$ is the correlation coefficient for random variables in $S$.
\begin{defn}\label{def:local}
The distributions $\p_0$ and $\p_S$ are said to exhibit {\em local structure} if they satisfy:\\
%\begin{itemize}
\noindent
%\item[(1)]
{\bf (1) Markovianity}: The null distribution $\p_0$ satisfies the properties of a Markov Random Field (MRF). Under the distribution $\p_S$ the observations $x_S$ are conditionally independent of $x_{S_1^c}$ when conditioned on annulus $S_1 \cap S^c$, where $S_1 = \{v \in V \mid d(v,w) \leq 1,\, w \in S\}$, is the 1-neighborhood of $S$. %Note that this condition is satisfied for Markov Random Fields on graph $G=(V,E)$.  \\
%a Markov random field under $G=(V,E)$. \\%Specifically we assume that there is an $\epsilon$-neighborhood such that $x_v,\,v \in S$ is conditionally independent of $x_u,\, u \not \in S^{\epsilon}$ when conditioned on the annulus $S \cap S^{\epsilon}$.
\noindent
{\bf (2) Mask}: Marginal distributions of observations under $\p_0$ and $\p_S$ on nodes in $S^c$ are identical:
%\begin{eqnarray}
$\p_0(x_{S^c} \in A)=\p_S(x_{S^c}\in A),\,\, \forall \, A \in {\cal A}$, the $\sigma$-algebra of measurable sets.
%\end{eqnarray}
%\end{itemize}
\end{defn}
%This local structure implies that the likelihood ratio, $\ell_S(x^n)$ depends only on observations on $S$.
\begin{lem}[\cite{lad12}]\label{lem:LR}
Under conditions (1) and (2) it follows that $\ell_S(x)=\ell_S(x_{S_1})$. %= \log\frac{\phi_S (x_{S_1})} {\phi_0 (x_{S_1})}$
%Let $S_1 = \{v \in V \mid d(v,w) \leq 1,\, w \in S\}$ be a 1-neighborhood of $S$. If the observations follow a local structure then,
%\begin{equation} \label{eq:localmodel}
%\ell_S(x)=\ell_S(x_{S_1}) = \log\frac{\phi_S (x_{S_1})}
%                      {\phi_0 (x_{S_1})}
%\end{equation}
\end{lem}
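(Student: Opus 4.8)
The plan is to show that in the log-likelihood ratio $\ell_S(x)=\log\bigl(\phi_S(x^n)/\phi_0(x^n)\bigr)$ every factor depending on observations outside $S_1$ cancels between numerator and denominator, leaving a quantity that is a function of $x_{S_1}$ alone. To organize this I would partition the vertex set into three blocks: the region $S$, the annulus $\partial S := S_1\cap S^c$, and the far exterior $S_1^c=V\setminus S_1$ of nodes at distance at least two from $S$. The first thing I would establish is that $\partial S$ is a vertex separator of $S$ from $S_1^c$: any path leaving $S$ must first step to a neighbor of some node of $S$, which by definition lies in $S_1$, so the first such node outside $S$ lies in $S_1\cap S^c=\partial S$. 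Hence no edge joins $S$ directly to $S_1^c$.

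Next I would factor both joint densities along this separator. Under $\p_0$ the global Markov property of the MRF in condition (1), applied to the separator $\partial S$, yields the conditional independence $x_S\perp x_{S_1^c}\mid x_{\partial S}$; condition (1) asserts exactly the same conditional independence under $\p_S$. Consequently both densities factor as
\[
\phi_\star(x)=\phi_\star(x_S\mid x_{\partial S})\,\phi_\star(x_{S_1^c}\mid x_{\partial S})\,\phi_\star(x_{\partial S}),\qquad \star\in\{0,S\}.
\]
Forming the ratio, $\ell_S(x)$ splits into three additive terms coming from the $S$-conditional, the $S_1^c$-conditional, and the annulus marginal.

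The cancellation of the last two terms is where condition (2) enters. Since $S^c=\partial S\cup S_1^c$ (a disjoint union), the Mask condition states that the joint law of $x_{S^c}=(x_{\partial S},x_{S_1^c})$ is identical under $\p_0$ and $\p_S$, i.e.\ $\phi_S(x_{\partial S},x_{S_1^c})=\phi_0(x_{\partial S},x_{S_1^c})$. Marginalizing $x_{S_1^c}$ gives $\phi_S(x_{\partial S})=\phi_0(x_{\partial S})$, and dividing then gives $\phi_S(x_{S_1^c}\mid x_{\partial S})=\phi_0(x_{S_1^c}\mid x_{\partial S})$. Thus the annulus-marginal term and the $S_1^c$-conditional term each contribute zero to $\ell_S$, leaving $\ell_S(x)=\log\bigl(\phi_S(x_S\mid x_{\partial S})/\phi_0(x_S\mid x_{\partial S})\bigr)$, which is a function of $(x_S,x_{\partial S})=x_{S_1}$ only. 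This is the claim.

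The main obstacle I anticipate is technical rather than conceptual: justifying the density factorization rigorously. The global Markov factorization across $\partial S$ requires either strict positivity of $\phi_0$ (so that the Hammersley--Clifford equivalence of local and global Markov properties applies) or a direct appeal to the separator-based conditional independence, and one must verify the conditionals $\phi_\star(\,\cdot\mid x_{\partial S})$ are well defined on the support. Care is also needed to read condition (2) as an equality of the \emph{full joint} marginal over $S^c$ --- which it is, being stated for every measurable $A$ --- rather than a node-wise equality, since the cancellation of the $S_1^c$-conditional factor relies on the joint statement.
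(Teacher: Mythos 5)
Your argument is correct: the annulus $S_1\cap S^c$ separates $S$ from $S_1^c$ in $G$, the two conditional-independence hypotheses let both densities factor across that separator, and the Mask condition (read, as you rightly insist, as an equality of the \emph{joint} law of $x_{S^c}$) kills the annulus-marginal and far-exterior factors in the ratio, leaving a function of $x_{S_1}$ alone. Note that the paper gives no proof of this lemma --- it is imported wholesale from the cited reference \cite{lad12} --- so there is nothing internal to compare against; your separator-based factorization is the standard argument behind the cited result, and the caveats you flag (strict positivity or a direct global-Markov hypothesis for $\p_0$, and well-definedness of the conditionals on the support) are exactly the technical points a careful write-up would need to address.
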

\subsubsection{Structured Subgraphs}
%%%%%%%%%%%%%%%%%%%%%%%%%%%%%%%%%%%%%%%%%%%%%%%%%%%%%%%%%%%%%%%%%%%%%%
%In this section we further refine our local structure based on practical and statistical considerations. These considerations lead us to
Existing works \cite{Castro05,Castro08,Castro11} point to the important role of size, shape and connectivity in determining detectability. For concreteness we consider the signal in noise model for Gaussian distribution and % tabulate existing results~\cite{Castro05,Castro08,Addaria10,Castro11} for $\mu_0$ (see Eq.~\ref{musep}) for 2-D Lattice, and Complete graphs.
tabulate upper bounds from existing results for $\mu_0$ (Eq.~\ref{musep}). The lower bounds are messier and differ by logarithmic factors but this suffices for our discussion here.
The table reveals several important points. Larger sets are easier to detect -- $\mu_0$ decreases with size; connected $K$-sets are easier to detect relative to arbitrary $K$-sets; for 2-D lattices ``thick'' connected shapes are easier to detect than ``thin'' sets (paths); finally detectability on complete graphs is equivalent to arbitrary $K$-sets, i.e., shape does not matter. Intuitively, these tradeoffs make sense. For a constant $\mu$, ``signal-to-noise'' ratio increases with size. Combinatorially, there are fewer $K$-connected sets than arbitrary $K$-sets; fewer connected balls than connected paths; and fewer connected sets in 2-D lattices than dense graphs.%(such as complete graphs).
%following the setup of \cite{Castro11}. \footnote{ Note these results are for the case when the alternative consists of an anchor node. Details can be found in \cite{Castro11}. Agnostic tests will incur another logarithmic factor of $n$.}
%
%\begin{center}
\begin{table}[htb]
%\begin{center}
\begin{tabular}{|c|c|c|c|}
\hline
 & Arbitrary $K$-Set & $K$-Connected Ball & $K$-Connected Path \\ \hline
Line Graph  & $\omega\left( \sqrt{2\log(n)} \right)$    & $\omega\left( \sqrt{\frac{2}{K}\log(n)} \right)$    & $\omega\left( \sqrt{\frac{2}{K}\log(n)} \right)$ \\ % \hline
2-D Lattice   & $\omega\left( \sqrt{2\log(n)} \right)$    & $\omega\left( \sqrt{\frac{2}{K}\log(n)} \right)$    & $\omega\left( 1 \right)$ \\
%\footnote{This result is for the case when the alternative consists of an anchor node. Details can be found in \cite{Castro11}.} \\
% \hline
Complete    & $\omega\left( \sqrt{2\log(n)} \right)$    & $\omega\left( \sqrt{2\log(n)} \right)$                & $\omega\left( \sqrt{2\log(n)} \right)$ \\ \hline
\end{tabular}
%\end{center}
%\caption{\small Even for these cases a gap between upper and lower bounds for $\mu_0$ has not yet been bridged. However, they are within logarithmic factors of the size of the graph. Some of the conservatism stems from a union bounding argument stemming from scanning over different nodes ($n$ nodes this translates to a $\log(n)$ factor). For ease of interpretation we present lower bounds that reflect detectability of sets anchored at some node $a\in V$ of the graph.}
\end{table}
%\end{center}
%We discuss them in some detail next.
%
These results point to the need for characterizing the signal detection problem in terms of connectivity, size, shape and the properties of the ambient graph. We also observe that the table is somewhat incomplete. While balls can be viewed as thick shapes and paths as thin shapes, there are a plethora of intermediate shapes. A similar issue arises for sparse vs. dense graphs.  We introduce general definitions to categorize shape and graph structures below. %To get a handle on the problem we characterize shape and graphs in the sequel.
%%%%%%%%%%%%%%%%%%%%%%%%%%%%%%%%%%%%%%%%%%%%%%%%%%%%%%%%
\begin{defn}[Internal Conductance] (a.k.a. Cut Ratio)
Let $H=(S, F_S)$ denote a subgraph of $G=(V,E)$ where $S\subseteq V$, $F_S\subseteq E_S$, written as $H\subseteq G$. Define the internal conductance of $H$ as:
\begin{equation}\label{def:inner_conductance}
  \phi(H) = \min_{A \subset S}  \frac{ |\delta_S(A)| }{ \min\{|A|,|S-A|\} } ;\,\,\,\,
\delta_S(A) = \{(u,v) \in F_S \mid u \in A,\,v \in S-A\}
\end{equation}
\end{defn}
%%%%%%%%%%%%%%%%%%%%%%%%%%%%%%%%%%%%%%%%%%%%%%%%%%%%%%%%
Apparently $\phi(H)=0$ if $H$ is not connected. The internal conductance of a collection of subgraphs, $\Sigma$, is defined as the smallest internal conductance:
\begin{equation*}
  \phi(\Sigma) = \min_{ H \in \Sigma }\phi(H)
\end{equation*}
For future reference we denote the collection of connected subgraphs by ${\cal C}$ and by ${\cal C}_{a,\Phi}$ the sub-collections containing node $a \in V$ with minimal internal conductance $\Phi$:
\begin{equation} \label{connected_set}
{\cal C}=\{ H \subseteq G: \phi(H)>0 \},\,\,\,{\cal C}_{a,\Phi}=\{H=(S,F_S) \subseteq G: a\in S, \phi(H)\geq \Phi\}
\end{equation}

In 2-D lattices, for example, $\phi(B_K) \approx \Omega({1/\sqrt{K}})$ for connected K-balls $B_K$ or other thick shapes of size $K$. $\phi( {\cal C}\cap {\cal S}_K ) \approx \Omega({1/K})$ due to ``snake''-like thin shapes. Thus internal conductance explicitly accounts for shape of the sets.
%Next we introduce a general definition for graph structures.

%On Lattice graphs the internal conductance for thick shapes (as seen in intrusion problems~Fig.~\ref{}) scales as $\Phi({\cal C}) \approx \Omega({1 \over \sqrt{K}})$ and as $\Phi({\cal C}) \approx \Omega(1)$ for thin shapes (as seen in disease outbreak~Fig.~\ref{}). As we will later see this somewhat explains the difficulty in detecting thin shapes.
%
%
%\noindent
%{\bf Graph Structure:} We characterize graphs as follows.
%Note that graph structure also plays an important role in determining detectability. Complete graphs, $K_n=(V,E)$, have the property that every subset $S \subset V$ is connected and so detectability here is identical to that for arbitrary $K$-sets. This point suggests that detectability for expander graphs would also be equivalent to testing arbitrary $K$-sets. For this reason we consider graphs with polynomial growth property in this paper.

%\begin{defn}[Polynomial Growth]
%$G=(V,E)$ is said to have polynomial growth property if for each $v \in V$ and any radius $R$ (with respect to the graph-metric) there is a constant $c$ such that,
%$$
%|B(v,2R)| \leq c |B(v,R)|
%$$
%where $B(v,R)=\{ u\in V: \, d(v,u)\leq R \}$.
%\end{defn}
%Note that 2-D Lattice graphs have $c \approx 4$. Note that 2-D geometric random graphs and $\epsilon$-nearest neighbor graph (for typical distributions of node locations) also satisfy this property with high-probability. These properties follow directly from Law of Large Numbers.

%%%%%%%%%%%%%%%%%%%%%%%%%%%%%%%%%%%%%%%%%%%%%%%%%%%%%%%%%%%%%%%%%%%%%%
\section{Convex Programming}\label{sec:convex}
%%%%%%%%%%%%%%%%%%%%%%%%%%%%%%%%%%%%%%%%%%%%%%%%%%%%%%%%%%%%%%%%%%%%%%
We develop a convex optimization framework for generating test statistics for local statistical models described in Section~\ref{sec:model}. Our approach relaxes the combinatorial constraints and the functional objectives of the GLRT problem of Eq.(\ref{eq:glrt}). In the following section we develop a new characterization based on linear matrix inequalities that accounts for size, shape and connectivity of subgraphs.
For future reference we denote $A \circ B \stackrel{\Delta}{=} [A_{ij} B_{ij}]_{i, j}$.

Our first step is to embed subgraphs, $H$ of $G$, into matrices. A binary symmetric incidence matrix, $A$, is associated with an undirected graph $G=(V,E)$, and encodes edge relationships. Formally, the edge set $E$ is the support of $A$, namely, $E = \mbox{Supp(A)}$. For subgraph correspondences we consider symmetric matrices, $M$, with components taking values in the unit interval, $[0,1]$.
$$
{\cal M} = \{M \in [0,1]^{n\times n} \mid M_{uv} \leq M_{uu},\,\, M\,\,\mbox{Symmetric}\}
$$
\begin{defn}\label{sub_embed}
$M\in {\cal M}$ is said to correspond to a subgraph $H=(S,F_S)$, written as $H \rightleftharpoons M$, if
$$
S = \mbox{Supp}\{\mbox{Diag}(M)\},\,\,F_S = \mbox{Supp}(A \circ M)
$$
\end{defn}
The role of $M\in {\cal M}$ is to ensure that if $u \not \in S$ we want the corresponding edges $M_{uv}=0$.
Note that $A \circ M$ in Defn.~\ref{sub_embed} removes the spurious edges $M_{uv} \not = 0$ for $(u,v) \notin E_S$.
%Note that if $M\in{\cal M}$ satisfying Defn.~\ref{sub_embed} encodes node as well as edge information. %has the ability to choose nodes $S\subseteq V$, but also edges $F_S\subseteq E_S$.
%Note that this does not preclude $M_{uv} \not = 0$ for $(u,v) \not \in E$. However, these ``spurious edges'' can be removed using $A \circ M$ operation. Thus $A \circ M$ plays the role of the incidence matrix of the subgraph $H$.

Our second step is to characterize connected subgraphs as convex subsets of ${\cal M}$. Now a subgraph $H=(S,F_S)$ is a connected subgraph if for every $u, v \in S$, there is a path consisting only of edges in $F_S$ going from $u$ to $v$. This implies that for two subgraphs $H_1,\, H_2$ and corresponding matrices $M_1$ and $M_2$, their convex combination $M_{\eta} = \eta M_1 + (1-\eta)M_2,\,\,\eta \in (0,1)$ naturally corresponds to $H=H_1 \cup H_2$ in the sense of Defn~\ref{sub_embed}. On the other hand if $H_1 \cap H_2 = \emptyset$ then $H$ is disconnected and so $M_{\eta}$ is as well. This motivates our convex characterization with a common ``anchor'' node. To this end we consider the following collection of matrices:
$$
{\cal M}_a^*=\{M \in {\cal M} \mid M_{aa}=1,\,M_{vv} \leq M_{av}\}
$$
Note that ${\cal M}_a^*$ includes star graphs induced on subsets $S = \mbox{Supp}(\mbox{Diag}(M))$ with anchor node $a$. %is a member of this collection.
We now make use of the well known properties~\cite{Chung96} of the Laplacian of a graph to characterize connectivity. The unnormalized Laplacian matrix of an undirected graph $G$ with incidence matrix $A$ is described by $L(A) = \mbox{diag} (A \mathbf{1}_n) -A$ where $\mathbf{1}_n$ is the all-one vector.
\begin{lem}\label{lem_algebraic_connectivity}
%Let $A$ be the incidence matri an undirected graph with the unweighted adjacency matrix $A$ and $\mathbf{1}_n$ be an all-one vector.
Graph $G$ is connected if and only if the number of zero eigenvalues of $L(A)$ is one.
\end{lem}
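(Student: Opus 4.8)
The plan is to prove both directions at once through the spectral properties of the Laplacian quadratic form. First I would establish that $L(A)$ is positive semidefinite by deriving the identity
$$
x^\top L(A)\, x = \sum_{(u,v)\in E} (x_u - x_v)^2 \geq 0
$$
for every $x \in \reals^n$, which follows directly from $L(A) = \mbox{diag}(A\mathbf{1}_n) - A$ by expanding the diagonal and off-diagonal terms and regrouping the sum edge by edge. This immediately shows that all eigenvalues of $L(A)$ are nonnegative, so the zero eigenvalues are precisely the kernel, and that $\mathbf{1}_n$ always lies in the kernel since $L(A)\mathbf{1}_n = 0$. Hence there is always at least one zero eigenvalue, and the content of the lemma is entirely in pinning down its multiplicity.

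The key step is to characterize this kernel. From the quadratic-form identity, $x$ lies in the kernel of $L(A)$ if and only if $x^\top L(A)\, x = 0$, which in turn holds if and only if $x_u = x_v$ for every edge $(u,v)\in E$. In other words, the kernel consists exactly of those vectors that are constant along every edge, and therefore constant on each connected component of $G$: equality propagates along any path, and within a component every pair of nodes is joined by a path of edges.

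Next I would count the dimension of the kernel. If $G$ has connected components $C_1,\dots,C_c$, then the indicator vectors $\mathbf{1}_{C_1},\dots,\mathbf{1}_{C_c}$ are linearly independent, since their supports are disjoint, and each lies in the kernel by the characterization above; conversely, any kernel vector is constant on each component and is thus a linear combination of these indicators. Hence the multiplicity of the eigenvalue $0$ equals $c$, the number of connected components. The lemma then follows immediately, since $G$ is connected precisely when $c = 1$, i.e. precisely when the zero eigenvalue has multiplicity one.

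The argument is elementary and I anticipate no genuine obstacle; the only point deserving care is the reverse inclusion in the kernel characterization, where the connectivity of each component is invoked to upgrade constancy along individual edges to constancy across an entire component via path-connectedness. This is exactly where the combinatorial notion of connectivity translates into the algebraic statement, which is the essence of the lemma.
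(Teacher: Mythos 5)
Your proof is correct and is the standard spectral argument: the quadratic-form identity $x^\top L(A)x=\sum_{(u,v)\in E}(x_u-x_v)^2$ shows $L(A)\succeq 0$ and that the kernel consists of vectors constant on each connected component, so the multiplicity of the zero eigenvalue equals the number of components. The paper does not prove this lemma itself but simply cites it as a well-known fact (Chung's spectral graph theory), and your argument is precisely the canonical proof of that fact, so there is nothing to reconcile.
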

Unfortunately, we cannot directly use this fact on the subgraph $A \circ M$ because there are many zero eigenvalues because the complement of $\mbox{Supp}(\mbox{Diag}(M))$ is by definition zero. We employ linear matrix inequalities (LMI) to deal with this issue. The condition~\cite{Boyd04}
$F(x) = F_0 + F_1 x_1 + \cdots + F_p x_p \succeq 0$
with symmetric matrices $F_j$ is called a linear matrix inequality in $x_j \in \reals$ with respect to the positive semi-definite cone represented by $\succeq$. Note that the Laplacian of the subgraph $L(A\circ M)$ is a linear matrix function of $M$. We denote a collection of subgraphs as follows: %Denote by ${\cal C}_{a,\gamma}$ as the set of connected subgraphs anchored at $a$, i.e.,
\begin{equation}\label{eq:C_LMI_def}
  {\cal C}_{\lmi}(a,\gamma) \stackrel{\Delta}{=} \{ H \rightleftharpoons M \mid M \in {\cal M}_a^*, \, L(A\circ M) - \gamma L(M) \succeq 0\}
\end{equation}

%%%%%%%%%%%%%%%%%%%%%%%%%%%%%%%%%%%%%%%%%%%%%%%%%%%%%%%%%%%%%%%%%%%%%%%%%%%%%%%%%%%%%%%%%%%%%%%%%%%
%%%%%%%%%%%%%%%%%%%%%%%%%%%%%%%%%%%%%%%%%%%%%%%%%%%%%%%%%%%%%%%%%%%%%%%%%%%%%%%%%%%%%%%%%%%%%%%%%%%
\begin{thm}\label{thm:connectivity}
The class ${\cal C}_{\lmi}(a,\gamma)$ is connected for $\gamma>0$. Furthermore, every connected subgraph can be characterized in this way for some $a \in V$ and $\gamma>0$, namely,
${\cal C}= \bigcup_{a \in V, \gamma>0} {\cal C}_{\lmi}(a,\gamma)$. % where the equality is interpreted in the sense of Defn.\ref{sub_embed}. %\{ M \in {\cal M}_a^* \mid L(A\circ M) - \gamma L(M) \succeq 0\}$$
\end{thm}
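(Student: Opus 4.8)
The plan is to prove the two inclusions separately, using the Laplacian quadratic form as the main tool. For any symmetric $M$ one has $x^\top L(M)x = \sum_{u<v} M_{uv}(x_u-x_v)^2$ and $x^\top L(A\circ M)x = \sum_{(u,v)\in E} M_{uv}(x_u-x_v)^2$, so both Laplacians are positive semidefinite and the LMI $L(A\circ M)-\gamma L(M)\succeq 0$ simply asserts that the ``on-graph'' Dirichlet energy dominates a $\gamma$-fraction of the ``full'' energy of $M$. A useful preliminary observation is that $M\in {\cal M}$ forces $M_{uv}\le \min\{M_{uu},M_{vv}\}$, so $M_{uv}=0$ whenever $v\notin S=\mbox{Supp}(\mbox{Diag}(M))$; hence $A\circ M$ carries no edge leaving $S$ and $F_S\subseteq E_S$.

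\emph{Soundness} (${\cal C}_{\lmi}(a,\gamma)\subseteq{\cal C}$). First I would show, by contradiction, that every $H\rightleftharpoons M\in{\cal C}_{\lmi}(a,\gamma)$ with $\gamma>0$ is connected. Note $a\in S$ since $M_{aa}=1$. If $(S,F_S)$ were disconnected, choose a component $S_2\subseteq S$ with $a\notin S_2$ and test the LMI against $x=\mathbf{1}_{S_2}$. Since no $F_S$-edge crosses a component boundary and differences vanish inside each component, $x^\top L(A\circ M)x=0$. On the other hand, the constraints of ${\cal M}_a^*$ together with those of ${\cal M}$ pin down $M_{av}=M_{vv}$ for every $v$, so for any $w\in S_2$ we have $M_{aw}=M_{ww}>0$, contributing $M_{ww}(x_a-x_w)^2=M_{ww}>0$ to $x^\top L(M)x$. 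Then $x^\top(L(A\circ M)-\gamma L(M))x\le -\gamma M_{ww}<0$, contradicting positive semidefiniteness. Hence $H$ is connected.

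\emph{Completeness} (${\cal C}\subseteq\bigcup_{a,\gamma}{\cal C}_{\lmi}(a,\gamma)$). Given a connected $H=(S,F_S)$ I would exhibit an explicit $M$ and a small $\gamma$. Fix any anchor $a\in S$ and set $M_{vv}=1$ and $M_{av}=1$ for all $v\in S$, put unit weights on the remaining (induced) edges inside $S$, and $0$ elsewhere; this lies in ${\cal M}_a^*$, and $H\rightleftharpoons M$ holds because the spurious anchor entries $M_{av}$ with $(a,v)\notin E$ are annihilated by the Hadamard mask $A\circ$ and so never enter $F_S=\mbox{Supp}(A\circ M)$. Now I would invoke Lemma~\ref{lem_algebraic_connectivity}: since $H$ is connected, the graph of $A\circ M$ has exactly the component $S$ together with the singletons of $S^c$, so $\ker L(A\circ M)=\mbox{span}\{\mathbf{1}_S\}\oplus\mbox{span}\{e_i:i\in S^c\}$; the graph of $M$ has the \emph{same} components, because the anchor star already joins all of $S$, so $\ker L(M)$ equals the same subspace $N$. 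Both Laplacians vanish on $N$ and are strictly positive on $N^\perp$, so writing $x=x_N+x_\perp$ gives $x^\top(L(A\circ M)-\gamma L(M))x = x_\perp^\top(L(A\circ M)-\gamma L(M))x_\perp$, which is nonnegative as soon as $\gamma\le \lambda_{\min}^+(L(A\circ M))/\lambda_{\max}(L(M))$, a strictly positive ratio. Any such $\gamma$ places $H$ in ${\cal C}_{\lmi}(a,\gamma)$.

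The delicate point is completeness, and specifically the interplay between the anchor constraint and edge realization. Because ${\cal M}_a^*$ and ${\cal M}$ together force $M_{av}=M_{vv}$, the anchor is compelled to connect to every node of $S$, and one must verify that this neither corrupts the target edge set (it does not, thanks to the mask) nor enlarges $\ker L(A\circ M)$ (it does not, since adding the anchor star keeps $S$ a single component, so the kernel of $L(M)$ exactly matches that of the connected $A\circ M$). Establishing this coincidence of the two null spaces is the crux, as it is what converts the single-zero-eigenvalue characterization of connectivity into the semidefinite domination the LMI requires; once the kernels agree, the choice of $\gamma$ is routine. I would also flag that for a \emph{non-induced} $F_S$ the anchor must be chosen (or the diagonal values adjusted) so that the forced $G$-edges incident to $a$ already belong to $F_S$, which is the only place the argument needs care beyond the induced case.
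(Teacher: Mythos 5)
Your proof is correct and follows essentially the same route as the paper's: a test-vector contradiction for soundness (the paper hits the LMI with $[\mathbf{1}_{k_1};-\mathbf{1}_{k_2}]$, you use $\mathbf{1}_{S_2}$, both hinging on the forced positivity of $M_{aw}=M_{ww}$ coming from the anchor constraints), and for completeness an explicit feasible $M$ plus a spectral argument yielding a sufficiently small $\gamma>0$ (the paper takes the all-ones block and cites Finsler's lemma, you take the star-plus-induced-edges matrix and bound $\gamma$ directly via the coincidence of kernels and an eigenvalue ratio). Your closing caveat about non-induced $F_S$ correctly flags a point the paper's own construction also glosses over, since there $A_S\circ M_S$ is likewise the induced adjacency matrix.
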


%%%%%%%%%%%%%%%%%%%%%%%%%%%%%%%%%%%%%%%%%%%%%%%%%%%%%%%%%%%%%%%%%%%%%%%%%%%%%%%%%%%%%%%%%%%%%%%%%%%
{\it Proof Sketch.}
$M \in {\cal C}_{\lmi}(a,\gamma)$ implies $M$ is connected. By definition of ${\cal M}_a$ there must be a star graph that is a subgraph on $\mbox{Supp}(\mbox{Diag}(M))$. This means that $L(M)$ (hence $L(A\circ M)$) can only have one zero eigenvalue on $\mbox{Supp}(\mbox{Diag}(M))$. We can now invoke Lemma~\ref{lem_algebraic_connectivity} on $\mbox{Supp}(\mbox{Diag}(M))$. The other direction is based on hyperplane separation of convex sets. Note that ${\cal C}_{a,\gamma}$ is convex but ${\cal C}$ is not.
This necessitates the need for an anchor. In practice this means that we have to search for connected sets with different anchors. This is similar to scan statistics the difference being that we can now optimize over arbitrary shapes. We next get a handle on $\gamma$.%However, $\gamma$ parameter appears more serious since it could potentially be arbitrarily small. We now show that $\gamma$ is related to the subgraph conductance.% of elements of ${\cal C}_{a,\gamma}$.

\noindent
{\bf $\gamma$ encodes Shape:} \,
We will relate $\gamma$ to the internal conductance of the class ${\cal C}$. This provides us with a tool to choose $\gamma$ to reflect the type of connected sets that we expect for our alternative hypothesis. In particular thick sets correspond to relatively large $\gamma$ and thin sets to small $\gamma$. In general for graphs of fixed size the minimum internal conductance over all connected shapes is strictly positive and we can set $\gamma$ to be this value if we do not a priori know the shape. %We state this formally below:
\begin{thm}\label{thm:C_a_Phi}
In a 2-D lattice, it follows that
${\cal C}_{a,\Phi} \subseteq {\cal C}_{\lmi}( a, \gamma )$,
where $\gamma = \Theta( \frac{\Phi^2}{\log (1/\Phi)} )$.
\end{thm}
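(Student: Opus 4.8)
The plan is to verify the inclusion by exhibiting, for every $H=(S,F_S)\in{\cal C}_{a,\Phi}$, one canonical matrix $M$ and checking the LMI on it. First I would take $F_S=E_S$ (passing to the induced subgraph only increases $\phi$, so $\phi(H)\ge\Phi$ is preserved) and set all weights to $1$: $M_{vv}=1$ and $M_{av}=1$ for $v\in S$, $M_{uv}=1$ for $(u,v)\in F_S$, and $0$ elsewhere. One checks $M\in{\cal M}$, and since the constraints $M_{av}\le M_{vv}$ and $M_{av}\ge M_{vv}$ force $M_{av}=M_{vv}$, this $M$ lies in ${\cal M}_a^*$ and satisfies $H\rightleftharpoons M$. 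Writing $L_H:=L(A\circ M)$ (the unweighted Laplacian of $H$) and noting $L(M)=L_H+L_{vs}$, where $L_{vs}$ is the Laplacian of the ``virtual star'' of unit edges $(a,v)$ with $v\in S$, $(a,v)\notin E$, the LMI $L(A\circ M)\succeq\gamma L(M)$ becomes $L_H\succeq\gamma(L_H+L_{vs})$. This is a generalized eigenvalue condition: it holds iff $\gamma\le 1/(1+\rho^*(H))$ with
\[
\rho^*(H)=\max_{x\perp\mathbf 1}\frac{x^\top L_{vs}x}{x^\top L_H x}\le\max_{x}\frac{\sum_{v\in S}(x_a-x_v)^2}{x^\top L_H x}.
\]
Hence it suffices to prove $\rho^*(H)=O\!\big(\log(1/\Phi)/\Phi^2\big)$ for every $H$ with $\phi(H)\ge\Phi$; choosing $\gamma=\Theta(\Phi^2/\log(1/\Phi))$ then yields the inclusion.

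Second, I would pass from the Rayleigh quotient to effective resistances. Using the variational identity $R^H_{\mathrm{eff}}(a,v)=\max_x (x_a-x_v)^2/(x^\top L_H x)$ termwise and summing over $v$ (the same $x$ bounds every term) gives
\[
\rho^*(H)\ \le\ \sum_{v\in S}R^H_{\mathrm{eff}}(a,v).
\]
The whole problem is thus reduced to the geometric estimate $\sum_{v\in S}R^H_{\mathrm{eff}}(a,v)=O(\log(1/\Phi)/\Phi^2)$.

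To bound this sum I would combine conductance with the planarity of the 2-D lattice in three steps. (i) Ball growth: since $H$ has degree $\le 4$ and $\phi(H)\ge\Phi$, every metric ball $B_r$ with $|B_r|\le|S|/2$ satisfies $|B_{r+1}|\ge(1+\Phi/4)|B_r|$, so $|B_r|$ grows geometrically and $\mathrm{diam}(H)=O(\log|S|/\Phi)$. (ii) Size bound: because $H$ embeds in the lattice and graph distance dominates $\ell_1$ distance, $S$ lies in a lattice disk of that radius, giving $|S|=\widetilde O(1/\Phi^2)$. (iii) Logarithmic resistance: decomposing $S$ into dyadic distance shells around $a$ and routing flow shell-by-shell, the planar structure makes the effective resistance across each annulus grow like $\log$ rather than polynomially, while conductance lower-bounds the shell cuts, so that summing the per-shell contributions telescopes to $O(\log(1/\Phi)/\Phi^2)$.

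The hard part will be step (iii) --- extracting \emph{exactly one} logarithmic factor. Two easier estimates are both too weak: the uniform per-pair bound $R_{\mathrm{eff}}(a,v)=O(1/\Phi^2)$ summed over $|S|=\Theta(1/\Phi^2)$ nodes gives $O(1/\Phi^4)$, and the algebraic-connectivity route $x^\top L_H x\ge\lambda_2\|x\|^2$ with Cheeger's $\lambda_2\gtrsim\Phi^2$ gives only $\rho^*=O(|S|/\Phi^2)=O(1/\Phi^4)$; each loses a factor $|S|\sim 1/\Phi^2$ because it ignores the localization of energy near $a$. Only the planar embedding --- logarithmic resistance growth together with the size bound $|S|=\widetilde O(1/\Phi^2)$ --- recovers the correct $\log(1/\Phi)/\Phi^2$ scaling, and the ball $B_K$ (where $R_{\mathrm{eff}}(a,v)\sim\log\mathrm{dist}(a,v)$) shows this is tight, justifying the $\Theta$. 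A minor caveat to dispatch is the anchor's incident edges: the forced equality $M_{av}=M_{vv}$ makes the construction literal only for induced subgraphs, which is why I would reduce to $F_S=E_S$ at the outset.
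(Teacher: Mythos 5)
Your setup coincides with the paper's: the same canonical star--plus--induced--edges matrix $M$, and the same reduction of the LMI to the one--parameter condition $\gamma \le 1/(1+\rho^*)$ with $\rho^* = \max_f \, f^\top L_{\mathrm{star}} f / f^\top L_H f$ (the paper phrases this as reducing to $L(A_S\circ M_S) - 2\gamma L(M_{\mathrm{star}}) \succeq 0$ and then bounding $\sum_i (f_a-f_i)^2$ by $\frac{1}{\gamma}\sum_{(i,j)\in E}(f_i-f_j)^2$). Where you diverge is in how $\rho^*$ is bounded: the paper builds an explicit system of paths from the anchor, organized as a balanced binary tree of ``critical nodes,'' and applies a weighted Cauchy--Schwarz inequality with geometrically decaying weights --- a congestion/dilation argument morally equivalent to your effective-resistance route --- but it carries this out only for rectangle bands anchored at a corner (the paper's proof is explicitly ``a proof sketch for rectangle bands'').

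The genuine gap is your step (iii), which you defer as ``the hard part'' but which is false as stated for general $H \in {\cal C}_{a,\Phi}$. Take $H$ to be a $\sqrt{k}\times\sqrt{k}$ square with a path of $\ell=\sqrt{k}$ nodes attached to one corner, and place the anchor $a$ at the free end of the path. Then $\phi(H)=\Theta(1/\sqrt{k})=\Phi$ (the worst cut severs the handle), yet every node $v$ of the square satisfies $R^H_{\mathrm{eff}}(a,v)\ge \ell$ because the handle is a series of $\ell$ unit resistors, so $\sum_{v\in S}R^H_{\mathrm{eff}}(a,v)=\Omega(k^{3/2})\gg \log(1/\Phi)/\Phi^2=\Theta(k\log k)$. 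This is not slack in the passage to resistances: the ramp test vector ($f=0$ at $a$, linear on the handle, $\equiv 1$ on the square) gives $\rho^*\ge k^{3/2}$ directly, so your canonical binary $M$ genuinely violates the LMI at $\gamma=\Theta(\Phi^2/\log(1/\Phi))$ for this $H$. Internal conductance alone does not prevent the anchor from sitting at the tip of a thin appendage of a fat body, and that is exactly the configuration your shell/annulus picture assumes away. To rescue the argument you would have to either restrict the class (as the paper implicitly does, to bands with the anchor at a corner) or exhibit a fractional $M$ that downweights the remote mass --- neither of which your proposal supplies. Separately, even in the band case, step (iii) as written contains no actual estimate, whereas the corresponding step is where all of the paper's work (its Lemma on $L-\gamma L_{\mathrm{star}}\succeq 0$) resides.
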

%In Section~\ref{} we pointed out that size also determines detectability. In our characterization so far we have not explicitly accounted for size. The issue is that size constraint is inherently non-convex. This issue is well known as evidenced by the $\ell_0$ to $\ell_1$ relaxations. Following this perspective we also account for size directly by a sum constraint of the form $\sum_v M_{vv} \leq K$ or directly in our objective as a cost-penalty. In the former case our constraint takes the form:
%$$
%{\cal C}_{a,\gamma,K} = {\cal C}_{a,\gamma} \cap \{M \in Y^n \mid \sum_v M_{vv} = K\}
%$$
%

\noindent
{\bf LMI-Test:}\, We are now ready to present our test statistics. %Our test is related to the GLRT test but in contrast to GLRT of Eq.~\ref{} normalizes for size and accounts for shape. First, we need to modify our expressions for $\ell_S(x)$ to deal with matrices. This is relatively easy for the independent and MRF models described in Eq.~\ref{}.
We replace indicator variables with the corresponding matrix components in Eq.~\ref{indic_expr}, i.e., $\mathbf{1}_S(v) \rightarrow M_{vv},\,\,\mathbf{1}_S(u)\mathbf{1}_S(v) \rightarrow M_{uv}$ and obtain: %Specifically, we obtain for the two models:
\begin{eqnarray} \nonumber
\mbox{Elevated Mean:} \,&\ell_{M}(x) = \sum\limits_{v \in V} (\Psi_v(x_v) - \log(Z_v))M_{vv}\\ \label{eq:quadratic}
\mbox{Correlated Gaussian:} \,&\ell_M(x) \propto \sum\limits_{(u,v) \in E} \Psi(x_u,x_v)M_{uv} - \sum\limits_v M_{vv}\log (1-\rho)\\ \label{eq:LRTM}
\mbox{LMIT}_{a,\gamma}\,\,\,\,&\ell_{a,\gamma} (x) = \max\limits_{M \in {\cal C}_{\lmi}(a,\gamma)} \ell_M(x) \decide{H_1}{H_0} \eta %\\
%\mbox{LMIT}_{a}\,\,\,\,& \max_{\gamma>0} {\sqrt{\gamma}}\ell_{a,\gamma} (x) \decide{H_1}{H_0} \mu
\end{eqnarray}
%where $Z_M$ is a normalization factor that sums over the different realizations of $x$. It can also be similarly expressed in terms of $M$. For instance, in the Gaussian case $Z_M $ is an $M$-weighted determinant on the support of $M$. With these modifications we can propose the following test %and for a known fixed size of the connected sets under the alternative hypothesis our test takes the form:
%\begin{equation} \label{eq:lmitest}
%\mbox{LMIT}_{a,\gamma}\,\,\,\,\ell_{a,\gamma} (x) = \max_{M \in {\cal C}_{a,\gamma}} \ell_M(x) \decide{H_1}{H_0} \eta
%\end{equation}
This test explicitly makes use of the fact that alternative hypothesis is anchored at $a$ and the internal conductance parameter $\gamma$ is known. % Also we need to choose $\eta$ to scale with size.
We will refine this test to deal with the completely agnostic case in the following section.
%When size is not known the issue is that a single threshold cannot be used across different sizes. This is because the high-likelihood regions for the test statistics can have significantly different support for different values of $K$ under the null and alternative hypothesis. For instance, consider the test statistics for the IID Gaussian model described in Eq.~\ref{}. We see under the null distribution the probability density is given by $\sum_{v\in V} x_v M_{vv} \sim {\cal N}(0,K)$, where $\sum_v M_{vv}=K$, and thus the variance increases with size $K$. Consequently, we are more likely to see larger values under the null-hypothesis as $K$ increases. One way to account for it in the Gaussian case is to normalize the test-statistics with respect to size, i.e.,
%\begin{equation} \label{eq:lmitest}
%\mbox{LMIT}_{\gamma}\,\,\,\,\ell_{a,\gamma} (x) = \max_{K} {\ell_{a,\gamma,K} (x) \over \sqrt{K}} \decide{H_1}{H_0} \eta
%\end{equation}
%It turns out that a similar normalization also works for exponential family of models under the independence assumption. Note that these expressions detect null vs. alternative hypothesis at a particular anchor node. To look for anomalies over the entire graph we maximize the test-statistics over all $a \in V$.

%%%%%%%%%%%%%%%%%%%%%%%%%%%%%%%%%%%%%%%%%%%%%%%%%%%%%%%%%%%%%%%%%%%%%%
\section{Analysis}\label{sec:analysis}
%%%%%%%%%%%%%%%%%%%%%%%%%%%%%%%%%%%%%%%%%%%%%%%%%%%%%%%%%%%%%%%%%%%%%%
In this section we analyze LMIT$_{a,\gamma}$ and the agnostic LMI tests for the Elevated Mean problem for exponential family of distributions on 2-D lattices.
For concreteness we focus on Gaussian \& Poisson models and derive lower and upper bounds for $\mu_0$ (see Eq.~\ref{musep}). Our main result states that to guarantee separability, $\mu_0 \approx \Omega \left( \frac{1}{K \Phi} \right)$, where $\Phi$ is the internal conductance of the family ${\cal C}_{a,\Phi}$ of connected subgraphs, $K$ is the size of the subgraphs in the family, and $a$ is some node that is common to all the subgraphs.
% First we derive upper bounds based on LMI-test for separable objective functions. Our bounds are deterministic and do not depend on the underlying distributions. We will then apply these bounds to specific families of probability distributions in order to describe phase transition effects for detectability. We will then derive lower bounds to detectability for these specific families.
%
%We first consider a deterministic version of optimization problem described in Eq.~\ref{eq:LRTM}.
%%where $\psi_v(x_v) \geq 0$. We then have the following result.
%\begin{lem}
%For an arbitrary connected graph it follows that%that
%$$
%{\cal L}(\psi,a,\gamma)=\min_{M \in {\cal C}_{a,\gamma}} \sum \psi_v(x_v)M_{vv} \leq  C \sum_{v \in B(a,{1 \over \sqrt{\gamma}})} \psi_v(x_v)
%$$
%where $0\leq \psi_v(x_v) \leq C$. %Then it follows that,
%%$${\cal L}(\psi,a,\gamma) \leq \sum_{v \in B(a,\gamma}} \psi_v(x_v)$$
%\end{lem}
%We now specialize this result to Poisson \& Gaussian under independence assumptions.
The reason for our focus on homogenous Gaussian/Poisson setting is that we can extend current lower bounds in the literature to our more general setting and demonstrate that they match the bounds obtained from our LMIT analysis. We comment on how our LMIT analysis extends to other general structures and models later. %models and graphs, which we comment on later in the section. %However, since effective lower bounds for more general models/graphs are significantly harder to derive and unavailable in the literature, we limit our attention here to elevated mean models and graphs with polynomial growth property. For elevated mean models our result states that the minimum elevated mean for separability is $\mu_0 \approx \Omega \left ({1 \over \sqrt{\Phi_{{\cal C}}}K}\right )$, where $\Phi_{{\cal C}}$ is the internal conductance of the family ${\cal C}$ of subgraphs and $K$ is the size of the subgraphs in the family.

The proof for LMIT analysis involves two steps (see Supplementary):
\begin{enumerate}
  \item {\it Lower Bound:} Under $H_1$ we show that the ground truth is a feasible solution. This allows us to lower bound the objective value, $\ell_{a,\gamma}(x)$, of Eq.~\ref{eq:LRTM}.
  \item {\it Upper Bound:} Under $H_0$ we consider the dual problem. By weak duality it follows that any feasible solution of the dual is an upper bound for $\ell_{a,\gamma}(x)$. A dual feasible solution is then constructed through a novel Euclidean embedding argument.
\end{enumerate}
We then compare the upper and lower bounds to obtain the critical value $\mu_0$.

% for the more is still considerable difficulty in obtaining algorithm-independent lower bounds for these cases.

We analyze both non-agnostic and agnostic LMI tests for the homogenous version of Gaussian and Poisson models of Eq.~\ref{eq:H0H1_decay} for both finite and asymptotic 2-D lattice graphs. For the finite case the family of subgraphs in Eq.~\ref{eq:H0H1_decay} is assumed to belong to the connected family of sets, ${\cal C}_{a,\Phi}\cap {\cal S}_K$, containing a fixed common node $a\in V$ of size $K$. For the asymptotic case we let the size of the graph approach infinity ($n \rightarrow \infty$). For this case we consider a sequence of connected family of sets ${\cal C}^n_{a.\Phi_n}\cap {\cal S}_{K_n}$ on graph $G_n=(V_n,E_n)$ with some fixed anchor node $a \in V_n$. We will then describe results for agnostic LMI tests, i.e., lacking knowledge of conductance $\Phi$ and anchor node $a$.

\noindent
{\bf Poisson Model:}\,
%We consider the following Poisson model. The random variables $x_v$ are assumed to be independent. We first consider the case where we know $a$ and $\gamma$. Later we will generalize the setup to the agnostic case.
%\begin{eqnarray*}
%\mathbf{H_0:}\,\, \,\,x_v\stackrel{d}{\sim}&\mbox{Pois}(\lambda_0);\,\,\,\, %\,\,(\mbox{Under}\, H_0)\\
%\mathbf{H_1:}\,\, \,\,x_v\stackrel{d}{\sim}&\mbox{Pois}(\lambda_0+\lambda \mathbf{1}_S(v)),\,\, S \in {\cal C}_{a,\Phi}\cap {\cal S}_K,\,\, \lambda >0
%\end{eqnarray*}
In Eq.~\ref{eq:H0H1_decay} we let the population $N_v$ to be identically equal to one across counties. We present LMI tests that are {\it agnostic} to shape and anchor nodes:
\begin{equation} \label{eq:LMIT_pois}
\mbox{LMIT}_A: \,\,\,\,\ell(x)=\max_{a \in V, \gamma \geq \Phi^2_{min}} \sqrt{\gamma} \ell_{a,\gamma} (x)  \decide{H_0}{H_1} 0
\end{equation}
where $\Phi_{min}$ denotes the minimum possible conductance of a connected subgraph with size $K$, which is $2/K$.
\begin{thm}
The $\mbox{LMIT}_{a,\gamma}$ test achieves $\delta$-separability for $\lambda = \Omega(\frac{ \log(K) }{K \Phi})$ and the agnostic test LMIT$_A$ for $\lambda =\Omega(\log K \sqrt{\log n})$.
%\vspace*{-0.1in}
%it is detectable with the LMI test.
\end{thm}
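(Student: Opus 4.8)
\emph{Proof plan.} Throughout write $c_v:=\Psi_v(x_v)-\log Z_v$. For the homogeneous Poisson model ($N_v\equiv1$, $\mu=1$) of Eq.~\ref{eq:H0H1_decay} a direct computation gives $c_v=x_v\log(1+\lambda)-\lambda$, so that the per-node drift $\mathbb{E}_0[c_v]=\log(1+\lambda)-\lambda$ is negative under the null while $\mathbb{E}_S[c_v]=(1+\lambda)\log(1+\lambda)-\lambda$ is positive on the planted set. The strategy is to sandwich the optimal value $\ell_{a,\gamma}(x)$ of the program in Eq.~\ref{eq:LRTM} between a data lower bound under $H_1$ and a dual upper bound under $H_0$, and to place $\eta$ between the two concentrated values.

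\emph{Lower bound (Type II).} Since the planted subgraph lies in ${\cal C}_{a,\Phi}\cap{\cal S}_K$, Theorem~\ref{thm:C_a_Phi} shows that its embedding matrix $M_S$ is feasible for ${\cal C}_{\lmi}(a,\gamma)$ as soon as $\gamma=\Theta\!\big(\Phi^2/\log(1/\Phi)\big)$. Feasibility of the ground truth yields $\ell_{a,\gamma}(x)\ge\ell_{M_S}(x)=\sum_{v\in S}c_v$, a sum of $K$ independent terms with mean $K\big((1+\lambda)\log(1+\lambda)-\lambda\big)$. A Bennett/Bernstein bound for these sub-exponential Poisson summands then shows $\ell_{a,\gamma}(x)$ stays above a constant fraction of this mean with probability tending to one; the sub-exponential (rather than sub-Gaussian) tail is precisely what introduces the extra $\log K$ factor relative to the Gaussian rate $1/(K\Phi)$.

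\emph{Upper bound (Type I), the crux.} Here I would dualize the semidefinite program. Attaching a PSD multiplier $Y\succeq0$ to the constraint $L(A\circ M)-\gamma L(M)\succeq0$ and using the Gram decomposition $Y_{uv}=\langle z_u,z_v\rangle$ (the Euclidean embedding), weak duality gives the pointwise bound
\[
\ell_{a,\gamma}(x)\;\le\;\max_{M\in{\cal M}_a^*}\Big[\sum_v c_v M_{vv}+\tfrac12\sum_{u\neq v}(A_{uv}-\gamma)M_{uv}\,\|z_u-z_v\|^2\Big],
\]
valid for every embedding $v\mapsto z_v$, since $\langle Y,L(B)\rangle=\tfrac12\sum_{u\neq v}B_{uv}\|z_u-z_v\|^2$. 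The inner maximum is where the anchor does its work: because ${\cal M}_a^*$ enforces $M_{vv}\le M_{av}$ and $M_{aa}=1$, switching node $v$ on forces $M_{av}\ge M_{vv}$ and hence, through the non-edge $(a,v)$ term (weight $-\gamma$), a penalty $-\tfrac{\gamma}{2}M_{vv}\|z_a-z_v\|^2$; taking the embedding near-isometric on lattice edges keeps the positive edge terms (weight $1-\gamma$) negligible, so the bound collapses to essentially $\sum_v\big[c_v-\tfrac{\gamma}{2}\|z_a-z_v\|^2\big]^{+}$. It remains to choose the embedding so that $\|z_a-z_v\|^2$ grows with graph distance from the anchor at a rate calibrated to $\gamma$ (e.g.\ a scaled coordinate embedding of the lattice); the reward of a node is then quenched beyond an effective radius, and the thresholded sum concentrates under $H_0$ to a value of order $\gamma^{-1/2}$, which is exactly why the agnostic statistic in Eq.~\ref{eq:LMIT_pois} is normalized by $\sqrt\gamma$.

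\emph{Balancing and the agnostic test.} Comparing the $H_1$ lower bound against this $\gamma^{-1/2}$ null bound and substituting $\gamma=\Theta(\Phi^2/\log(1/\Phi))$ from Theorem~\ref{thm:C_a_Phi} should yield $\delta$-separability once $\lambda=\Omega\!\big(\log K/(K\Phi)\big)$, with $\eta$ placed between the two concentrated values so both error probabilities stay below $\delta$. For LMIT$_A$ I would rerun the $H_0$ control uniformly over the $n$ anchors $a$ and over a geometric grid of levels $\gamma\ge\Phi_{min}^2=(2/K)^2$; the $\sqrt\gamma$ weighting makes the null statistics comparable across $\gamma$, and the union bound over anchors inflates the null threshold by the usual maximal $\sqrt{\log n}$ factor, giving the agnostic requirement $\lambda=\Omega(\log K\,\sqrt{\log n})$. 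The step I expect to be hardest is the construction of the dual-feasible embedding: it must be simultaneously feasible, tight enough to reproduce the $\gamma^{-1/2}$ scaling, and a mild enough function of the data that $\sum_v[c_v-\tfrac{\gamma}{2}\|z_a-z_v\|^2]^{+}$ concentrates under $H_0$ — the interplay of the lattice geometry, the conductance-to-$\gamma$ relation, and the heavy Poisson tails is what makes this delicate.
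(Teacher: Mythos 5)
Your overall architecture --- primal feasibility of the ground truth under $H_1$ via Theorem~\ref{thm:C_a_Phi}, a weak-duality upper bound under $H_0$ built from a Euclidean (Gram) embedding of the PSD multiplier, and a union bound over anchors for the agnostic test --- is exactly the paper's plan, and your Poisson drift computation and your reading of the $\sqrt{\gamma}$ normalization are correct. The gap is in the one step you flag as hardest, and it is not merely a technicality: as sketched, your $H_0$ bound does not close at the claimed rate. First, the positive edge terms are not negligible for a near-isometric embedding: with $\|z_u-z_v\|^2=s^2$ on lattice edges and an effective quenching radius $d\approx 1/(s\sqrt{\gamma})$, the adversarial $M$ turns on $\Theta(1/(s^2\gamma))$ edges each contributing $(1-\gamma)s^2$, for a total of $\Theta(1/\gamma)$ independent of the scale $s$ --- the same order as $\mathrm{tr}(M)$ itself --- so these terms must be absorbed into the node constraints (this is what the $\alpha_{ij}$ variables do in Lemma~\ref{lem:primal_dual}) rather than dismissed. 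Second, and more fundamentally, even granting the collapse to $\sum_v[c_v-\tfrac{\gamma}{2}\|z_a-z_v\|^2]^{+}$, the positive part extracts the upper tail of every $c_v$: under $H_0$ each of the $\Theta(1/\gamma)$ nodes inside the effective radius contributes $\mathbb{E}[c_v^{+}]=\Theta(\lambda)$, so this bound sits at $\Theta(\lambda/\gamma)=\Theta(\lambda\log K/\Phi^2)$ rather than at the $O(\lambda\sqrt{\log(1/\gamma)/\gamma})$ level you need. Balancing against the $H_1$ gain $\Theta(K\lambda^2)$ then yields only $\lambda=\Omega(\log K/(K\Phi^2))$, worse than the theorem by a factor of $1/\Phi\ge\sqrt{K}$.

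The paper avoids this loss by never taking a pointwise positive part. It first solves the deterministic max-trace problem (Lemma~\ref{lem:max_trace}), whose optimal dual --- reduced to a one-dimensional embedding by the folding argument of Lemma~\ref{lem:primal_dual} --- has a nested complementary-slackness structure, and then perturbs that dual solution by the data (Lemma~\ref{lem:H0_upper}), writing $y_i=(1+x_{\max}/N)-\eta_i$ and absorbing the $\eta_i$ into the slack variables $\rho_i,\alpha_i$. The resulting bound has the form $\sum_i x_i M^*_{ii}+x_{\max}\,O(\sqrt{1/\gamma})$, in which the randomness enters linearly through the \emph{deterministic} weights $M^*_{ii}$, so the null fluctuation is $O(\sqrt{\log(1/\gamma)/\gamma})$ and the rate $\lambda=\Omega(\log K/(K\Phi))$ follows. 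To repair your argument you would need either to reproduce this max-trace/complementary-slackness detour or to construct a data-dependent dual embedding whose quenching adapts to the realization of $\{c_v\}$ --- precisely the delicate interplay you anticipated but did not resolve. Your $H_1$ step and the agnostic union bound are fine and match the paper's.
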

%Note that the test relies on knowledge of size, shape and anchor node. We now describe a test that overcomes this issue:
%\begin{equation} \label{eq:LMIT_pois}
%\mbox{LMIT} \,\,\,\,\max_{a \in V,\,\gamma \geq 0} \gamma \ell_{a,\gamma} (x)  \decide{H_0}{H_1} \lambda_0
%\end{equation}
Next we consider the asymptotic case and characterize tight bounds for separability. %namely we let the size of the graph approach infinity ($n \rightarrow \infty$). For this case we consider a sequence of connected family of sets ${\cal C}^n_{a.\Phi_n}\cap {\cal C}_{K_n}$ on graph $G_n=(V_n,E_n)$ with some fixed anchor node $a \in V_n$. %being an arbitrary fixed element.
\begin{thm}
The two hypothesis $H_0$ and $H_1$ are asymptotically inseparable if $\lambda_n \Phi_n K_n \log(K_n) \rightarrow 0$. It is asymptotically separable with $\mbox{LMIT}_{a,\gamma}$ for $\lambda_n K_n \Phi_n / \log(K_n) \rightarrow \infty$.
The agnostic $\mbox{LMIT}_A$ achieves asymptotic separability with $\lambda_n / ( \log(K_n) \sqrt{ \log n } )  \rightarrow\infty$.
\end{thm}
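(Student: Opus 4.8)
The plan is to prove the three clauses separately: the impossibility (inseparability) bound by an information-theoretic second-moment argument, and the two achievability (separability) bounds by instantiating the two-sided LMIT analysis of the preceding finite-sample theorem and pushing the error probabilities to zero along the sequence. Throughout I would index everything by $n$, fix the anchor $a\in V_n$, write $\Lambda_n={\cal C}^n_{a,\Phi_n}\cap{\cal S}_{K_n}$, and reduce each asymptotic claim to showing that the corresponding finite-$n$ quantity tends to its stated limit.

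For the inseparability clause I would use Le Cam's mixture (second-moment) method. Put a prior --- say uniform --- over a carefully chosen sub-collection $\Lambda_n'\subseteq\Lambda_n$, form the averaged alternative $\bar\p_1=|\Lambda_n'|^{-1}\sum_{S\in\Lambda_n'}\p_S$, and lower bound the minimax risk by $1-\mathrm{TV}(\p_0,\bar\p_1)\ge 1-\tfrac12\sqrt{\chi^2(\bar\p_1\|\p_0)}$, so that it suffices to show $\chi^2(\bar\p_1\|\p_0)\to0$. For the homogeneous Poisson model the per-node likelihood ratio equals $e^{-\lambda_n}(1+\lambda_n)^{x_v}$ on the elevated set, and a direct computation collapses the chi-square to an overlap generating function,
\[
\chi^2(\bar\p_1\|\p_0)+1=\mathbb{E}_{S,S'}\!\left[e^{\lambda_n^2\,|S\cap S'|}\right],
\]
with $S,S'$ drawn independently from the prior. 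The problem then becomes purely combinatorial: design $\Lambda_n'$ inside the conductance-$\Phi_n$, size-$K_n$ family so that the upper tail of the pairwise overlap $|S\cap S'|$ is tightly controlled, and verify that the right-hand side stays $1+o(1)$ precisely in the regime $\lambda_n\Phi_n K_n\log(K_n)\to0$. Packing enough conductance-constrained shapes with provably small overlaps, while correctly accounting for the $\log K_n$ entropy factor, is the step that adapts the Gaussian lattice lower bounds of the cited works to the Poisson overlap functional above, and I expect it to be the main obstacle.

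For asymptotic separability with $\mathrm{LMIT}_{a,\gamma}$ I would reuse the two-sided analysis of the finite-sample theorem but sharpen its concentration estimates so that both error probabilities, rather than merely staying below $\delta$, vanish along the sequence. The lower bound under $H_1$ is by feasibility: setting $\gamma=\Theta(\Phi_n^2/\log(1/\Phi_n))$, the containment ${\cal C}_{a,\Phi_n}\subseteq{\cal C}_{\lmi}(a,\gamma)$ makes the ground-truth $M^\star$ feasible, so $\ell_{a,\gamma}(x)\ge\ell_{M^\star}(x)=\log(1+\lambda_n)\sum_{v\in S}x_v-\lambda_n K_n$, which by a Poisson Chernoff bound concentrates above its $H_1$ mean with error $\to0$. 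The upper bound under $H_0$ is by weak duality: any dual-feasible certificate bounds $\ell_{a,\gamma}(x)$, and the novel Euclidean-embedding construction produces one whose value scales like $1/\sqrt\gamma\sim\sqrt{\log(1/\Phi_n)}/\Phi_n$ times a controllable fluctuation, which a second Chernoff bound keeps below the threshold with error $\to0$. Comparing the two bounds, the separating gap is governed by the ratio $\lambda_n K_n\Phi_n/\log(K_n)$, so choosing $\eta$ between them gives $R(\pi_n)\to0$ exactly when this ratio diverges.

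Finally, for the agnostic $\mathrm{LMIT}_A$, which maximizes $\sqrt\gamma\,\ell_{a,\gamma}(x)$ over all anchors $a\in V_n$ and all $\gamma\ge\Phi_{\min}^2$, I would repeat the $H_0$ upper bound while taking a union bound over the $n$ anchors and over a geometric grid of $\gamma$ values. The $\sqrt\gamma$ normalization is precisely what makes the per-$(a,\gamma)$ ceilings comparable, the union over $\sim n$ anchors inflates the fluctuation term by a $\sqrt{\log n}$ factor, and the maximization over $\gamma$ contributes the $\log K_n$ factor; the residual, shape-free requirement is $\lambda_n/(\log(K_n)\sqrt{\log n})\to\infty$. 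The two achievability statements thus follow once the finite-$n$ tail bounds are made uniform in $a$ and $\gamma$, and the genuinely new ingredient in this theorem remains the Poisson second-moment lower bound, whose sharpness rests on the overlap-controlled shape packing flagged above.
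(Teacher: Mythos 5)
Your two achievability arguments track the paper's proof almost exactly: primal feasibility of the ground truth (via the containment ${\cal C}_{a,\Phi}\subseteq{\cal C}_{\lmi}(a,\gamma)$ with $\gamma=\Theta(\Phi^2/\log(\cdot))$) for the $H_1$ lower bound, a dual-feasible certificate built from the Euclidean embedding / max-trace analysis for the $H_0$ upper bound scaling like $O(\sqrt{\log k/\gamma})$ plus the $tr(M^*)$ baseline, and a union bound over the $n$ anchors and the $\gamma$-grid yielding the extra $\sqrt{\log n}$ and $\log K_n$ factors for $\mbox{LMIT}_A$. One small difference: the paper plugs in the max-trace optimizer $M^*$ (shown to overlap the ground-truth band on $\Theta(k)$ nodes) as the primal feasible point rather than the ground-truth indicator itself, which is what lets the same $tr(M^*)$ term cancel on both sides of the comparison; your variant works but you would need to align the two baselines before comparing. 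Note also that the paper only writes out the Gaussian case in full and asserts the Poisson case "follows similar lines," so your explicit Poisson Chernoff bounds are filling in a step the paper leaves implicit.

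Where you genuinely diverge is the inseparability clause, and this is also where your plan has a real gap. You propose a from-scratch Le Cam/second-moment argument, reducing $\chi^2(\bar\p_1\|\p_0)$ to the overlap functional $\mathbb{E}[e^{\lambda_n^2|S\cap S'|}]$ and then constructing a packing of conductance-$\Phi_n$, size-$K_n$ shapes with controlled pairwise overlaps. The chi-square identity is correct for the homogeneous Poisson model, but the packing-with-overlap-control step --- which you yourself flag as "the main obstacle" --- is precisely where all of the difficulty sits, and you give no construction or entropy count that would produce the specific threshold $\lambda_n\Phi_n K_n\log(K_n)\to0$ (in particular, for thin shapes with $\Phi_n=\Theta(1/K_n)$ one needs exponentially many nearly disjoint snakes, and it is not evident the $\log K_n$ factor emerges from that route). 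The paper avoids this entirely: its Lemma on inseparability observes that the class ${\cal C}_{a,\Phi_n}\cap{\cal S}_{K_n}$ \emph{contains} the rectangular bands of width $h_n$ and length $l_n$ from Theorem 3 of \cite{Castro11}, so the known band lower bound transfers verbatim, and the stated condition follows by substituting $l_nh_n=K_n$ and $h_n/K_n=\Phi_n$. Your route, if completed, would be more self-contained (and would genuinely handle the Poisson likelihood rather than importing a Gaussian result), but as written it asserts rather than proves the one step on which the conclusion depends.
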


%\vspace*{-0.15in}
\noindent
%\paragraph
{\bf Gaussian Model:}\,
We next consider agnostic tests for Gaussian model of Eq.~\ref{eq:H0H1_decay} with no propagation loss, i.e., $\alpha_{uv}=1$.
%It turns out that we need to modify our LMIT to account for negative values for observations. Our modification is to add an additional linear constraint in Eq.~\ref{eq:LRTM}: %the following:
%\begin{equation}\label{eq:LMIT_gauss}
%\mbox{LMIT}_A: \,\,\,\, \ell_{a} (x) = \max_{a\in V, \gamma \geq \Phi^2_{min}}\max_{M \in {\cal C}_{LMI}(a,\gamma),\,\gamma tr(M)=C} \sqrt{\gamma} \ell_M(x) \decide{H_1}{H_0} 0 \\
%\end{equation}

\begin{thm}\label{thm:gauss}
%The two hypothesis $H_0$ and $H_1$ are $\delta$-separable if $\mu = \Omega(\frac{\log(K)}{K \Phi})$.
The two hypotheses $H_0$ and $H_1$ for the Gaussian model are asymptotically inseparable if $\mu_n \Phi_n K_n \log(K_n) \rightarrow 0$, are separable with $\mbox{LMIT}_{a,\gamma}$ if $\mu_n K_n \Phi_n /\log(K_n) \rightarrow \infty$, and are separable with $\mbox{LMIT}_A$ if $\mu_n / ( \log(K_n) \sqrt{ \log n  } ) \rightarrow \infty$
%it is detectable with the LMI test.
\end{thm}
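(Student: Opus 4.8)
The plan is to establish Theorem~\ref{thm:gauss} by reducing the Gaussian case to the Poisson analysis already outlined, exploiting the fact that both fall under the same exponential-family scaffolding and that the three claims (inseparability, $\mbox{LMIT}_{a,\gamma}$ separability, $\mbox{LMIT}_A$ separability) mirror the preceding theorem exactly with $\mu_n$ playing the role of $\lambda_n$. Concretely, I would verify that for the homogeneous Gaussian model with $\alpha_{uv}=1$, the local log-likelihood ratio of Eq.~\ref{indic_expr} specializes to $\Psi_v(x_v)=\mu x_v$ with normalizer $\log Z_v = \mu^2/2$, so that $\ell_M(x)=\sum_{v} (\mu x_v - \mu^2/2) M_{vv}$ is linear in the same matrix variable $M$ feeding the LMI constraint ${\cal C}_{\lmi}(a,\gamma)$. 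This places the Gaussian objective in precisely the form required by the two-step lower/upper bound scheme described in Section~\ref{sec:analysis}.

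For the two separability claims I would follow the stated two-step program. For the \emph{lower bound} under $H_1$, plug the ground-truth matrix $M^\star$ corresponding to the true connected subgraph $S\in {\cal C}_{a,\Phi_n}\cap{\cal S}_{K_n}$ into $\ell_M(x)$; by Theorem~\ref{thm:C_a_Phi} this $M^\star$ is feasible since $\gamma=\Theta(\Phi_n^2/\log(1/\Phi_n))$, giving $\ell_{a,\gamma}(x)\geq \sum_{v\in S}(\mu_n x_v - \mu_n^2/2)$, whose mean under $H_1$ concentrates around $K_n\mu_n^2/2$ with Gaussian (rather than Poisson) fluctuations of order $\mu_n\sqrt{K_n}$. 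For the \emph{upper bound} under $H_0$, I would invoke weak duality on the convex program and construct a dual-feasible point via the Euclidean embedding argument; the resulting bound controls the null fluctuation of $\max_M \ell_M(x)$ in terms of $\sqrt{\gamma}$ and the conductance $\Phi_n$. Comparing the two bounds yields the separation threshold $\mu_n K_n \Phi_n/\log(K_n)\to\infty$ for the non-agnostic test; the agnostic $\mbox{LMIT}_A$ bound picks up the extra $\sqrt{\log n}$ from the union over the $n$ anchor nodes and the range of $\gamma$ in Eq.~\ref{eq:LMIT_pois}, giving $\mu_n/(\log(K_n)\sqrt{\log n})\to\infty$.

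For the \emph{inseparability} half, I would argue via the standard minimax lower-bound technique: exhibit a prior over the alternative class ${\cal C}_{a,\Phi_n}\cap{\cal S}_{K_n}$ and bound the $\chi^2$-divergence (or the $L^1$ distance) between the null mixture and $\p_0$, showing it vanishes when $\mu_n\Phi_n K_n\log(K_n)\to 0$. Here the Gaussian structure makes the likelihood-ratio moment computation clean, since the second moment of the averaged likelihood ratio reduces to $\mathbb{E}\,\exp(\mu_n^2|S\cap S'|)$ over two independently drawn subgraphs $S,S'$, and the conductance bound controls the typical overlap.

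The main obstacle I expect is the \emph{upper-bound} step: constructing an explicit dual-feasible solution whose value certifies that the null maximum $\max_{M\in{\cal C}_{\lmi}(a,\gamma)}\ell_M(x)$ does not exceed the $H_1$ signal level. This is where the Euclidean embedding enters, and the delicate point is translating the spectral LMI constraint $L(A\circ M)-\gamma L(M)\succeq 0$ into a geometric packing statement that relates $\gamma$ (hence $\Phi_n$) to the effective number of degrees of freedom the adversary can exploit under the null. Getting the logarithmic factor $\log(K_n)$ sharp — so that the separability and inseparability thresholds match up to this factor — will require carefully tracking how $\gamma=\Theta(\Phi_n^2/\log(1/\Phi_n))$ propagates through the dual bound, and this is the step most likely to demand genuine work rather than a direct transcription of the Poisson argument.
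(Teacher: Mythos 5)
Your outline of the separability half follows the same architecture as the paper (primal feasibility under $H_1$, weak duality plus a Euclidean embedding under $H_0$, a union bound over anchors and $\gamma$ for $\mbox{LMIT}_A$), but essentially all of the technical content lives in the step you defer. The paper's proof runs through three lemmas that your proposal does not supply: (i) Lemma~\ref{lem:primal_dual}, which shows the dual of the LMI-constrained program is a one-dimensional Euclidean embedding problem --- the Gram matrix $G=VV'$ is folded orthant-by-orthant so that all pairwise angles are at most $\pi/2$ and then projected to the scalar coordinates $z_i=\|v_i\|$, which is what turns the spectral constraint into the ``packing'' statement you mention; (ii) Lemma~\ref{lem:max_trace}, giving $tr(M^*)\leq D/\gamma$ and the nested connected structure of the max-trace optimum; and (iii) Lemma~\ref{lem:H0_upper}, which perturbs the max-trace dual solution with $y_i=1+x_i/N$ and complementary slackness to get $c^*|_{H_0}\leq \sum_i x_i M^*_{ii}+O(\sqrt{\log(1/\gamma)/\gamma})$. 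Without these the ``upper bound'' step is a restatement of the goal rather than a proof. A smaller but substantive difference: under $H_1$ the paper does \emph{not} plug in the ground-truth indicator matrix. It uses the max-trace optimum $M^*$ as the feasible point and argues (via Theorem~\ref{thm:C_a_Phi} and Lemma~\ref{lem:overlapping}) that for $\gamma=\Theta(\Phi_n^2/\log K_n)$ this $M^*$ contains a $\Theta(1/\Phi_n^2)$ square and hence overlaps the true band on $\Theta(K_n)$ nodes. This choice makes the null and alternative statistics share the identical $N(0,tr(M^*))$ fluctuation term, so the two bounds compare directly; your version (ground truth under $H_1$ versus the null maximum) still works but needs an extra argument that the two random quantities concentrate compatibly. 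Your normalization ($\Psi_v(x_v)=\mu x_v$, signal $K_n\mu_n^2/2$) is fine since the extra factor of $\mu_n$ cancels in the threshold comparison.

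For inseparability you take a genuinely different route. The paper does no fresh minimax computation: it observes that ${\cal C}_{a,\Phi_n}\cap{\cal S}_{K_n}$ contains the band subfamily of width $h_n$ and length $l_n$ used in Theorem~3 of \cite{Castro11}, imports that inseparability result verbatim, and translates $(h_n,l_n)$ into $(K_n,\Phi_n)$ via $l_nh_n=K_n$ and $h_n/K_n=\Phi_n$ (Lemma~\ref{lem_inseparability}). Your proposed $\chi^2$/second-moment bound over a prior on the alternative class can in principle deliver the same conclusion, but only if you restrict the prior to a hard subfamily --- which a lower bound always permits --- since controlling $\mathbb{E}\exp(\mu_n^2|S\cap S'|)$ over the full conductance-constrained family of connected subgraphs is both unnecessary and substantially harder than the containment argument the paper uses. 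If you pursue your route, say explicitly that you restrict to bands (or another structured subfamily) before computing the overlap moment; otherwise that step is likely to stall.
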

%Next, we threshold the maximum value of $\ell_a(x)$ to deal with the completely agnostic case.
%\begin{lem}
%The completely agnostic test $\max_{a \in V_n}\ell_{a} (x) \decide{H_1}{H_0} 0$ is separable if $\mu K_n \sqrt{\Phi_n}/\log(n) \rightarrow \infty$
%\end{lem}
%\vspace{-0.15in}
Our inseparability bound matches existing results on 2-D Lattice \& Line Graphs by plugging in appropriate values for $\Phi$ for the cases considered in \cite{Castro08,Castro11}. The lower bound is obtained by specializing to a collection of ``non-decreasing band'' subgraphs.%, and can be loose in our scenario.
Yet LMIT$_{a,\gamma}$ and LMIT$_A$ is able to achieves the lower bound within a logarithmic factor.
Furthermore, our analysis extends beyond Poisson \& Gaussian models and applies to general graph structures and models.
The main reason is that our LMIT analysis is fairly general and provides an observation-dependent bound through convex duality. We briefly describe it here. Consider functions $\ell_S(x)$ that are positive, separable and bounded for simplicity. By establishing primal feasibility that the subgraph $S \in {\cal C}_{LMI}(a,\gamma)$ for a suitably chosen $\gamma$, we can obtain a lower bound for the alternative hypothesis $H_1$ and show that $E_{H_1} \left ( \max_{M \in {\cal C}_{LMI}(a,\gamma)} \ell_M(x) \right )\geq E_{H_1}\left ( \sum_{v \in S} \ell_S(x_v)\right )$. On the other hand for the null hypothesis we can show that, $E_{H_0}\left (\max_{M \in {\cal C}_{LMI}(a,\gamma)} \ell_M(x) \right )\leq E_{H_0}\left ( \sum_{v \in B(a,\Theta(\sqrt{\gamma}))} \ell_S(x_v)\right )$.
Here $E_{H_1}$ and $E_{H_0}$ denote expectations with respect to alternative and null hypothesis and $B(a,\Theta(\sqrt{\gamma}))$ is a ball-like thick shape centered at $a \in V$ with radius $\Theta(\sqrt{\gamma})$. Our result then follows by invoking standard concentration inequalities.
We can extend our analysis to the non-separable case such as correlated models because of the linear objective form in Eq.~\ref{eq:quadratic}.

%%%%%%%%%%%%%%%%%%%%%%%%%%%%%%%%%%%%%%%%%%%%%%%%%%%%%%%%%%%%%%%%%%%%%%
\section{Experiments} \label{sec:experiment}
%%%%%%%%%%%%%%%%%%%%%%%%%%%%%%%%%%%%%%%%%%%%%%%%%%%%%%%%%%%%%%%%%%%%%%
We present several experiments to highlight key properties of LMIT and to compare LMIT against other state-of-art parametric and non-parametric tests on synthetic and real-world data. We have shown that agnostic LMIT is near minimax optimal in terms of asymptotic separability. However, separability is an asymptotic notion and only characterizes the special case of zero false alarms (FA) and missed detections (MD), which is often impractical. It is unclear how LMIT behaves with finite size graphs when FAs and MDs are prevalent. In this context incorporating priors could indeed be important. Our goal is to highlight how shape prior (in terms of thick, thin, or arbitrary shapes) can be incorporated in LMIT using the parameter $\gamma$ to obtain better AUC performance in finite size graphs. Another goal is to demonstrate how LMIT behaves with denser graph structures.

From the practical perspective, our main step is to solve the following SDP problem:
\begin{equation*}
  \max_{M}: \,\, \sum_i y_i M_{ii}  \,\,\,\,\,\,\,\,\,\,\,\,  s.t. \,\,\,\, M\in \mathcal{C}_{LMI}(a,\gamma), \,\,\,\, tr(M)\leq K
\end{equation*}
We use standard SDP solvers which can scale up to $n\sim 1500$ nodes for sparse graphs like lattice and $n\sim 300$ nodes for dense graphs with $m=\Theta(n^2)$ edges.

%
%can be incorporated is directly related to shape. While asymptotically we have shown that LMIT achieves optimality agnostically, it is still possible to benefit from incorporating knowledge of shape in terms of AUC performance for finite size graphs. Similarly, our simulations would help us qualitatively understand how LMIT performs with denser graphs.
%
%demonstrate the superiorities of our LMI test (LMIT) over other approaches.
%We first conduct detection for the independent Gaussian model (see Sec.2) to investigate some key properties of our LMIT.
%It is well known that the signal strength $\mu$ and the size of the anomalous cluster $|S|$ is important to detection performance.
%Here we study the impacts of the shape of anomalous cluster and the graph structure on detection performance, and show that LMIT captures these factors through its internal conductance parameter $\gamma$.
%We then report AUC performances on various graphs of LMIT against other possible approaches.
%
%
\begin{figure*}[!tb]\label{fig:shapes_demo}
\begin{centering}
\begin{minipage}[t]{.24\textwidth}
\includegraphics[width = 1\textwidth]{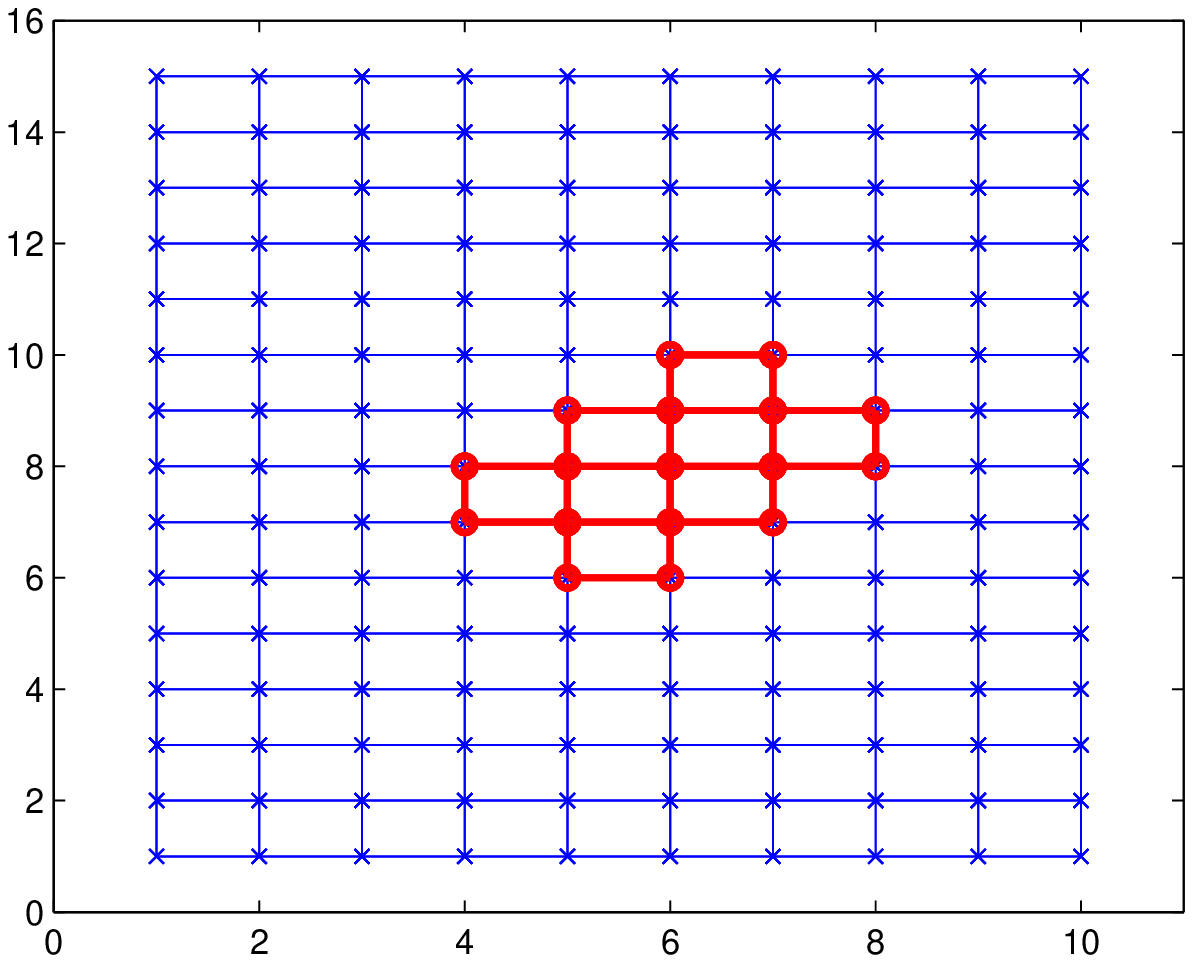}
% \vspace{-5pt}
\makebox[3.5 cm]{\small (a) Thick shape}
\end{minipage}
\begin{minipage}[t]{.24\textwidth}
\includegraphics[width = 1\textwidth]{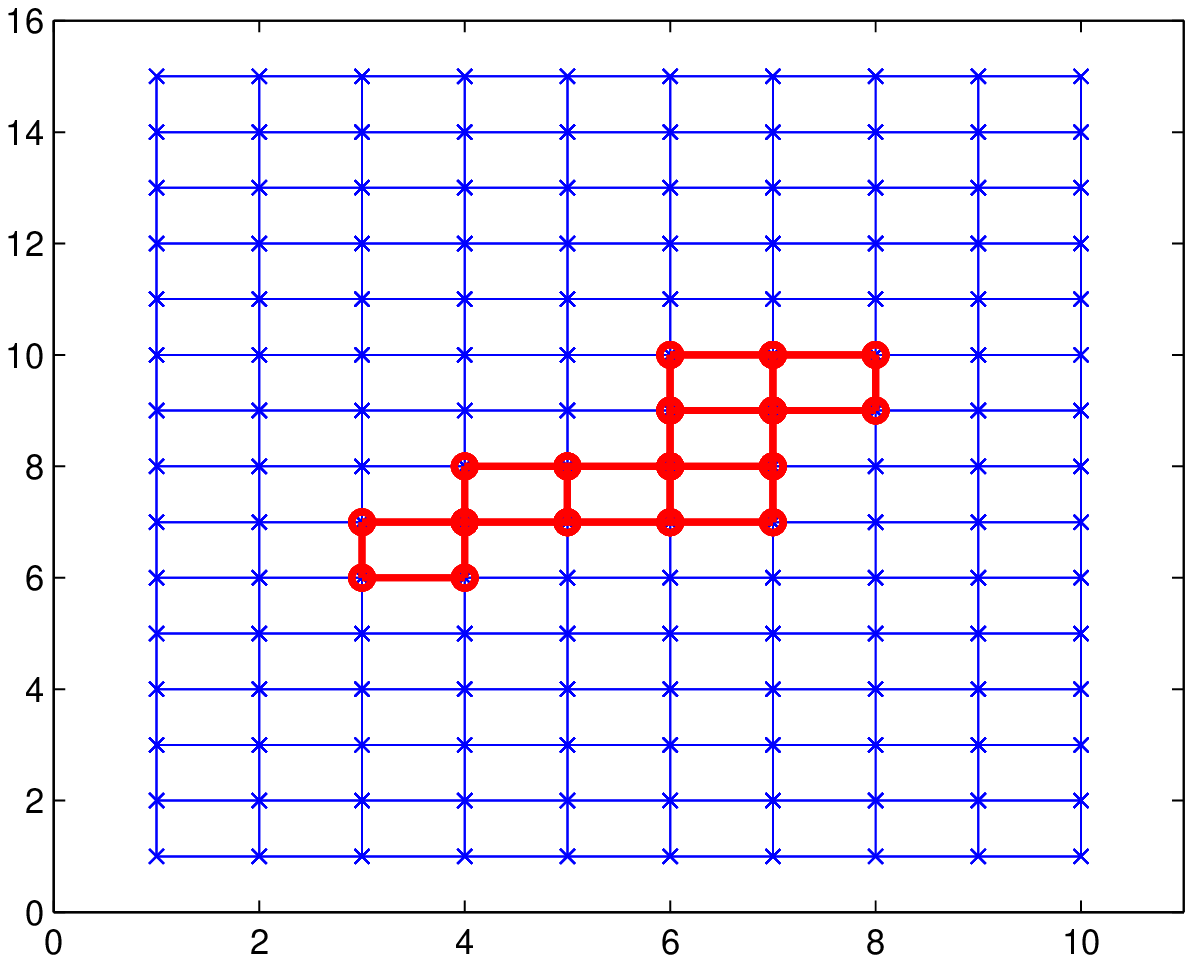}
% \vspace{-5pt}
\makebox[3.5 cm]{\small (b) Thin shape }
\end{minipage}
\begin{minipage}[t]{.24\textwidth}
\includegraphics[width = 1\textwidth]{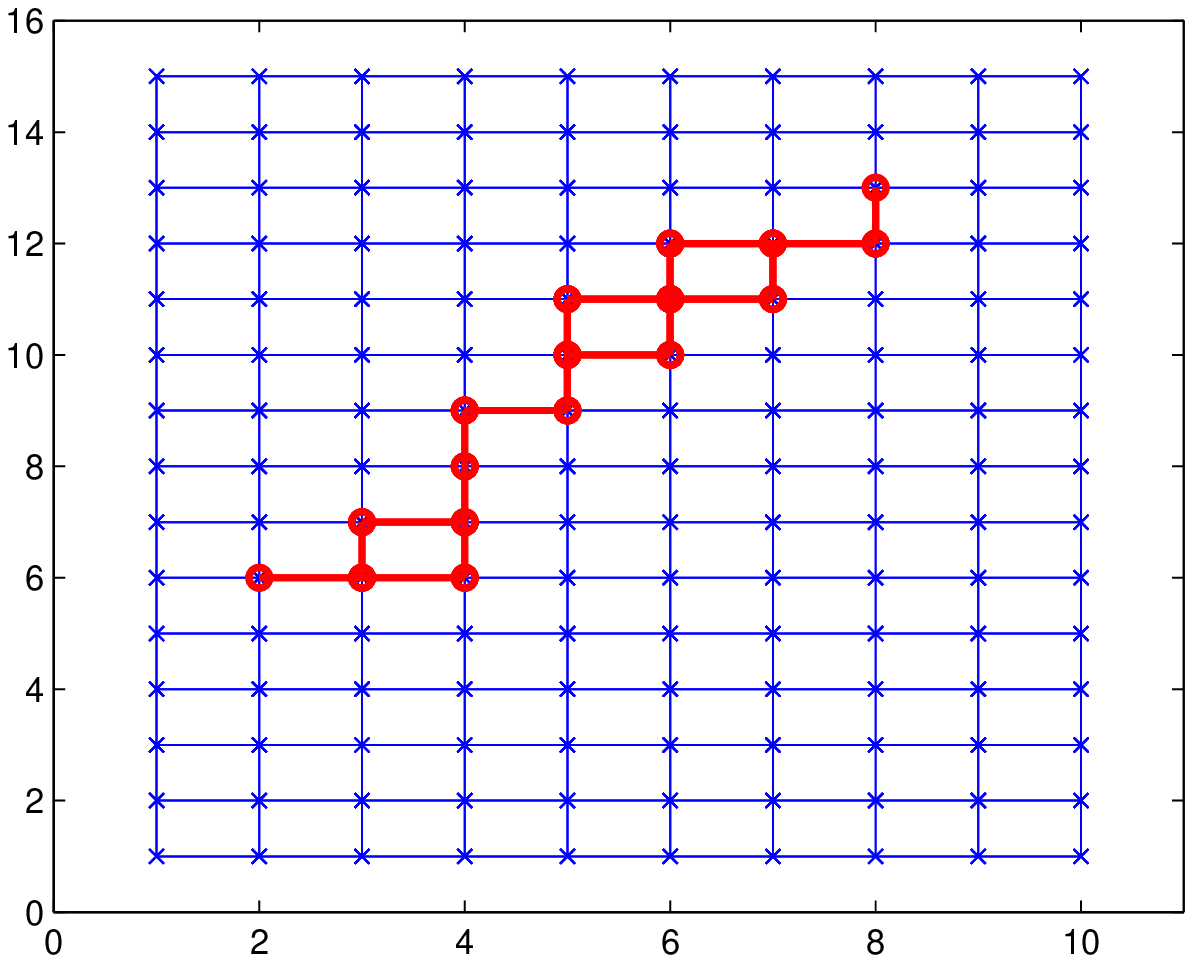}
\makebox[3.5 cm]{\small (c) Snake shape }
\end{minipage}
\begin{minipage}[t]{.24\textwidth}
\includegraphics[width = 1\textwidth]{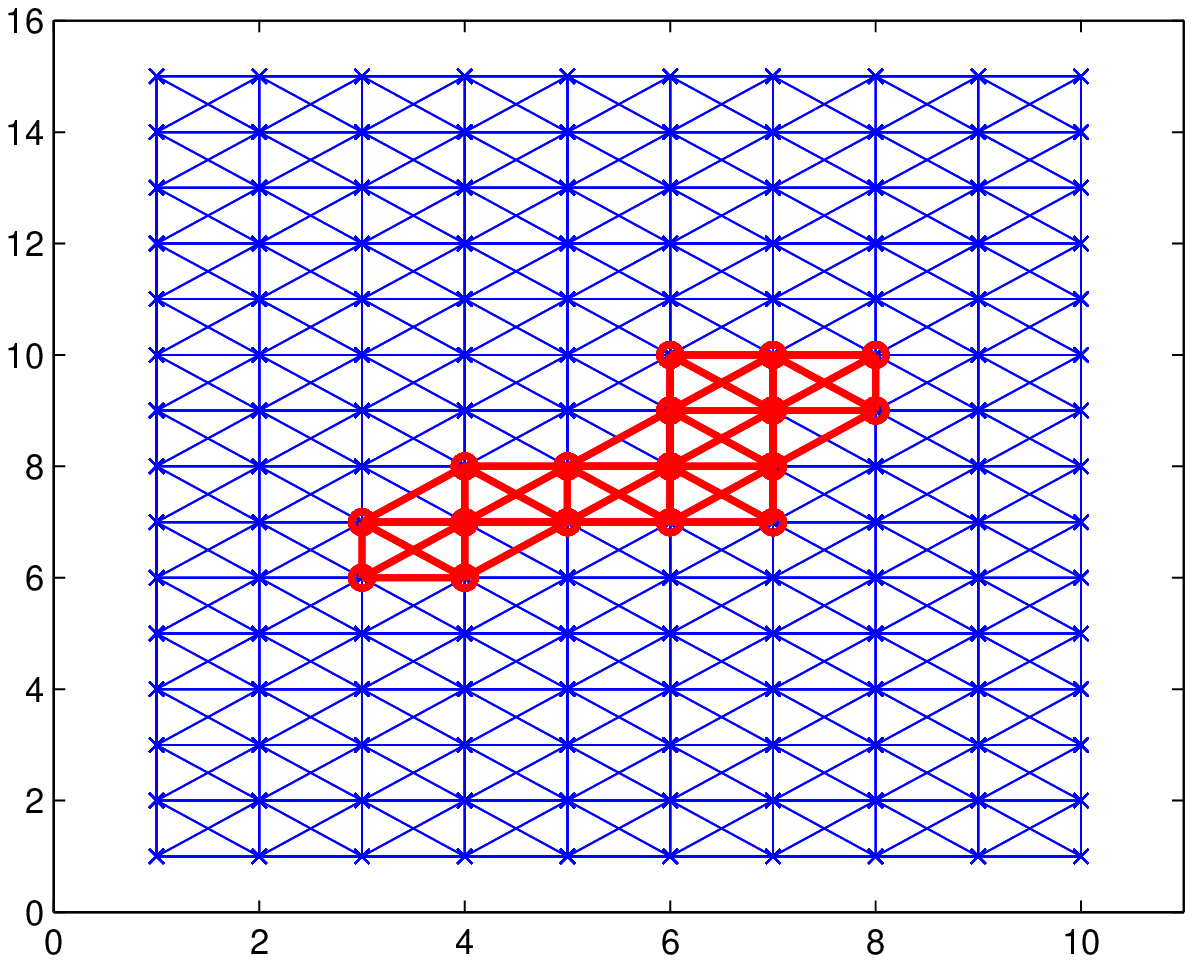}
\makebox[3.5 cm]{\small (d) Thin shape(8-neighbors) }
\end{minipage}
\caption{\small Various shapes of ground-truth anomalous clusters on a fixed 15$\times$10 lattice. Anomalous cluster size is fixed at 17 nodes. (a) shows a thick cluster with a large internal conductance. (b) shows a relatively thinner shape. (c) shows a snake-like shape which has the smallest internal conductance. (d) shows the same shape of (b), with the background lattice more densely connected. }
\end{centering}
\vspace*{-0.2in}
\end{figure*}

To understand the impact of shape we consider the test LMIT$_{a,\gamma}$ for Gaussian model and manually vary $\gamma$.
On a 15$\times$10 lattice we fix the size (17 nodes) and the signal strength $\mu\sqrt{|S|}=3$, and consider three different shapes (see Fig.~1) for the alternative hypothesis. For each shape we synthetically simulate 100 null and 100 alternative hypothesis and plot AUC performance of LMIT as a function of $\gamma$. We observe that the optimum value of AUC for thick shapes is achieved for large $\gamma$ and small $\gamma$ for thin shape confirming our intuition that $\gamma$ is a good surrogate for shape. In addition we notice that thick shapes have superior AUC performance relative to thin shapes, again confirming intuition of our analysis.
\begin{figure*}[!tb]\label{fig:AUC_demo}
\begin{centering}
\begin{minipage}[t]{.48\textwidth}
\includegraphics[width = 1\textwidth]{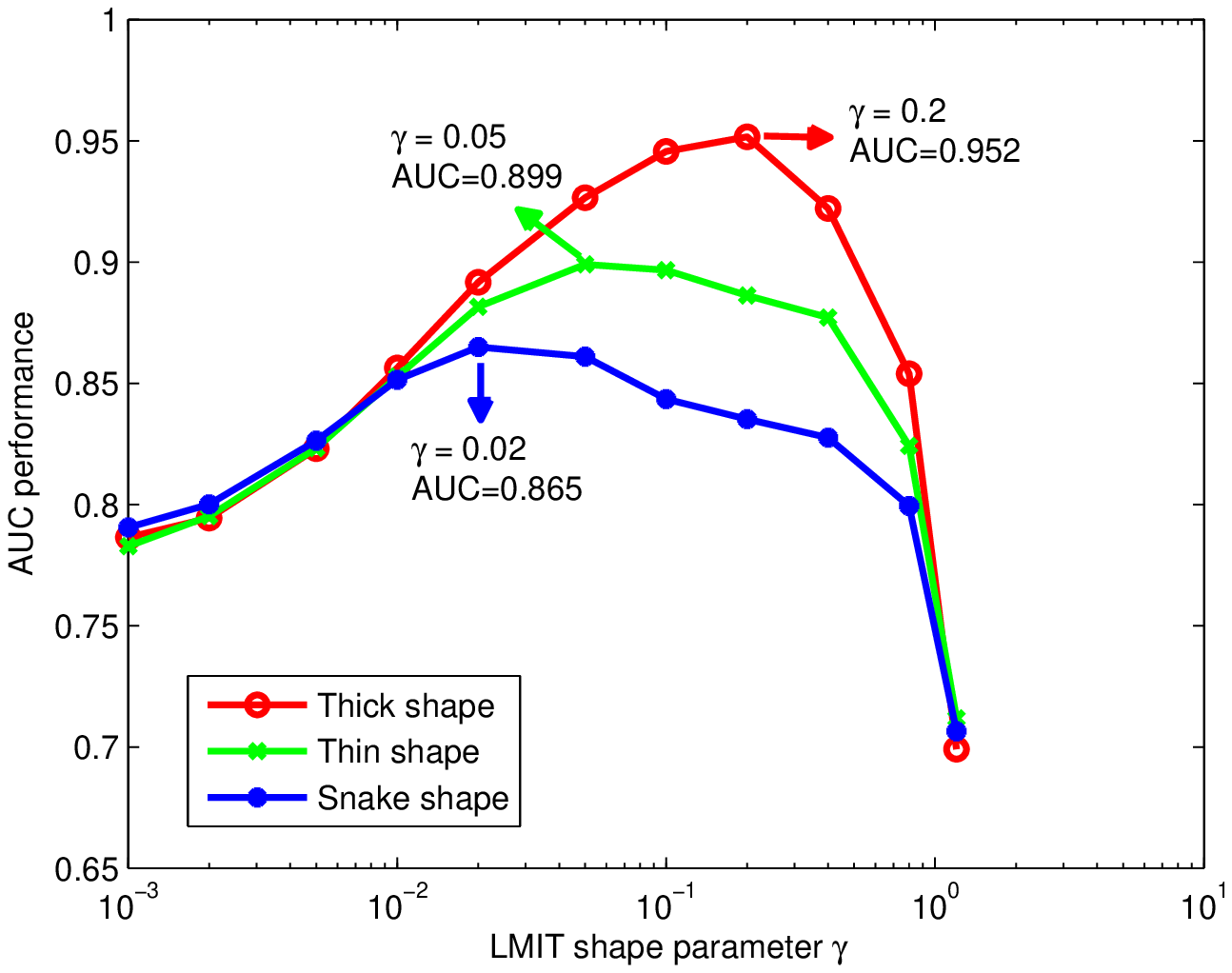}
\makebox[6.5 cm]{\small (a) AUC with various shapes}
\end{minipage}
\begin{minipage}[t]{.48\textwidth}
\includegraphics[width = 1\textwidth]{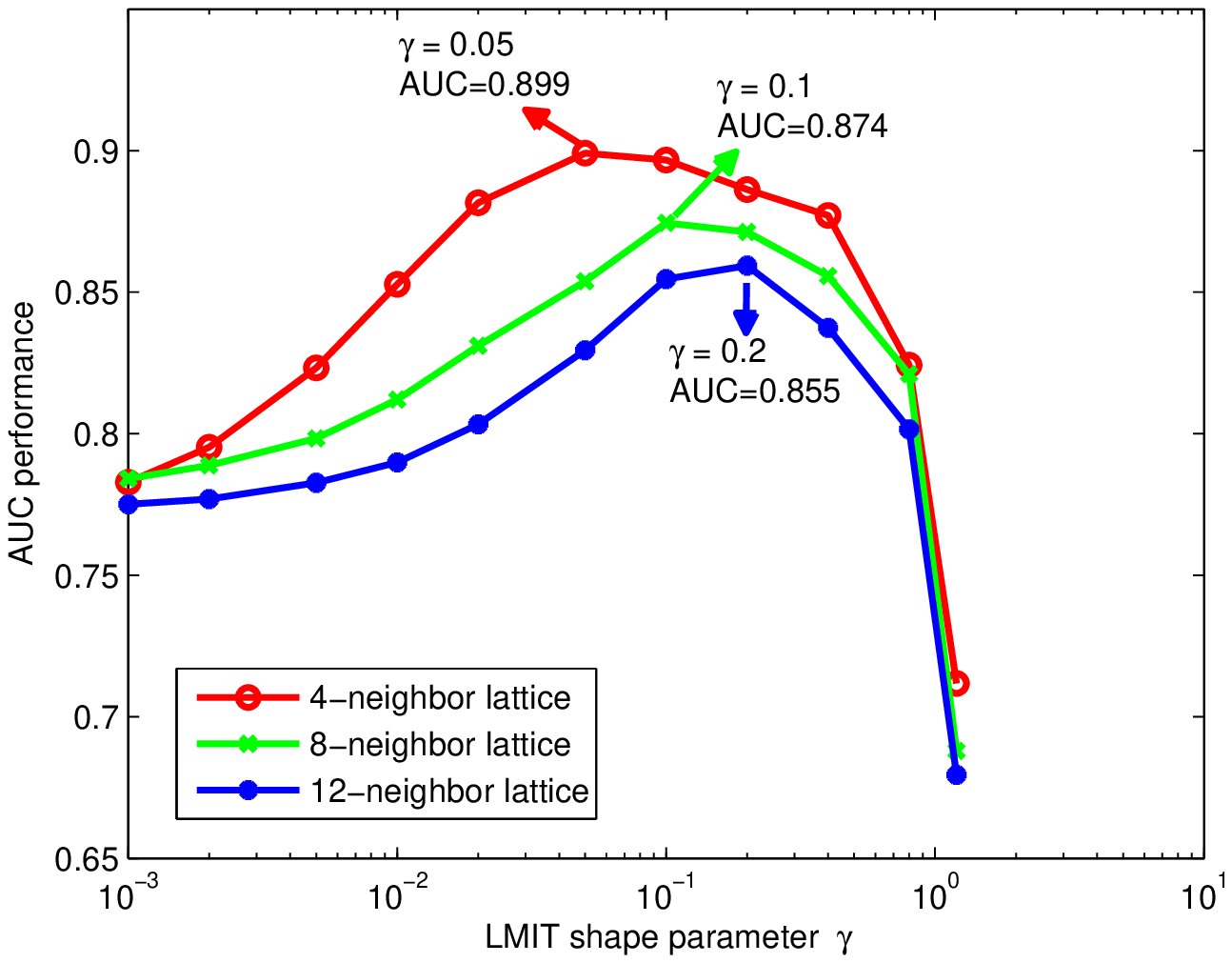}
\makebox[6.5 cm]{\small (b) AUC with different graph structures }
\end{minipage}
\caption{\small (a) demonstrates AUC performances with fixed lattice structure, signal strength $\mu$ and size (17 nodes), but different shapes of ground-truth clusters, as shown in Fig.1. (b) demonstrates AUC performances with fixed signal strength $\mu$, size (17 nodes) and shape (Fig.1(b)), but different lattice structures.}
\end{centering}
\vspace*{-0.25in}
\end{figure*}

To understand the impact of dense graph structures we consider performance of LMIT with neighborhood size. On the lattice of the previous experiment we vary neighborhood by connecting each node to its 1-hop, 2-hop, and 3-hop neighbors to realize denser structures with each node having 4, 8 and 12 neighbors respectively. Note that all the different graphs have the same vertex set. This is convenient because we can hold the shape under the alternative fixed for the different graphs. As before we generate 100 alternative hypothesis using the thin set of the previous experiment with the same mean $\mu$ and 100 nulls. The AUC curves for the different graphs highlight the fact that higher density leads to degradation in performance as our intuition with complete graphs suggests. We also see that as density increases a larger $\gamma$ achieves better performance confirming our intuition that as density increases the internal conductance of the shape increases.

%\noindent
%{\bf LMIT captures graph structure impact:} \\
%Next we fix the lattice and anomalous cluster as in Fig.\ref{fig:shapes_demo}(b), but vary the graph structure by linking each node $v$ to its 4-neighbor: $\{u\in V: |x_u-x_v|^2+|y_u-y_v|^2 \leq 1\}$ (Fig.\ref{fig:shapes_demo}(b)), 8-neighbor: $\{u\in V: |x_u-x_v|^2+|y_u-y_v|^2 \leq 2\}$ (Fig.\ref{fig:shapes_demo}(d)), and 12-neighbor: $\{u\in V: |x_u-x_v|^2+|y_u-y_v|^2\leq 4\}$ (omitted). Normalized SNR is fixed: $\mu\sqrt{|S|}=3$. Fig.\ref{fig:AUC_demo}(b) shows the AUC performance. \\
%1) Graph structure impacts the detection performance: optimial AUC 0.8991 for 4-neighbor lattice, 0.8745 for 8-neighbor lattice and 0.8554 for 12-neighbor lattice. The extreme will be the completely connected graph which will have a much worse detection performance than a sparse graph. \\
%2) LMIT achieves AUC optimum at $\gamma\approx 0.05$ for 4-neighbor lattice, at $\gamma\approx 0.1$ for 8-neighbor lattice and at $\gamma\approx 0.2$ for 12-neighbor lattice. Intuitively, adding edges for a fixed cluster of nodes leads to a larger inner conductance. Choosing a relatively large/small $\gamma$ helps detection if one has apriori knowledge that the anomalous cluster probably lies in a densely/sparsely connected part of the graph.

\begin{table}[b]
\vspace*{-0.2in}
\caption{\small AUC performance of various algorithms on a 300-node lattice, a 200-node RGG, and the county map graph. On all three graphs LMIT significantly outperforms the other tests consistently for all SNR levels.}
%\vspace{-10pt}
\begin{center}
\begin{tabular}{|c||c|c|c||c|c|c||c|c|c|}
  \hline
  % after \\: \hline or \cline{col1-col2} \cline{col3-col4} ...
%  \multirow{2}{*}{\multicolumn{2}{c}{Error Rates(\%)}}
  \multicolumn{1}{|c||}{\multirow{2}{*}{SNR}}  &   \multicolumn{3}{c||}{lattice ($\mu\sqrt{|S|}/\sigma$)} &   \multicolumn{3}{c||}{RGG ($\mu\sqrt{|S|}/\sigma$)} &   \multicolumn{3}{c|}{map ($\lambda_1/\lambda_0$)}\\
  \cline{2-10}
  \multicolumn{1}{|c||}{}  & 1.5 & 2 & 3            & 1.5 & 2 & 3           & 1.1 & 1.3 & 1.5 \\
  \hline\hline
%  \multirow{6}{*}{methods}
     LMIT       & {\bf 0.728} & {\bf 0.780} & {\bf 0.882}    & {\bf 0.642} & {\bf 0.723} & {\bf 0.816}    & {\bf 0.606} & {\bf 0.842} & {\bf 0.948} \\
     SA         & 0.672 & 0.741 & 0.827                      & 0.627 & 0.677 & 0.756                      & 0.556 & 0.744 & 0.854 \\
     Rect(NB)   & 0.581 & 0.637 & 0.748                      & 0.584 & 0.632 & 0.701                      & 0.514 & 0.686 & 0.791 \\
     MaxT       & 0.531 & 0.547 & 0.587                      & 0.529 & 0.562 & 0.624                      & 0.525 & 0.559 & 0.543 \\
     AvgT       & 0.565 & 0.614 & 0.705                      & 0.545 & 0.623 & 0.690                      & 0.536 & 0.706 & 0.747 \\
  \hline
\end{tabular}
\end{center}
\label{tab:AUC}
\end{table}

%\noindent
%{\bf Comparison of LMIT against other methods:} \\
In this part we compare LMIT against existing state-of-art approaches on a 300-node lattice, a 200-node random geometric graph (RGG), and a real-world county map graph (129 nodes) (see Fig.\ref{fig:lattice_RGG_demo},\ref{fig:irregular_general}).
We incorporate shape priors by setting $\gamma$ (internal conductance) to correspond to thin sets. While this implies some prior knowledge, we note that this is not necessarily the optimal value for $\gamma$ and we are still agnostic to the actual ground truth shape (see Fig.\ref{fig:lattice_RGG_demo},\ref{fig:irregular_general}).
For the lattice and RGG we use the elevated-mean Gaussian model. Following \cite{patil03} we adopt an elevated-rate independent Poisson model for the county map graph. Here $N_i$ is the population of county, $i$. Under null the number of cases at county $i$, follows a Poisson distribution with rate $N_i \lambda_0$ and under the alternative a rate $N_i\lambda_1$ within some connected subgraph. We assume $\lambda_1>\lambda_0$ and apply a weighted version of LMIT of Eq.~\ref{eq:LMIT_pois}, which arises on account of differences in population. We compare LMIT against several other tests, including simulated annealing (SA) \cite{Duczmal06a}, rectangle test (Rect), nearest-ball test (NB), and two naive tests: maximum test (MaxT) and average test (AvgT). SA is a non-parametric test and works by heuristically adding/removing nodes toward a better normalized GLRT objective while maintaining connectivity. Rect and NB are parametric methods with Rect scanning rectangles on lattice and NB scanning nearest-neighbor balls around different nodes for more general graphs (RGG and county-map graph). MaxT \& AvgT are often used for comparison purposes. MaxT is based on thresholding the maximum observed value while AvgT is based on thresholding the average value.

We observe that uniformly MaxT and AvgT perform poorly. This makes sense; It is well known that MaxT works well only for alternative of small size while AvgT works well with relatively large sized alternatives ~\cite{Addaria10}. Parametric methods (Rect/NB) performs poorly because the shape of the ground truth under the alternative cannot be well-approximated by Rectangular or Nearest Neighbor Balls. Performance of SA requires more explanation. One issue could be that SA does not explicitly incorporate shape and directly searches for the best GLRT solution. We have noticed that this has the tendency to amplify the objective value of null hypothesis because SA exhibits poor ``regularization'' over the shape. On the other hand LMIT provides some regularization for thin shape and does not admit arbitrary connected sets.
%Tab.\ref{tab:AUC} shows the AUC results.
%LMIT performs significantly better than all the other methods, consistently on all the three graphs for all SNR levels.
%The naive MaxT and AvgT usually performs the worst.
%Rect(NB) only searches for regular thick shapes, which results in poor performance when the anomalous cluster has a thin shape.
%SA works more like enumeration, and targets at the exact GLRT solution, with no regard for the shape of the anomalous cluster.
%While this may not be an issue when there does exist an anomalous cluster, i.e. $H_1$ tests, the shape information really helps a lot for $H_0$ tests.
%For example, for Fig.\ref{fig:AUC_demo}(a), SA behaves like using a very small $\gamma$ for LMIT. Therefore even with the prior knowledge that the anomalous cluster is of thick shape (red curve), with $\gamma\approx 0.02$ LMIT can only achieve AUC around 0.89. On the other hand, if we increase the shape parameter $\gamma$ to adapt to the thick shape, LMIT can actually achieve AUC at 0.95. This is because for $H_0$ tests, a large $\gamma$ restricts the shape to be thick, significantly reducing the false alarm rate, leading to much better detection performance.
%This also implies that sometimes GLRT may be far from optimal, because it cannot take shape into account, thus suffering from false alarms.

\newpage

\bibliographystyle{unsrt}
\bibliography{ACD_bib}

\begin{thebibliography}{10}

\bibitem{patil03}
G.~P. Patil and C.~Taillie.
\newblock Geographic and network surveillance via scan statistics for critical
  area detection.
\newblock In {\em Statistical Science}, volume 18(4), pages 457--465, 2003.

\bibitem{Castro08}
E.~Arias-Castro, E.~J. Candes, H.~Helgason, and O.~Zeitouni.
\newblock Searching for a trail of evidence in a maze.
\newblock In {\em The Annals of Statistics}, volume 36(4), pages 1726--1757,
  2008.

\bibitem{Glaz02}
J.~Glaz, J.~Naus, and S.~Wallenstein.
\newblock {\em Scan Statistics}.
\newblock Springer, New York, 2001.

\bibitem{Duczmal06a}
L.~Duczmal and R.~Assuncao.
\newblock A simulated annealing strategy for the detection of arbitrarily
  shaped spatial clusters.
\newblock In {\em Computational Statistics and Data Analysis}, volume~45, pages
  269--286, 2004.

\bibitem{Duczmal06b}
M.~Kulldorff, L.~Huang, L.~Pickle, and L.~Duczmal.
\newblock An elliptic spatial scan statistic.
\newblock In {\em Statistics in Medicine}, volume~25, 2006.

\bibitem{Priebe06}
C.~E. Priebe, J.~M. Conroy, D.~J. Marchette, and Y.~Park.
\newblock Scan statistics on enron graphs.
\newblock In {\em Computational and Mathematical Organization Theory}, 2006.

\bibitem{lad12}
V.~Saligrama and M.~Zhao.
\newblock Local anomaly detection.
\newblock In {\em Artificial Intelligence and Statistics}, volume~22, 2012.

\bibitem{vlad12}
V.~Saligrama and Z.~Chen.
\newblock Video anomaly detection based on local statistical aggregates.
\newblock {\em 2013 IEEE Conference on Computer Vision and Pattern
  Recognition}, 0:2112--2119, 2012.

\bibitem{Qian14}
J.~Qian and V.~Saligrama.
\newblock Connected sub-graph detection.
\newblock In {\em International Conference on Artificial Intelligence and
  Statistics (AISTATS)}, 2014.

\bibitem{Castro05}
E.~Arias-Castro, D.~Donoho, and X.~Huo.
\newblock Near-optimal detection of geometric objects by fast multiscale
  methods.
\newblock In {\em IEEE Transactions on Information Theory}, volume 51(7), pages
  2402--2425, 2005.

\bibitem{Addaria10}
Addario-Berry, N.~Broutin, L.~Devroye, and G.~Lugosi.
\newblock On combinatorial testing problems.
\newblock In {\em The Annals of Statistics}, volume 38(5), pages 3063--3092,
  2010.

\bibitem{Castro11}
E.~Arias-Castro, E.~J. Candes, and A.~Durand.
\newblock Detection of an anomalous cluster in a network.
\newblock In {\em The Annals of Statistics}, volume 39(1), pages 278--304,
  2011.

\bibitem{Sharpnack12b}
J.~Sharpnack, A.~Rinaldo, and A.~Singh.
\newblock Changepoint detection over graphs with the spectral scan statistic.
\newblock In {\em International Conference on Artificial Intelligence and
  Statistics}, 2013.

\bibitem{Sharpnack13}
J.~Sharpnack, A.~Krishnamurthy, and A.~Singh.
\newblock Near-optimal anomaly detection in graphs using lovasz extended scan
  statistic.
\newblock In {\em Neural Information Processing Systems}, 2013.

\bibitem{ermis10}
Erhan~Baki Ermis and Venkatesh Saligrama.
\newblock Distributed detection in sensor networks with limited range
  multimodal sensors.
\newblock {\em {IEEE} Transactions on Signal Processing}, 58(2):843--858, 2010.

\bibitem{Cross1983}
G.~R. Cross and A.~K. Jain.
\newblock Markov random field texture models.
\newblock In {\em IEEE Transactions on Pattern Analysis and Machine
  Intelligence}, volume~5, pages 25--39, 1983.

\bibitem{Bailly2011}
M.~Bailly-Bechet, C.~Borgs, A.~Braunstein, J.~T. Chayes, A.Dagkessamanskaia,
  J.~Francois, and R.~Zecchina.
\newblock Finding undetected protein associations in cell signaling by belief
  propagation.
\newblock In {\em Proceedings of the National Academy of Sciences (PNAS)},
  volume 108, pages 882--887, 2011.

\bibitem{Chung96}
F.~Chung.
\newblock {\em Spectral graph theory}.
\newblock American Mathematical Society, 1996.

\bibitem{Boyd04}
S.~Boyd and L.~Vandenberghe.
\newblock {\em Convex Optimization}.
\newblock Cambridge University Press, 2004.

\end{thebibliography}

%\sloppy
%\printbibliography

\newpage

\section*{Appendix: Proofs of Theorems}

\begin{figure}[htb]
\begin{centering}
\begin{minipage}[t]{.44\textwidth}
\includegraphics[width=1\textwidth]{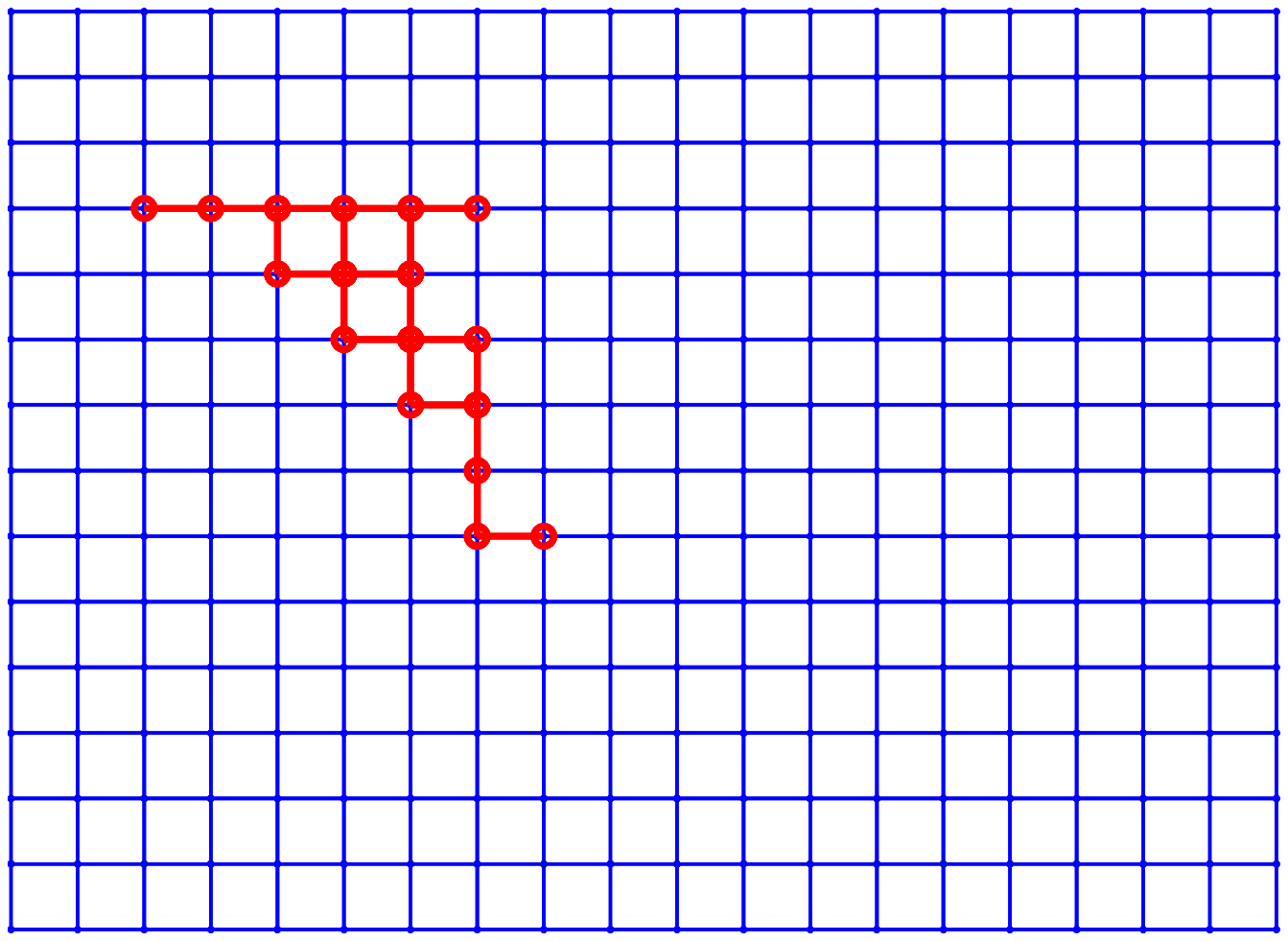}
\end{minipage}
\begin{minipage}[t]{.44\textwidth}
\includegraphics[width=1\textwidth]{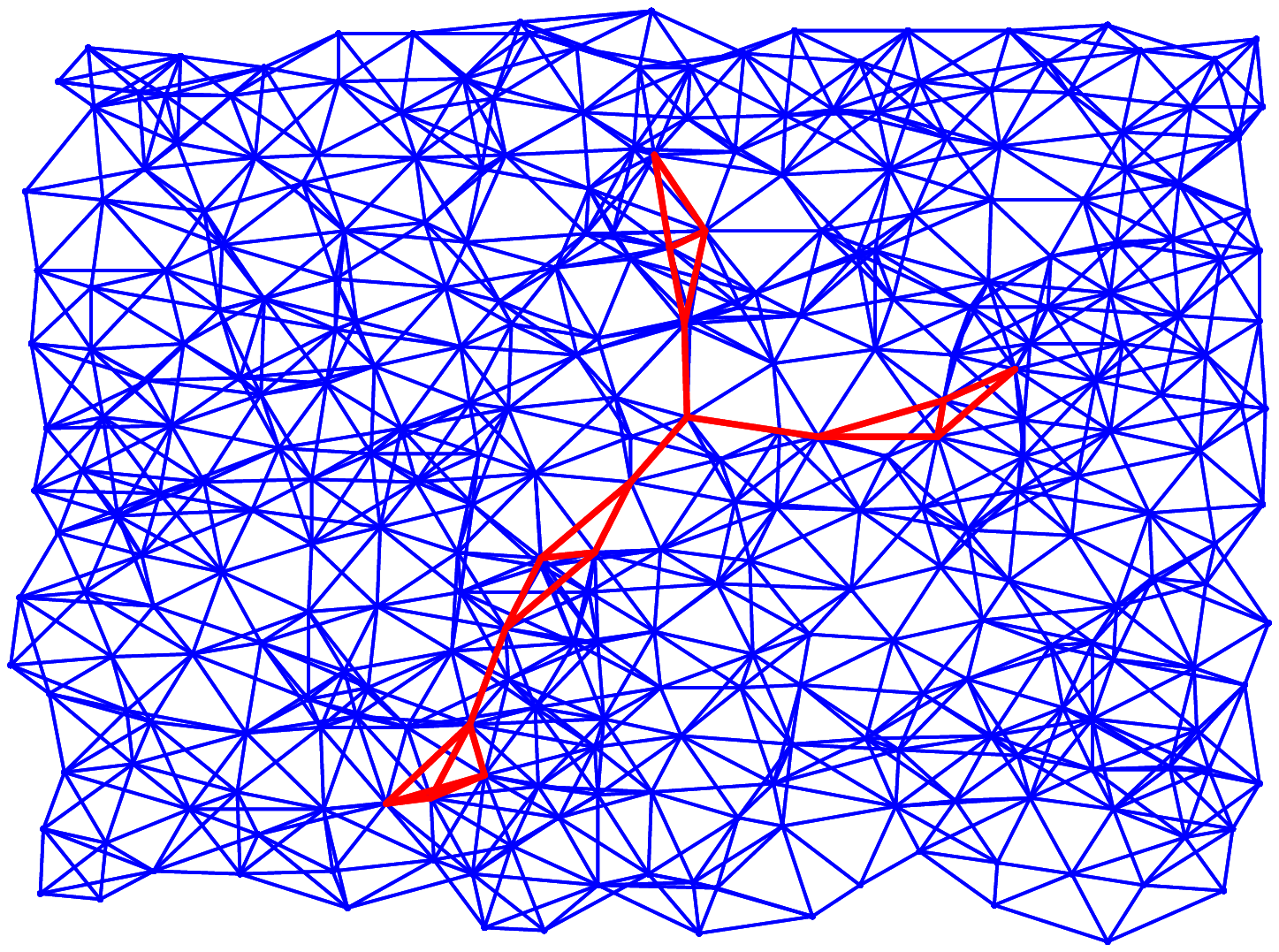}
\end{minipage}p
\caption{ 300-node lattice and 200-node RGG with 17-node anomalous cluster. \label{fig:lattice_RGG_demo} }
\end{centering}
\end{figure}

\begin{figure}[htb]
\begin{centering}
\begin{minipage}[t]{.6\textwidth}
\includegraphics[width=1\textwidth]{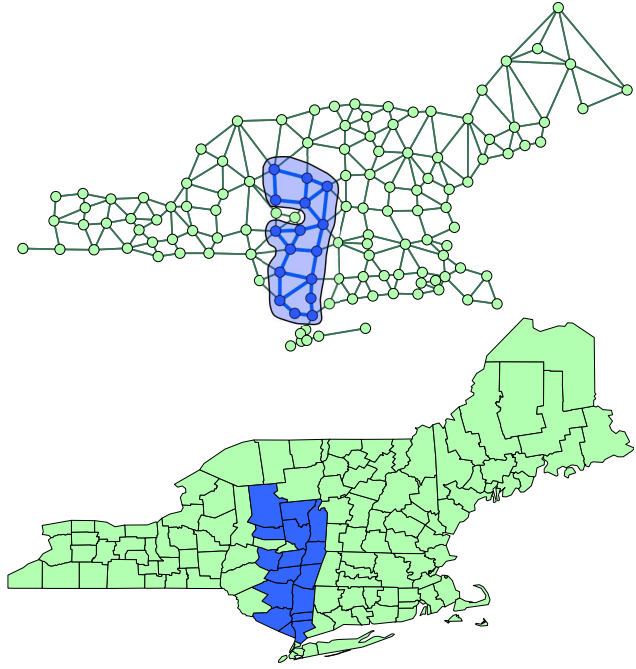}
\end{minipage}
\caption{ County map, graph representation and ground truth anomalous cluster for Table 1. }
\label{fig:irregular_general}
\end{centering}
\end{figure}

%{\bf Proof of Lemma 1}:
%\begin{proof}
%\begin{eqnarray*}
%% \nonumber to remove numbering (before each equation)
%  \ell_S(x) &=& \log \frac{\phi_S(x^n) }{ \phi_0(x^n)} = \log{ \frac{\phi_S(x_{S_1}) \phi_S(x_{S^c_1}|x_{S_1}) }{ \phi_0(x_{S_1}) \phi_0(x_{S^c_1}|x_{S_1}) } } \\
%   &=& \log{ \frac{\phi_S(x_{S_1}) \phi_S(x_{S^c_1}|x_{S_1 \cap S^c}) }{ \phi_0(x_{S_1}) \phi_0(x_{S^c_1}|x_{S_1\cap S^c}) } } \\
%   &=& \log{ \frac{\phi_S(x_{S_1}) }{ \phi_0(x_{S_1}) } } = \ell_S(x_{S_1}).
%\end{eqnarray*}
%The second row comes from the Markovianity property, and the third row from the Mask property.
%\end{proof}
%
%Lemma 2 is a basic property of the Laplacian of graph.

%%%%%%%%%%%%%%%%%%%%%%%%%%%%%%%%%%%%%%%%%%%%%%%%%%%%%%%%%%%%%%%%%%%%%%%%%%%%%%%%%%%%%%%%%%%%%%%%%%%
%%%%%%%%%%%%%%%%%%%%%%%%%%%%%%%%%%%%%%%%%%%%%%%%%%%%%%%%%%%%%%%%%%%%%%%%%%%%%%%%%%%%%%%%%%%%%%%%%%%

{\bf Proof of Theorem \ref{thm:connectivity}:}
\begin{proof}
For the first part we show $\forall a\in V, \gamma>0$, ${\cal C}_{LMI}(a,\gamma) \subseteq {\cal C}$.
Let $H=(S,F_S)\in {\cal C}_{LMI}(a,\gamma)$ be a connected subgraph.
Assume on the contrary that $H$ is disconnected: $S=C\cup \bar{C}$, where $\bar{C}=S-C$. Let $|S|=k,|C|=k_1,|\bar{C}|=k_2$. W.l.o.g. assume $a=1$, i.e. $M_{11}=1$, and $C$ consists of nodes $\{1,2,...,k_1\}$.

Let $Q(M;\gamma)=L(A\circ M) - \gamma L(M)$.
Consider the $k\times k$ sub-matrix $Q_S$ of $Q$ corresponding to $S$, since the rest part are all 0.
Now we use the vector $g = [\textbf{1}_{k_1}; -\textbf{1}_{k_2}]$ to hit $Q_S$:
\begin{equation}
  g' Q_S g = g' L_S(A_S\circ M_S) g - \gamma g' L_S(M_S) g \geq 0.
\end{equation}
Note that $A_S$ has the form:
\begin{equation}
  A_S = \left(
              \begin{array}{cc}
                A_C & 0 \\
                0 & A_{\bar{C}} \\
              \end{array}
            \right),
\end{equation}
where the off-diagonal block is zero because by assumption $C$ and $\bar{C}$ is disconnected. Then:
\begin{equation}
  L_S(A_S\circ M_S) = Diag\left( (A_S\circ M_S)\textbf{1}_n\right) - (A_S\circ M_S) = \left(
              \begin{array}{cc}
                \tilde{L}_C & 0 \\
                0 & \tilde{L}_{\bar{C}} \\
              \end{array}
            \right),
\end{equation}
where $\tilde{L}_C$ is the Laplacian matrix of $C$ weighted by $M_C$. Notice it still holds that $\tilde{L}_C \textbf{1}_{k_1}=0$. This means $g' L_S(A_S\circ M_S) g = 0$.

On the other hand, let $L_S(M_S)$ be:
\begin{equation}
  L_S(M_S) = Diag\left( M_S\textbf{1}_n\right) - M_S = \left(
              \begin{array}{cc}
                L_1 & L_3 \\
                L_3' & L_2 \\
              \end{array}
            \right).
\end{equation}
Using $g_1 = [\textbf{1}_{k_1}; 0]$ and $g_2 = [0; \textbf{1}_{k_2}]$ to hit $Q_S$ will yield: $\textbf{1}_{k_1}' L_1 \textbf{1}_{k_1} = 0$ and $\textbf{1}_{k_2}' L_2 \textbf{1}_{k_2} = 0$.
Apparently $g' L_S(M_S) g \geq 0$ due to positive semi-definiteness of Laplacian matrix. If it's strictly positive, proof is done. Otherwise this means $\textbf{1}_{k_1}' L_3 \textbf{1}_{k_2} = 0$.
Note that all entries of $L_3$ are either 0 or negative due to non-negativity of $M_S$. This means $L_3=0$, or equivalently $M_{ij}=0$ for any $i\in C, j\in \bar{C}$. But this can not happen, because $M_{11}=1$ and $M_{1j}\geq 1+M_{jj}-1 = M_{jj}>0$ for any $j\in \bar{C}$. Contradiction! So $S$ is connected.

For the other direction we need to show that any connected subgraph $H=(S,F_S)\subseteq G=(V,E)$ has a corresponding matrix $H \rightleftharpoons M$, such that $M\in {\cal M}^*_a$ and $Q(M;\gamma)\succeq 0$ for some $a\in S$ and $\gamma>0$.

%The proof is almost identical to the proof of Theorem \ref{thm_exact_connectivity}.
Let $M$ be defined as:
\begin{equation*}
  M_{ij} = \left\{    \begin{array}{cc}
                        1 & i\in S, \, j\in S \\
                        0 & \text{otherwise} \\
                      \end{array}
  \right.
\end{equation*}

This $M$ can be viewed as the adjacency matrix corresponding to a complete graph on the node set $S$. So it naturally involves a star graph centered at $a$, and satisfies the linear constraints of $\mathcal{M}^*_a$.

Furthermore, the sub-block corresponding to $S$, $A_S\circ M_S$, is exactly the adjacency matrix of $H$. Since $H=(S,F_S)$ is connected, the second smallest eigenvalue of $L_S(A_S\circ M_s)$ is strictly positive. Notice that on the sub-block, $M_S = \textbf{1}_k \textbf{1}_k'$.
Again by Finsler's Lemma, this means that there exists a $\gamma>0$, such that the LMI holds on the sub-block:
\begin{equation*}
  L_S(A_S \circ M_S) - \gamma L(M_S) \succeq 0
\end{equation*}
%Proof is done.
%
\end{proof}

%%%%%%%%%%%%%%%%%%%%%%%%%%%%%%%%%%%%%%%%%%%%%%%%%%%%%%%%%%%%%%%%%%%%%%%%%%%%%%%%%%%%%%%%%%%%%%%%%%%
%%%%%%%%%%%%%%%%%%%%%%%%%%%%%%%%%%%%%%%%%%%%%%%%%%%%%%%%%%%%%%%%%%%%%%%%%%%%%%%%%%%%%%%%%%%%%%%%%%%

{\bf Proof of Theorem \ref{thm:C_a_Phi}:}
\begin{proof}
For simplicity we provide a proof sketch for rectangle bands on a 2D lattice $G$.
We need to show that for a band $H=(S,F_S)$ belonging to ${\cal C}_{a,\Phi}$, there exists a binary matrix $M\rightleftharpoons H$ such that $L(A\circ M) - \gamma L(M) \succeq 0$, where $\gamma$ depends only on $\Phi$.

Construct the matrix $M$ as follows:
\begin{equation*}
  M_{ii}=\left\{   \begin{array}{cc}
                     1 & i\in S \\
                     0 & otherwise \\
                   \end{array}
    \right. , \,\,\, M_{ij}=\left\{   \begin{array}{cc}
                                        1 & (i,j)\in E_S \,\, \text{or} \,\, i=a \,\, \text{or} j=a \\
                                        0 & otherwise \\
                                      \end{array}
     \right.
\end{equation*}
Apparently $H \rightleftharpoons M$, and $M \in {\cal M}^*_a$.
% We only need to show $Q(M;\gamma)=L(A\circ M) - \frac{\Phi^2}{2D} L(M)\succeq 0$.
W.l.o.g. assume $a=1$, and $S=\{1,2,...,k\}$. We only need to consider the first $k\times k$ sub-block of $Q(M;\gamma)$, denoted by $Q_S(M_S;\gamma)=L(A_S\circ M_S) - \gamma L(M_S)$.
Notice $L(A_S\circ M_S)$ is exactly the unnormalized Laplacian matrix of $H=(S,F_S)$, and $L(M_S)$ is the Laplacian of the union graph of $H$ and $H_{star}$, where $H_{star}$ denote the star graph centered at node $a$.

Let $M_S = A_S\circ M_S + M_\Delta$. $M_\Delta$ is the adjacency matrix of a graph $H_\Delta$, where $H_\Delta$ is obtained from $H_{star}$ by removing those edges connected with the anchor. We rewrite the required inequality:
\begin{equation*}
  Q_S(M_S;\gamma)=L(A_S\circ M_S) - \gamma L(M_S) = (1-\gamma)L(A_S\circ M_S) - \gamma L(M_\Delta) \succeq 0
\end{equation*}
Since $H_\Delta$ is obtained from $H_{star}$ by removing edges, we have $L(M_{star}) \succeq L(M_\Delta)$. We will show $\gamma = O(1/k) < 1/2$, which implies $\frac{\gamma}{1-\gamma}<2\gamma$.
Therefore it suffices to show:
\begin{equation*}
  L(A_S\circ M_S) - 2\gamma L(M_{star}) \succeq 0.
\end{equation*}

The rest part follows from Lemma \ref{lem:lattice_tree}, which characterizes the value of $\gamma$ for the above LMI to hold. Proof is done.
\end{proof}

%%%%%%%%%%%%%%%%%%%%%%%%%%%%%%%%%%%%%%%%%%%%%%%%%%%%%%%%%%%%%%%%%%%%%%%%%%%%%%%%%%%%%%%%%%%%%%%%%%%
%%%%%%%%%%%%%%%%%%%%%%%%%%%%%%%%%%%%%%%%%%%%%%%%%%%%%%%%%%%%%%%%%%%%%%%%%%%%%%%%%%%%%%%%%%%%%%%%%%%

\begin{lem}\label{lem:lattice_tree}
Let $G=(V,E)$ denote a $k$-node rectangle band with width $a$ and length $b$ on the 2D lattice, i.e. $ab=k$. Let $L$ be the graph Laplacian matrix corresponding to the rectangle lattice, and $L_{star}$ be the graph Laplacian of the star graph with the same node set, centered at the bottom-left node. Then the following inequality holds for $\gamma = \frac{\Phi^2}{4\log(k\Phi)}$:
\begin{equation*}
  L - \gamma L_{star} \succeq 0
\end{equation*}
\end{lem}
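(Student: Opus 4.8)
The plan is to reduce the matrix inequality to a lower bound on the smallest eigenvalue of a \emph{grounded} (Dirichlet) Laplacian, and then control that eigenvalue through effective resistances. Write $o$ for the bottom-left anchor node. Both quadratic forms are Laplacian forms, hence invariant under shifting $x$ by a constant, and $x^{\top} L_{star} x = \sum_{v \neq o}(x_v - x_o)^2$. Restricting to the shift representative with $x_o = 0$, the claim $L - \gamma L_{star}\succeq 0$ is exactly
\[
  x^{\top} L\, x \;\geq\; \gamma \sum_{v} x_v^2 \qquad \text{for all } x \text{ with } x_o = 0,
\]
i.e.\ $\lambda_{\min}(L_{-o}) \geq \gamma$, where $L_{-o}$ is the principal submatrix of $L$ obtained by deleting the row and column of $o$. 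So everything reduces to a spectral lower bound for the grounded Laplacian of the band.

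First I would invoke the variational description of effective resistance: for unit resistors on the edges of the band and any potential $x$, one has $(x_v - x_o)^2 \leq R_{\mathrm{eff}}(v,o)\, x^{\top} L x$ for every node $v$ (this is Cauchy--Schwarz in the $L^{+}$-inner product, since $R_{\mathrm{eff}}(v,o) = (e_v - e_o)^{\top} L^{+}(e_v - e_o)$). Summing over all $v$ and using $x_o = 0$ gives $\sum_v x_v^2 \leq \big(\sum_v R_{\mathrm{eff}}(v,o)\big)\, x^{\top} L x$, hence
\[
  \lambda_{\min}(L_{-o}) \;\geq\; \frac{1}{\sum_{v} R_{\mathrm{eff}}(v,o)}.
\]
It therefore suffices to prove the deterministic geometric estimate $\sum_v R_{\mathrm{eff}}(v,o) \leq 4\log(k\Phi)/\Phi^2 = 1/\gamma$.

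To bound the resistance sum I would use Thomson's principle, exhibiting for each target node an explicit unit flow to $o$ and reading off its energy as an upper bound on $R_{\mathrm{eff}}(v,o)$. Taking $a\le b$, the band has conductance $\Phi = \Theta(1/b)$ (the balanced cut transversal to the length crosses $a$ edges and splits $k$ nodes in half), so $1/\Phi \asymp b$ and $k\Phi \asymp a$. For a node at length-distance $d$ from $o$, route a unit flow that first spreads \emph{radially} away from the corner --- at radius $r$ the current is shared by $\Theta(r)$ edges, contributing energy $\Theta(1/r)$ per shell --- until it saturates the width $a$, and then flows uniformly down the $a$ parallel ``wires'' of the band. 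The radial phase costs $\sum_{r\le a}\Theta(1/r) = O(\log (2a))$ and the transport phase costs $O(d/a)$, so $R_{\mathrm{eff}}(v,o) = O\!\big(\log(2a) + d/a\big)$. Summing, $\sum_v R_{\mathrm{eff}}(v,o) = O\!\big(k\log(2a)\big) + O\!\big(\tfrac1a\sum_v d_v\big)$; since $\sum_v d_v \asymp ab^2$ and $k\asymp ab$, both terms are $O(b^2\log(2a)) = O(\log(k\Phi)/\Phi^2)$, which is the required bound, and tracking the constants through the shell sum and the transport term yields the stated $\gamma = \Phi^2/(4\log(k\Phi))$.

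The main obstacle is the resistance estimate, and specifically the $\log(2a)$ term: it is genuinely two-dimensional (for a square band the bound is tight and this logarithm cannot be removed), so the flow must exploit radial spreading near the corner rather than a one-dimensional canonical path --- a naive path or unit-congestion argument routes all the mass through the two edges incident to $o$ and loses a polynomial factor. Two secondary points need care: confirming $\Phi = \Theta(1/b)$ so the $\Phi$'s in the target translate correctly into $a,b$, and checking that the single estimate $R_{\mathrm{eff}}(v,o)=O(\log(2a)+d/a)$ covers both the thick regime (square, where the logarithm dominates) and the thin regime (path-like, where the linear term dominates). Conceptually this is a ``grounded'' Cheeger bound: $\lambda_{\min}(L_{-o})$ is the one-point Dirichlet analogue of $\lambda_2(L)\ge \Phi^2/2$, and grounding a single node in two dimensions costs exactly the logarithmic resistance factor, which is what produces the $\Phi^2/\log$ form.
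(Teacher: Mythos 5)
Your argument is correct in outline and reaches the right bound, but it takes a genuinely different route from the paper. The paper works on the primal (potential) side: it builds an explicit hierarchical system of canonical paths from the anchor to every node (a balanced binary tree of ``critical'' nodes on diagonals $x+y=2^q-1$, plus horizontal extensions), applies a \emph{weighted} Cauchy--Schwarz inequality along each path with weights chosen so that edges near the anchor are charged less per use, and then bounds the congestion of each edge across all paths by induction; the final count $p2^{2p-1}+b(p2^{p-1}+b)$ gives $\gamma=\Theta(\Phi^2/\log(k\Phi))$. You instead pass to the grounded Laplacian via the (correct) equivalence $L-\gamma L_{star}\succeq 0 \iff \lambda_{\min}(L_{-o})\ge\gamma$, use $(x_v-x_o)^2\le R_{\mathrm{eff}}(v,o)\,x^{\top}Lx$ and sum to get $\lambda_{\min}(L_{-o})\ge 1/\sum_v R_{\mathrm{eff}}(v,o)$, and then bound each resistance by Thomson's principle with the classical corner flow (radial spreading costing $O(\log a)$ plus parallel transport costing $O(d/a)$). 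The two techniques are dual to one another --- the paper's weighted path/congestion bookkeeping is exactly a potential-side certificate for the same resistance estimates --- but your decomposition is more modular: the identity $\lambda_{\min}(L_{-o})\ge 1/\sum_v R_{\mathrm{eff}}(v,o)$ isolates the only quantity that matters, and the per-node flow construction replaces the paper's delicate global edge-usage counts ($85<2^{2p-1}$, etc.). The price is that summing per-node resistance bounds can in principle be lossy by a factor of $n$; here it is not, since $\sum_v R_{\mathrm{eff}}(v,o)=O(b^2\log(2a))=O(\log(k\Phi)/\Phi^2)$ matches the target, and you correctly identify $\Phi=2/b$, $k\Phi=2a$ so the translation of the bound into $a,b$ is right. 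Two minor points to tighten if you write this out: the unit flow must also be \emph{collected} at the target node $v$, which costs a second $O(\log a)$ radial term (absorbed in your estimate but worth stating), and the claim that the constants come out to exactly $\Phi^2/(4\log(k\Phi))$ is asserted rather than verified --- though the paper is no more precise on this point, and the theorem it feeds only needs $\gamma=\Theta(\Phi^2/\log(1/\Phi))$.
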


\begin{proof}
Assume the anchor node is node 1. It is equivalent to show that for any $f\in \mathbb{R}^k$,
\begin{equation*}
% \nonumber to remove numbering (before each equation)
  f' L_{star} f = \sum_{i \geq 2} (f_1 - f_i)^2 \leq \frac{1}{\gamma} f' L f = \frac{1}{\gamma} \sum_{(i,j)\in E} (f_i - f_j)^2
\end{equation*}

We first investigate a simple case where $a=1$, i.e. $G$ is a $k$-node line graph. In this scenario $\phi(G)=2/k$.
% We need to show $L - O(1/k^2)L_{star} \succeq 0$.
We use Cauchy-Schwartz inequality to bound each $(f_1-f_i)^2$ using the edges on the path from node 1 to $i$:
\begin{equation*}
  (f_1-f_i)^2 = \left( \sum_{j=1}^{i-1}(f_j - f_{j+1}) \right)^2 \leq (i-1)\sum_{j=1}^{i-1} (f_j - f_{j+1})^2
\end{equation*}
Summing over all $(f_1-f_i)^2$, we have:
\begin{eqnarray*}
% \nonumber to remove numbering (before each equation)
  && \sum_{i=2}^k (f_1-f_i)^2 \\
  &\leq& \sum_{i=2}^k \left[ (i-1)\sum_{j=1}^{i-1} (f_j - f_{j+1})^2 \right] \\
   &=& \left( \sum_{i=1}^{k-1} i \right) (f_1 - f_2)^2 + \left( \sum_{i=2}^{k-1} i \right) (f_2 - f_3)^2 + ... + (k-1) (f_{k-1} - f_k)^2  \\
   &\leq& \frac{k^2}{2} \sum_{j=1}^{k-1} (f_j - f_{j+1})^2
\end{eqnarray*}
Therefore the inequality for line graph holds.

Now w.l.o.g. assume $a\leq b$ and $a = 2^p$. We first show that to cover the $a^2/2$ nodes in the lower triangle, $\gamma=O(p2^p)=O(a^2\log a)$ is enough.
The strategy is similar: construct paths from anchor to each node, and apply Cauchy-Schwartz inequality to make use of edges on these paths.
Two tricks need to be mentioned: \\
(1) Paths need to be constructed very carefully so that each edge of $G$ is not used too often; \\
(2) It is inevitable that some edges will be used much more frequently than others, for example, the edges coming out of anchor. A weighted Cauchy-Schwartz should therefore be applied to alleviate this effect.

\begin{figure}[h]
\centering
\includegraphics[width=.96\textwidth]{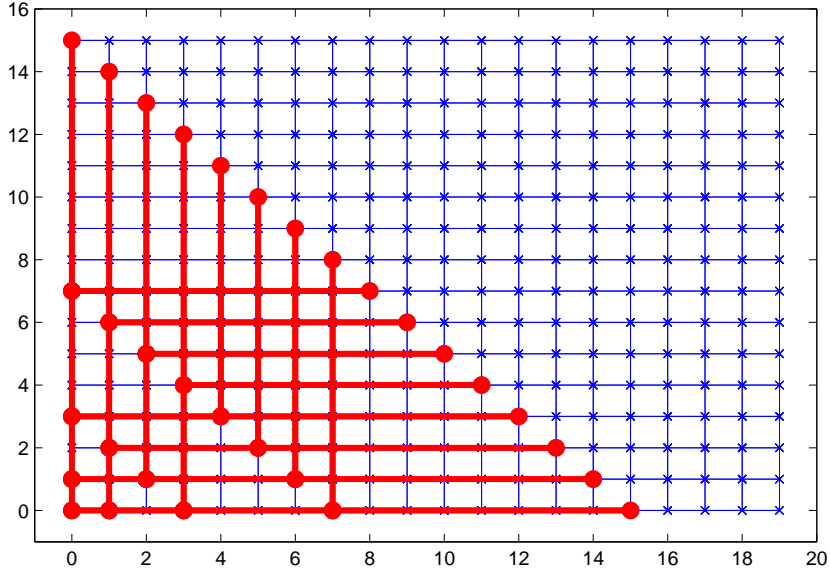}
\caption{ Paths constructed to cover each node from anchor. \label{fig:lattice_tree} }
\end{figure}

Let each node be indexed by its coordinates, $(0,0)$ is the anchor node. To help understand the construction, we introduce several notations. A node $v=(x,y)$ is ``critical'' if $x+y=2^q-1$ for some integer $q$, as marked by red solid circles in Fig.\ref{fig:lattice_tree}.
Let $\mathbb{C}_q = \{ v=(x,y) | x+y=2^q - 1 \}$ denote the collection of nodes on the $q$-th ``boundary''. Anchor node $v_0=(0,0)$ is the only node in $\mathbb{C}_0$, and the outer most boundary is $\mathbb{C}_p$. Apparently $|\mathbb{C}_q| = 2^q$.

We build a complete balanced binary tree based on all critical nodes with tree edges $(v_i, v_{i+1})$, where $v_i\in \mathbb{C}_i$ denotes a critical node in $\mathbb{C}_i$.
We note down several observations for paths from anchor to each $v_p\in\mathbb{C}_p$: \\
(1) There is a unique path starting from anchor $v_0\in\mathbb{C}_0$ to each $v_p\in\mathbb{C}_p$, passing through critical nodes $v_i\in\mathbb{C}_i$, for $i=0,1,...,p$. \\
(2) Such a path, denoted by $v_0\rightarrow v_1\rightarrow ... \rightarrow v_p$ where $v_i\in\mathbb{C}_i$, is composed of $p$ tree edges, $(v_i,v_{i+1})$ for $i=0,1,...,p-1$, with $|(v_i,v_{i+1})|=2^i$. \\
(3) For any two such paths, after they split at some node, they will never share any graph edges.

Now consider a path from $v_0$ to some $v_p\in \mathbb{C}_p$, $v_0\rightarrow v_1\rightarrow ... \rightarrow v_p$. We use weighted Cauchy-Schwartz inequality to bound this path with graph edges:
\begin{eqnarray*}
% \nonumber to remove numbering (before each equation)
    && \left( f_{v_0} - f_{v_p} \right)^2 \\
   &=& \left( \sum_{i=0}^{p-1} (f_{v_i} - f_{v_{i+1}}) \right)^2  \\
   &=& \left( (f_{v_0}-f_{v_1}) + \sum_{(i,j)\in (v_1, v_2)}(f_i - f_j) + ... + \sum_{(i,j)\in (v_{p-1}, v_p)}(f_i - f_j) \right)^2 \\
   &\leq& (1\times 2^{p-1} + 2\times 2^{p-2} + ... + 2^{p-1}\times 1) \\
   &&   \cdot  \left( \frac{(f_{v_0}-f_{v_1})^2}{2^{p-1}} + \frac{\sum_{(i,j)\in (v_1, v_2)}(f_i - f_j)^2}{2^{p-2}} + ... + \frac{\sum_{(i,j)\in (v_{p-1}, v_p)}(f_i - f_j)^2}{1} \right) \\
%   &=& p 2^{p-1} \left( \frac{(f_{v_0}-f_{v_1})^2}{2^{p-1}} + \frac{\sum_{(i,j)\in (v_1, v_2)}(f_i - f_j)^2}{2^{p-2}} + ... + \frac{\sum_{(i,j)\in (v_{p-1}, v_p)}(f_i - f_j)^2}{1} \right) \\
   &=& p \left( (f_{v_0}-f_{v_1})^2 + 2 \sum_{(i,j)\in (v_1, v_2)}(f_i - f_j)^2 + ... + 2^{p-1} \sum_{(i,j)\in (v_{p-1}, v_p)}(f_i - f_j)^2  \right)
\end{eqnarray*}
The intuitive idea is that the graph edges composing tree edges closer to the anchor, i.e. $(i,j)\in (v_l,v_{l+1})$ for small $l$ where $v_l\in\mathbb{C}_l$, will be passed through many more times than those composing tree edges far away from the anchor.
So when applying weighted Cauchy-Schwartz inequality, a larger denominator is imposed on $(f_i - f_j)^2$ for those $(i,j)\in (v_l,v_{l+1})$ for small $l$.
For example, for the most frequently used edge $(v_0,v_1)$, a penalty of $2^{p-1}$ is imposed on these edges (2 such edges, ((0,0),(0,1)) and ((0,0),(1,0))), while for those graph edges composing $(v_{p-1},v_p)$, only a constant is put in the denominator.

Next we need to figure out the frequency that each graph edge is used for covering all the nodes. By induction it is not hard to observe that the graph edges on the tree edge $(i,j)\in (v_l,v_{l+1})$ will be passed by at most $2^{2p-1-l}$ paths.
Take the graph of Fig.\ref{fig:lattice_tree} as an example. Each path is of the form $v_0\rightarrow ... \rightarrow v_4$, $v_i\in\mathbb{C}_i$.
The edges on $(v_3,v_4)$ are used at most 8 times, eg. $\left((7,0), (8,0)\right)$. We have $8<16=2^{2p-1-3}$.
The edges on $(v_2,v_3)$ are used at most $8\times 2 + 4 = 20$ times, eg. $\left((3,0), (4,0)\right)$. $20<32=2^{2p-1-2}$.
The edges on $(v_1,v_2)$ are used at most $20\times 2 + 2 = 42$ times, eg. $\left((1,0), (2,0)\right)$. $42<64=2^{2p-1-1}$.
The top-most edges, $\left((0,0),(1,0)\right)$ and $\left((0,0),(0,1)\right)$, are used $42\times 2 + 1 = 85$ times. $85<128=2^{2p-1-0}$.

So summing over all paths from anchor to all nodes within the lower triangle $T$:
\begin{eqnarray*}
% \nonumber to remove numbering (before each equation)
  && \sum_{v\in T} (f_{v_0} - f_v)^2  \\
  &\leq&  p \sum_{v_0\rightarrow ... \rightarrow v_p\in \mathbb{C}_p} \left( 2^{2p-1} (f_{v_0}-f_{v_1})^2 + ... + 2^{2p-1)} \sum_{(i,j)\in (v_{p-1}, v_p)}(f_i - f_j)^2 \right)  \\
   &\leq& p 2^{2p-1} \sum_{(i,j)\in E} (f_i - f_j)^2
\end{eqnarray*}

Note that $p2^{2p-1}=a^2 \log a/2$. So:
\begin{equation*}
  \gamma=\frac{2}{a^2\log a}
\end{equation*}
is enough to cover all nodes in the lower triangle of an $a\times b$ rectangle lattice as in Fig.(\ref{fig:lattice_tree}).

To cover the rest nodes, i.e. blue nodes in Fig.\ref{fig:lattice_tree}, we build paths that horizontally extend from the outer-most boundary nodes $v_p\in\mathbb{C}_p$. Let $v_{p'}$ denote the rightmost node extending horizontally from $v_p\in\mathbb{C}_p$. Similarly we use weighted Cauchy-Schwartz inequality to bound the path: $v_0\rightarrow ... \rightarrow v_p \rightarrow v_{p'}$:
\begin{eqnarray*}
% \nonumber to remove numbering (before each equation)
  && \left( f_{v_0} - f_{v_{p'}} \right)^2 \\
  &=& \left( (f_{v_0}-f_{v_1}) + ... + \sum_{(i,j)\in (v_{p-1}, v_p)}(f_i - f_j) + \sum_{(i,j)\in (v_{p}, v_{p'})}(f_i - f_j) \right)^2 \\
   &\leq& \left(1\times 2^{p-1} + 2\times 2^{p-2} + ... + 2^{p-1}\times 1 + b \times 1 \right) \\
   &&   \cdot  \left( \frac{(f_{v_0}-f_{v_1})^2}{2^{p-1}} + ... + \frac{\sum_{(i,j)\in (v_{p-1}, v_p)}(f_i - f_j)^2}{1} + \frac{\sum_{(i,j)\in (v_{p}, v_{p'})}(f_i - f_j)^2}{1} \right) \\
   &=& \left( p 2^{p-1} + b \right) \left( \sum_{l=0}^{p-1} \frac{\sum_{(i,j)\in (v_{l}, v_{l+1})}(f_i - f_j)^2}{2^{p-1-l}} + \sum_{(i,j)\in (v_{p}, v_{p'})}(f_i - f_j)^2 \right)
\end{eqnarray*}

It is easy to observe that to cover these extended nodes, the graph edges $(i,j)\in (v_l,v_{l+1})$ are passed through $b 2^{p-1-l}$ times for $l=0,1,...,p-1$, and $b$ times for those extended edges $(i,j)\in (v_p, v_{p'})$. Now totally we have:
\begin{equation*}
  \sum_{v} (f_{v_0} - f_v)^2 \leq \left( p 2^{2p-1} + b (p 2^{p-1} + b) \right) \sum_{(i,j)\in E} (f_i - f_j)^2
\end{equation*}
Plugging in $2^p=a$, $a\leq b$ and $ab=k$, we have:
\begin{eqnarray*}
% \nonumber to remove numbering (before each equation)
  \sum_{v} (f_{v_0} - f_v)^2  &\leq& \left( ab\log a + b^2 \right) \sum_{(i,j)\in E} (f_i - f_j)^2 \\
   &\leq& \max \left( 2k \log \frac{k}{b}, 2b^2 \right) \sum_{(i,j)\in E} (f_i - f_j)^2
\end{eqnarray*}

Note that $\Phi = \frac{a}{k/2} = \frac{2}{b}$. Replace $b$ with $\Phi$, the proof is done.

We list two extreme examples for demonstration.
For the thinnest line graph where $a=1,b=k$ and $\Phi = 2/k$, $\gamma=\frac{1}{2 k^2} = \Phi^2 / 8$ is sufficient to have: $L - \gamma L_{star} \succeq 0$. For the other extreme case where the graph is a square lattice with $a=b=\sqrt{k}$, $\Phi = 2/\sqrt{k}$, $\gamma = \frac{1}{k\log k\Phi} = \Phi^2 / 4\log (k\Phi)$ is required for the LMI to hold.
Note that $\Phi$ is between $O(1/\sqrt{k})$ and $\Omega(k)$. So at least the smaller $\gamma = \Theta(\Phi^2 / \log k )$ can make the LMI hold.
Proof is done.
%Ignoring the log factor, for a shape with size $k$ and conductance $\Phi$, we can select $\gamma = \Theta(\Phi^2 / \log (1/\Phi))$ since $\Phi^2 = \Omega(k)$.

\end{proof}

%%%%%%%%%%%%%%%%%%%%%%%%%%%%%%%%%%%%%%%%%%%%%%%%%%%%%%%%%%%%%%%%%%%%%%%%%%%%%%%%%%%%%%%%%%%%%%%%%%%
%%%%%%%%%%%%%%%%%%%%%%%%%%%%%%%%%%%%%%%%%%%%%%%%%%%%%%%%%%%%%%%%%%%%%%%%%%%%%%%%%%%%%%%%%%%%%%%%%%%

For future use we present the explicit form of the dual problem to a primal problem that has constraints $M\in \mathcal{C}_{LMI}(a,\gamma)$.
Interestingly, the dual problem corresponds to finding an embedding of all nodes in a 1D Euclidean space, such that certain constraints at each node and edge of the graph hold.

\begin{lem}\label{lem:primal_dual}
Given $G=(V,E)$ with adjacency matrix $A$, let $y_i$ denote the variable associated with node $i\in V$.
Assume w.l.o.g. the anchor is node 1.
Consider the following SDP problem, where the constraints are exactly those of $M\in \mathcal{C}_{LMI}(1,\gamma)$:
\begin{eqnarray}\label{eq:primal}
  \max: && \sum_{i}y_i M_{ii} \\
 \nonumber
   s.t. && Q(M;\gamma) = L(A\circ M) - \gamma L(M) \succeq 0  \\
 \nonumber
       && M_{ij} \geq 0, \,\,\, \forall 2\leq i < j \\
 \nonumber
       && 1-M_{ii} \geq 0, \,\, \forall 2\leq i  \\
 \nonumber
       && M_{ii} - M_{ij} \geq 0, \,\, \forall 2\leq i<j \\
 \nonumber
       && M_{jj} - M_{ij} \geq 0, \,\, \forall 2\leq i<j
%       && \sum_{i}M_{ii} = C/\gamma
\end{eqnarray}

Then the corresponding dual problem has the following form:
\begin{eqnarray}\label{eq:dual}
  \min: && \,\, y_1 + \sum_{i\geq 2}\rho_i  \\
\nonumber
  s.t.  && \,\, y_i + \left( 1 - \gamma \right) z_i^2 + \sum_{2\leq j\neq i,(i,j)\in E}\alpha_{ij} + \alpha_i = \rho_i,\,\,\forall i\geq 2,\,(1,i)\in E \\
\nonumber
        && \,\, y_i - \gamma z_i^2 + \sum_{2\leq j\neq i,(i,j)\in E}\alpha_{ij} + \alpha_i = \rho_i,\,\,\forall i\geq 2,\,(1,i)\notin E \\
\nonumber
        && \,\, \left( 1 - \gamma \right) (z_i - z_j)^2 \leq \alpha_{ij} + \alpha_{ji}, \,\, \forall 2\leq i<j,\, (i,j)\in E  \\
\nonumber
        && \,\, \rho_i\geq 0, \alpha_{ij}\geq 0, \alpha_i\geq 0, z_i\geq 0
\end{eqnarray}
where $z_i$, a scalar dual variable, is the embedding coordinate of node $i\geq 2$; the rest dual variables include $\alpha_i,\rho_i,\forall i\geq 2$ and $\alpha_{ij},\forall (i,j)\in E$.
\end{lem}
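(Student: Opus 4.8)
The plan is to obtain Eq.~\eqref{eq:dual} by forming the Lagrangian of the SDP in Eq.~\eqref{eq:primal} and then eliminating the primal matrix $M$. I would attach a positive semidefinite multiplier $Z\succeq0$ to the conic constraint $Q(M;\gamma)\succeq0$, coupled through the inner product $\langle Z,Q(M;\gamma)\rangle$, and nonnegative scalar multipliers to each linear inequality ($M_{ij}\ge0$, $1-M_{ii}\ge0$, and the two domination constraints $M_{ii}-M_{ij}\ge0$, $M_{jj}-M_{ij}\ge0$). Since $Q(M;\gamma)=L(A\circ M)-\gamma L(M)$ is a \emph{linear} function of $M$, and the objective and the remaining constraints are linear as well, the Lagrangian is affine in the entries of $M$. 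The dual function is therefore finite only when the coefficient of every free entry of $M$ vanishes; these stationarity conditions become the equality constraints of Eq.~\eqref{eq:dual}, and the residual constant terms become the dual objective.

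The crux is computing how $Z$ enters each coefficient, and this is exactly where the Euclidean embedding appears. Using the Laplacian identity $\langle Z,L(B)\rangle=\sum_{i<j}B_{ij}(Z_{ii}+Z_{jj}-2Z_{ij})$ for any symmetric weight matrix $B$, I would read off that the coefficient of $M_{ij}$ in $\langle Z,Q(M;\gamma)\rangle$ equals $(A_{ij}-\gamma)(Z_{ii}+Z_{jj}-2Z_{ij})$, i.e.\ $(1-\gamma)$ times that quantity on edges and $-\gamma$ times it off edges, with the diagonal of $M$ dropping out since Laplacian quadratic forms are insensitive to it. Writing the PSD multiplier in its one-dimensional embedding form $Z=zz'$ turns $Z_{ii}+Z_{jj}-2Z_{ij}$ into the squared distance $(z_i-z_j)^2$, which is precisely the geometric object in Eq.~\eqref{eq:dual}; because $zz'\succeq0$ automatically, this family is dual feasible, which is all the weak-duality upper bound in the analysis requires, and it is what renders the dual an embedding problem rather than a matrix program.

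The anchor node needs special handling, and I expect this to be the main source of bookkeeping. Because $M\in{\cal M}_1^*$ combines $M_{1i}\le M_{ii}$ (from $M_{uv}\le M_{uu}$) with the star constraint $M_{ii}\le M_{1i}$, the anchor row is pinned to the diagonal, $M_{1i}=M_{ii}$; fixing the embedding origin at $z_1=0$ then collapses the anchor-edge coefficient $(1-\gamma)(z_1-z_i)^2$ to $(1-\gamma)z_i^2$ on edges and $-\gamma z_i^2$ off edges, which is the origin of the two cases $(1,i)\in E$ and $(1,i)\notin E$ and folds the $M_{1i}$ contribution into the diagonal variable $M_{ii}$ whose objective coefficient is $y_i$. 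Collecting the coefficient of each $M_{ii}$ ($i\ge2$) and setting it to zero then yields $y_i+(1-\gamma)z_i^2+(\text{edge terms})+\alpha_i=\rho_i$ when $(1,i)\in E$ and $y_i-\gamma z_i^2+(\text{edge terms})+\alpha_i=\rho_i$ otherwise, where $\rho_i$ is the multiplier of $1-M_{ii}\ge0$; collecting the coefficient of each off-anchor $M_{ij}$ gives $(1-\gamma)(z_i-z_j)^2\le\alpha_{ij}+\alpha_{ji}$ after absorbing the nonnegative multiplier of $M_{ij}\ge0$, with $\alpha_{ij}$ and $\alpha_{ji}$ the multipliers of the two domination constraints on that edge. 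Finally the constants $y_1$ (from the fixed entry $M_{11}=1$) and $\sum_{i\ge2}\rho_i$ (from the box constraints) assemble into the dual objective. The genuinely delicate points are matching each inequality multiplier to the symbols $\alpha_{ij},\alpha_i,\rho_i$ and pinning down the sign conventions, rather than any deep argument.
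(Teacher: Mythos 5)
Your Lagrangian bookkeeping matches the paper's: the same multipliers ($\rho_i$ for $1-M_{ii}\ge 0$, $\alpha_{ij}$ for the domination constraints, a nonnegative multiplier for $M_{ij}\ge 0$ that gets absorbed into an inequality), the same Laplacian identity giving coefficients $(\mathbf{1}_{(i,j)}-\gamma)(Z_{ii}+Z_{jj}-2Z_{ij})$, and the same use of $M_{1i}=M_{ii}$ to fold the anchor terms into the diagonal. But there is a genuine gap at the crux. You set $Z=zz'$ with a scalar coordinate per node \emph{by fiat}. A general PSD multiplier factors as $Z=VV'$ with $v_i\in\reals^n$, so the true dual is an $n$-dimensional embedding problem with $\|v_i\|^2$ in the node constraints and $\|v_i-v_j\|^2$ in the edge constraints. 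Restricting to rank-one $Z$ only produces a \emph{subset} of dual-feasible points. That is enough for the weak-duality upper bounds you mention, but it does not prove the lemma as stated, which asserts that Eq.~(\ref{eq:dual}) \emph{is} the dual. The distinction matters downstream: Lemma~\ref{lem:max_trace} invokes strong duality and complementary slackness on the scalar-embedded dual, which is only legitimate if the scalar form is equivalent to the full dual, not merely a restriction of it (a restriction could have strictly larger optimal value, breaking complementary slackness).

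The missing idea is the paper's folding argument, which is precisely the ``novel Euclidean embedding argument'' advertised in the abstract. Given any feasible $n$-dimensional embedding $\{v_i\}$, one first reflects all $v_i$ across coordinate hyperplanes into a single orthant; this preserves every $\|v_i\|$ (hence every node constraint) and can only decrease $\|v_i-v_j\|$ (hence preserves the one-sided edge inequalities). Once all pairwise angles are at most $\pi/2$, the map $v_i\mapsto z_i=\|v_i\|$ again preserves norms and only shrinks pairwise distances, so the resulting scalar solution is feasible for Eq.~(\ref{eq:dual}) with the same objective value. This exploits two structural facts you never use: node constraints depend on $v_i$ only through $\|v_i\|$, and edge constraints are inequalities in the right direction to tolerate shrinking distances. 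Without this step your argument establishes only one inclusion between the two feasible sets, not the claimed equivalence.
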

%%%%%%%%%%%%%%%%%%%%%%%%%%%%%%%%%%%%%%%%%%%%%%%%%%%%%%%%%%%%%%%%%%%%%%%%%%%%%%%%%%%%%%%%%%%%%%%%%%%
\begin{proof}

The explicit Lagrangian of Eq.(\ref{eq:primal}) is:
\begin{eqnarray}\label{eq:lagrangian}
  L &=& y_1 + \sum_{i\geq 2}M_{ii}y_i + \langle Q,G \rangle + \sum\sum_{2\leq i<j}\mu_{ij}M_{ij} + \sum_{i\geq 2}\rho_i\left( 1-M_{ii} \right) \\
\nonumber
    &&  + \sum\sum_{2\leq i<j} \alpha_{ij}\left( M_{ii}-M_{ij} \right) + \sum\sum_{2\leq j<i} \alpha_{ij}\left( M_{ii}-M_{ji} \right)
\end{eqnarray}
where $G\succeq 0,\mu_{ij}\geq 0,\rho_i\geq 0,\alpha_{ij}\geq 0$ are lagrange multipliers. Notice the symmetric matrix $Q$ can be decomposed into the following form:
\begin{equation}\nonumber
  Q(M;\gamma) = L(A\circ M) - \gamma L(M) = \sum\sum_{i < j}\left( \textbf{1}_{(i,j)} - \gamma \right) M_{ij} \left( e_{ii} + e_{jj} - e_{ij} - e_{ji} \right)
\end{equation}
where $1_{(i,j)}$ is the indicator of $(i,j)\in E$, $e_{ij}$ denotes the matrix with value 1 at $(i,j)$ and 0 elsewhere.
Plugging in $M_{1i}=M_{ii}$, we have:
\begin{eqnarray*}
% \nonumber to remove numbering (before each equation)
  \langle Q,G \rangle &=& \sum_{i\geq 2} \left( \textbf{1}_{(1,i)} - \gamma \right) M_{ii} \left( G_{11} + G_{ii} - 2G_{1i} \right) \\
   && + \sum\sum_{2\leq i < j}\left( \textbf{1}_{(i,j)} - \gamma \right) M_{ij} \left( G_{ii} + G_{jj} - 2G_{ij} \right)
\end{eqnarray*}

Taking derivatives w.r.t. $M_{ii}$ and $M_{ij}$ respectively, the dual problem is:
\begin{eqnarray}\label{eq:dual_mu1}
% \nonumber to remove numbering (before each equation)
  \min: && \,\, y_1 + \sum_{i\geq 2}\rho_i  \\
\nonumber
  s.t.  && \,\, y_i + \left( 1_{(1,i)} - \gamma \right)G_{(1i)} + \sum_{2\leq j\neq i}\alpha_{ij} = \rho_i ,\,\,\forall i\geq 2 \\
\nonumber
   && \,\, \left( 1_{(i,j)} - \gamma \right)G_{(ij)} + \mu_{ij} - \alpha_{ij} - \alpha_{ji} = 0, \,\, \forall 2\leq i<j  \\
\nonumber
   && \,\, G\succeq 0,  \mu_{ij}\geq 0,  \rho_i\geq 0,  \alpha_{ij}\geq 0
\end{eqnarray}
where $G_{(ij)}=G_{ii}+G_{ii}-2G_{ij}$.

Since $G$ is symmetric and PSD, we have $G=VV'$ such that $G_{ij}=v_i' v_j$.
$v_i\in \mathbb{R}^n$ can be viewed as the embedding of node $i$ in the $n$-dimensional Euclidean space.
$G_{(ij)}=||v_i - v_j||^2$ is simply the squared distance between the embeddings of node $i$ and $j$.
We write constraints separately based on indicators:
\begin{eqnarray}\label{eq:dual_mu2}
% \nonumber to remove numbering (before each equation)
  \min: && \,\, y_1  + \sum_{i\geq 2}\rho_i  \\
\nonumber
  s.t.  && \,\, y_i  + \left( 1 - \gamma \right)||v_i - v_1||^2 + \sum_{2\leq j\neq i}\alpha_{ij} = \rho_i,\,\,\forall i\geq 2,\,(1,i)\in E \\
\nonumber
        && \,\, y_i  - \gamma ||v_i - v_1||^2 + \sum_{2\leq j\neq i}\alpha_{ij} = \rho_i,\,\,\forall i\geq 2,\,(1,i)\notin E \\
\nonumber
        && \,\, \left( 1 - \gamma \right)|| v_i - v_j ||^2 + \mu_{ij} - \alpha_{ij} - \alpha_{ji} = 0, \,\, \forall 2\leq i<j,\, (i,j)\in E  \\
\nonumber
        && \,\, - \gamma || v_i - v_j ||^2 + \mu_{ij} - \alpha_{ij} - \alpha_{ji} = 0, \,\, \forall 2\leq i<j,\, (i,j)\notin E  \\
\nonumber
        && \,\,\mu_{ij}\geq 0, \rho_i\geq 0, \alpha_{ij}\geq 0
\end{eqnarray}

We further simplify this dual formulation.
Notice that for constraints of $(i,j)\notin E$, $\mu_{ij}\geq 0$ is an independent and completely free variable that can always make such a constraint hold. So we can drop these redundant constraints. For edge constraints of $(i,j)\in E$, we replace $\mu_{ij}$ with inequalities.
For node constraints of node $i$, we split out those $\alpha_{ij}$ with $(i,j)\notin E$ which are independent and combine them into a new variable $\alpha_i\geq 0$.
Also note that the embedding of anchor, $v_1$, is completely free variable, which we can fix w.l.o.g. at 0.
The dual problem is simplified as follows:
\begin{eqnarray}\label{eq:dual_v}
% \nonumber to remove numbering (before each equation)
  \min: && \,\, y_1 + \sum_{i\geq 2}\rho_i  \\
\nonumber
  s.t.  && \,\, y_i + \left( 1 - \gamma \right)|| v_i ||^2 + \sum_{2\leq j\neq i,(i,j)\in E}\alpha_{ij} + \alpha_i = \rho_i,\,\,\forall i\geq 2,\,(1,i)\in E \\
\nonumber
        && \,\, y_i - \gamma || v_i ||^2 + \sum_{2\leq j\neq i,(i,j)\in E}\alpha_{ij} + \alpha_i = \rho_i,\,\,\forall i\geq 2,\,(1,i)\notin E \\
\nonumber
        && \,\, \left( 1 - \gamma \right)|| v_i - v_j ||^2 \leq \alpha_{ij} + \alpha_{ji}, \,\, \forall 2\leq i<j,\, (i,j)\in E  \\
\nonumber
        && \,\, \rho_i\geq 0, \alpha_{ij}\geq 0, \alpha_i\geq 0
\end{eqnarray}

Note the constraints have been divided into 3 categories: node constraints of those nodes directly linking to the anchor node, node constraints of the rest nodes, and edge constraints of edges among all nodes except the anchor.

The key observation is that each embedding vector $v_i$ only appears in node constraints with its length $||v_i||$, while only distances between embeddings exist in edge constraints, which are all inequalities.
We perform several operations on $v_i$ while maintaining dual feasibility.
The first step is to fold all $v_i$ into a fixed quadrant so that $||v_i||$ remains unchanged while $||v_i-v_j||$ either remains unchanged or is decreased.
This can be done by first fixing a Euclidean coordinate system, with $n$ hyperplanes intersecting at 0 and pairwise perpendicular. Then for each such hyperplane that partitions the whole space into two half-spaces, we fold all $v_i$ in the ``left'' half-space to the ``right'' half-space axis-symmetrically. It is obvious that this folding operation maintains $||v_i||$ for all $i$ and $||v_i-v_j||$ for those $i,j$ in the same half-space. The rest $i,j$, $||v_i-v_j||$ are only decreased due to Pythagoras theorem.
After folding for all these hyperplanes, all $v_i$ now locate in the same quadrant such that $v_i'v_j \geq 0,\,\forall i,j$, i.e. angles between $v_i$ and $v_j$ are smaller than $\pi/2$. Yet all node and edge constraints are still satisfied.

The second step is mapping all $v_i$ onto one single direction:
\begin{equation*}
  v_i \in \mathbb{R}^n \mapsto z_i \in \mathbb{R}^+ : \,\,\, z_i = ||v_i||
\end{equation*}
By definition all node constraints are satisfied. Again by Pythagoras and the $\pi/2$ condition, $||v_i-v_j||$ is decreased so that edge constraints are satisfied.
Therefore the dual problem Eq.(\ref{eq:dual_v}) can be reduced to the equivalent Eq.(\ref{eq:dual}). Proof is done.

\end{proof}

To prove the main theorem, we need the following lemma.

%%%%%%%%%%%%%%%%%%%%%%%%%%%%%%%%%%%%%%%%%%%%%%%%%%%%%%%%%%%%%%%%%%%%%%%%%%%%%%%%%%%%%%%%%%%%%%%%%%%
%%%%%%%%%%%%%%%%%%%%%%%%%%%%%%%%%%%%%%%%%%%%%%%%%%%%%%%%%%%%%%%%%%%%%%%%%%%%%%%%%%%%%%%%%%%%%%%%%%%
\begin{lem}\label{lem:max_trace}
On a graph with maximum degree $D$, consider the following max-trace problem:
\begin{eqnarray}\label{eq:max_trace}
% \nonumber
  \max : && \, tr(M) \\
\nonumber
  s.t. && L(A\circ M) - \gamma L(M) \succeq 0 \\
\nonumber
   && M_{ij}=M_{ji}, \, M_{11}=1, \, M_{1i}=M_{ii} \\
\nonumber
   && 0\leq M_{ij}\leq M_{ii}, M_{jj} \leq 1
\end{eqnarray}
Let $M^*=M^*(\gamma)$ be the optimal solution to this problem. Then $M^*$ has the following properties:
\begin{enumerate}
  \item $tr(M^*) \leq D/\gamma$, where $D$ is the max degree of the graph.
  \item The node set $V_0 = \{i: \, M^*_{ii}=1\}$, including the anchor, form a connected sub-graph.
  \item The 1-hop outer layer, $V_1 = \{i:\, (i,j)\in E, j\in V_0, i\notin V_0\}$, satisfy: $0\leq M^*_{ii} < 1$.
  \item The rest nodes are: $M^*_{ii}=0$.
\end{enumerate}
\end{lem}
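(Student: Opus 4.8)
The plan is to treat the trace bound separately from the three structural claims, since the former follows from a single well-chosen test vector in the LMI while the latter rely on the dual characterization of Lemma~\ref{lem:primal_dual} together with complementary slackness.

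For claim~1 I would simply evaluate the quadratic form of the PSD matrix $Q(M;\gamma)=L(A\circ M)-\gamma L(M)$ against the anchor indicator $e_1$. Using $M_{1j}=M_{jj}$ and $M_{11}=1$, the diagonal bookkeeping gives $e_1'L(A\circ M)e_1=\sum_{j:(1,j)\in E}M_{jj}$ and $e_1'L(M)e_1=\sum_{j\geq 2}M_{jj}=tr(M)-1$. Feasibility forces $e_1'Q e_1\geq 0$, hence
\begin{equation*}
\gamma\,(tr(M)-1)\leq \sum_{j:(1,j)\in E}M_{jj}\leq \deg(1)\leq D,
\end{equation*}
where the middle inequality uses $M_{jj}\leq 1$ and the last uses the max-degree bound. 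This yields $tr(M^*)\leq 1+D/\gamma=O(D/\gamma)$, matching the stated bound.

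For the structural claims I would pass to the dual of Lemma~\ref{lem:primal_dual} with the max-trace objective, i.e.\ $y_i\equiv 1$, and exploit complementary slackness. Claim~3 is immediate from definitions: $i\in V_1$ means $i\notin V_0$, so $M^*_{ii}\neq 1$, while feasibility gives $0\leq M^*_{ii}\leq 1$, hence $0\leq M^*_{ii}<1$. The substantive observation is that for every neighbor $i$ of the anchor the dual node constraint reads $\rho_i=1+(1-\gamma)z_i^2+\sum_j\alpha_{ij}+\alpha_i\geq 1>0$ (using $\gamma<1$, valid since $\gamma=O(1/K)$); complementary slackness $\rho_i(1-M^*_{ii})=0$ then forces $M^*_{ii}=1$, so the anchor and all its neighbors lie in $V_0$. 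More generally the dual embedding $z_i\geq 0$ (anchor at the origin) behaves like a graph distance: nodes close to the anchor carry small $z_i$ and hence $\rho_i>0$ (saturation, $M^*_{ii}=1$), whereas driving $\rho_i$ to $0$ through the term $-\gamma z_i^2$ needs $\gamma z_i^2\geq 1$, i.e.\ a sufficiently large coordinate. Thus $V_0$ is a distance-ball around the anchor, which is connected, giving claim~2; this is also consistent with Theorem~\ref{thm:connectivity}, since $M^*\in{\cal C}_{\lmi}(1,\gamma)$ and its diagonal support is therefore a connected subgraph. For claim~4 I would argue that a node with no neighbor in $V_0$ sits strictly beyond the saturation frontier, where the edge constraints $(1-\gamma)(z_i-z_j)^2\leq\alpha_{ij}+\alpha_{ji}$ together with the node constraints push its coordinate and multipliers into the regime $\rho_i=0$ and $M^*_{ii}=0$.

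The main obstacle is the rigorous extraction of the ball-plus-one-layer picture in claims~2 and~4: I must show that the saturation pattern is ``downward closed'' in the embedding, i.e.\ that $\{i:M^*_{ii}=1\}$ is exactly an initial segment of the nodes ordered by $z_i^*$, forming a connected ball, and that the active set extends at most one hop past it. This requires carefully matching the three families of dual constraints (anchor-neighbor nodes, other nodes, and edges) to the primal complementary-slackness conditions, and in particular ruling out fractional mass at graph-distance $\geq 2$ from $V_0$. The trace bound of claim~1, by contrast, is essentially immediate.
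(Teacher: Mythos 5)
Your treatment of claim~1 is correct and takes a genuinely different, more elementary route than the paper: you test the PSD constraint $Q(M;\gamma)\succeq 0$ against the anchor indicator $e_1$ and use $M_{1j}=M_{jj}$, $M_{11}=1$ to read off $\gamma\,(tr(M)-1)\leq\sum_{j:(1,j)\in E}M_{jj}\leq D$, whereas the paper constructs a constant dual embedding $z_i^2=1/\gamma$ and invokes weak duality through Lemma~\ref{lem:primal_dual}. Both give the stated bound up to an additive $1$, and your primal argument avoids the dual machinery entirely for this part. Claim~3 is indeed immediate from feasibility, as you note, and your observation that $\rho_i\geq y_i=1>0$ for the anchor's neighbors (hence $M^*_{ii}=1$ there by complementary slackness) is a valid and useful step the paper does not spell out.

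The gap is in claims~2 and~4, and you have correctly located it without closing it. The picture of $V_0$ as a sublevel set of the embedding coordinates $z_i$ is an intuition, not an argument: nothing in the dual constraints forces $z_i$ to be monotone in graph distance, and complementary slackness only gives information in one direction ($\rho_i>0$ implies $M^*_{ii}=1$); it says nothing about nodes with $\rho_i=0$, which is exactly where fractional mass at distance $\geq 2$ could hide. The paper's mechanism for killing that mass is a perturbation argument you do not supply: replace the objective by $\sum_{i\in V_0\cup V_1}M_{ii}+(1-\epsilon)\sum_{i\notin V_0\cup V_1}M_{ii}$, observe that the optimal dual solution of the unperturbed problem remains feasible at the same cost once the $\epsilon$ of slack is distributed onto the multipliers $\alpha_{ij}$ (equivalently the $\mu_{ij}$ of Eq.~(\ref{eq:dual_mu2})) attached to edges between $V_1$ and $V_2$, and conclude by complementary slackness that $M^*_{ij}=0$ on every such edge. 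This disconnects all nodes beyond $V_1$ from the anchor, and Theorem~\ref{thm:connectivity} --- the support of $\mbox{diag}(M^*)$ must be connected through the anchor --- then forces $M^*_{ii}=0$ outside $V_0\cup V_1$, which is claim~4; connectivity of the saturated set in claim~2 is obtained from the same connectivity statement rather than from any ordering of the $z_i$. Without a device of this kind for manufacturing strict slackness on the $V_1$--$V_2$ edges, your argument for the structural claims does not go through.
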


\noindent
\textbf{Remark:} \\
This lemma is just saying that the solution $M^*$ to the max-trace problem has a nested structure centered at the anchor. The interior of the support of $diag(M^*)$ have value $M^*_{ii}=1$, the boundary $0\leq M^*_{ii}<1$, and the rest nodes have $M^*_{ii}=0$.
We conjecture that $M^*$ always has a ``fattest'' shape.
At least by Theorem \ref{thm:C_a_Phi} $M^*$ contains a square of size $\Theta(k)$ if $\gamma=\Theta(\frac{1}{k\log k})$.
Fig.\ref{fig:max_trace_demo} shows two solutions of the max-trace problem with different values of $\gamma$.
Intuitively, smaller $\gamma$ allows the search to extend farther away than larger $\gamma$.

\begin{figure*}[htb]
\begin{centering}
\begin{minipage}[t]{.48\textwidth}
\includegraphics[width = 1\textwidth]{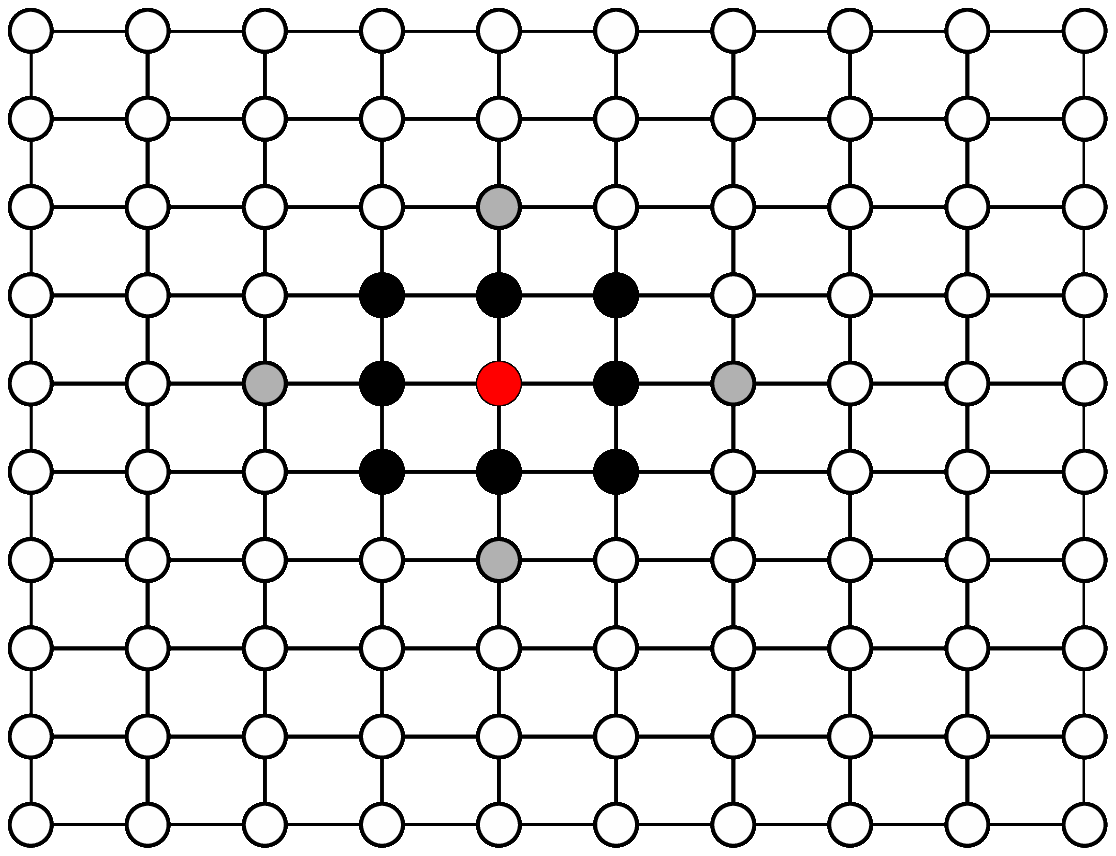}
\makebox[7 cm]{(a) $\gamma=0.3$}
\end{minipage}
\begin{minipage}[t]{.48\textwidth}
\includegraphics[width = 1\textwidth]{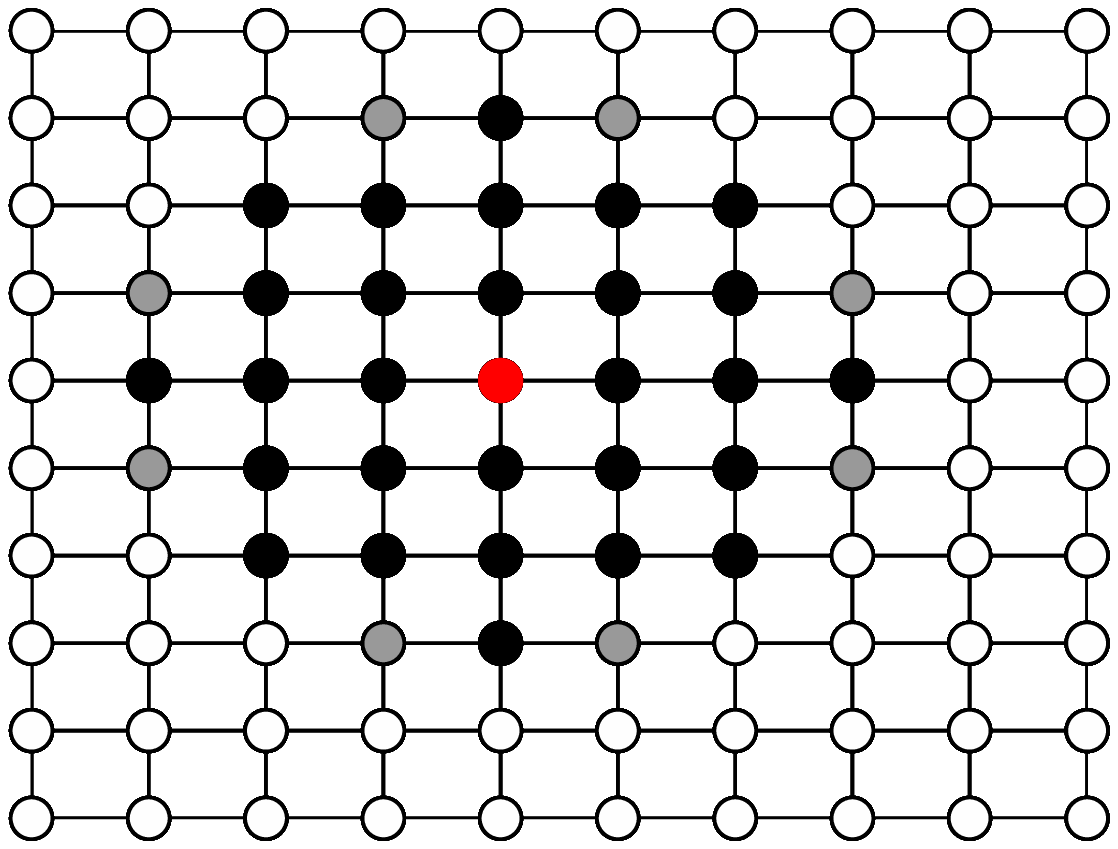}
\makebox[7 cm]{(b) $\gamma=0.08$}
\end{minipage}
\caption{ Optimal solution $M^*$ of max-trace problem, with large / small values of $\gamma$. Values of $M^*_{ii}$ are illustrated through grey-scale. Rped node is the anchor. \label{fig:max_trace_demo} }
\end{centering}
\end{figure*}

%%%%%%%%%%%%%%%%%%%%%%%%%%%%%%%%%%%%%%%%%%%%%%%%%%%%%%%%%%%%%%%%%%%%%%%%%%%%%%%%%%%%%%%%%%%%%%%%%%%
\begin{proof}
This is the problem of Eq.(\ref{eq:primal}) with all $y_i=1$. According to Eq.(\ref{eq:dual}), the corresponding dual problem is:
\begin{eqnarray*}
% \nonumber to remove numbering (before each equation)
  \min: && \,\, 1 + \sum_{i\geq 2}\rho_i  \\
\nonumber
  s.t.  && \,\, 1 + \left( 1 - \gamma \right)z_i^2 + \sum_{2\leq j\neq i,(i,j)\in E}\alpha_{ij} + \alpha_i = \rho_i,\,\,\forall i\geq 2,\,(1,i)\in E \\
\nonumber
        && \,\, 1 - \gamma z_i^2 + \sum_{2\leq j\neq i,(i,j)\in E}\alpha_{ij} + \alpha_i = \rho_i,\,\,\forall i\geq 2,\,(1,i)\notin E \\
\nonumber
        && \,\, \left( 1 - \gamma \right) (z_i - z_j)^2 \leq \alpha_{ij} + \alpha_{ji}, \,\, \forall 2\leq i<j,\, (i,j)\in E  \\
\nonumber
        && \,\, \rho_i\geq 0, \alpha_{ij}\geq 0, \alpha_i\geq 0, z_i\geq 0
\end{eqnarray*}
We show (1) by constructing a simple dual solution to yield an upper bound on the max-trace problem.
Let $z_i=z, \forall i\geq 2$, so that all edge constraints automatically hold.
Let $z^2=1/\gamma$, $\alpha_{ij}=\alpha_i=0$, so that $\rho_i=0,\forall i\geq 2, (1,i)\notin E$.
The cost of this dual feasible solution, thus an upper bound on $tr(M^*)$, is:
\begin{equation*}
  tr(M^*) \leq \sum_{i:(1,i)\in E} \left( 1 + (1-\gamma)z^2 \right) \leq D / \gamma
\end{equation*}

The intuition is that $z_i$ increases as $i$ goes farther away from the anchor, until $\gamma z_i^2 \geq y_i = 1$ for all nodes $i$ outside some closed layer $B$ which contains the anchor. This layer corresponding to the above trivial solution is simply the set of 1-hop neighbors of anchor: $B=\{i: (1,i)\in E\}$. But this dual feasible solution increases $z_i$ too fast (in one step), thus pays too much price at $\rho_i$ for these direct neighbors.

Let $V_0$ be the set of nodes with $M^*_{ii}=1$ and connected to the anchor node 1.
Let $V_1$ be the 1-hop outer layer of $V_0$, and $V_2$ the 1-hop outer layer of $V_1$.
Since strong duality holds, by complementary slackness, the optimal dual variables have: $\rho_i=0, \forall i\in V_1$.
We create slackness for all edges between $V_1$ and $V_2$, which correspond to the original dual variables $\mu_{ij}$ back in Eq.(\ref{eq:dual_mu2}).
Again by complementary slackness, if $\mu_{ij}>0$, then the primal $M_{ij}=0$. We have disconnected nodes in $V_0$ from outside $V_1$. By Theorem \ref{thm:connectivity} the support of $diag(M)$ is connected. So $M_{ii}=0$ for those nodes outside $V_1$.

To create this slackness for edges between $V_1$ and $V_2$, consider a modified primal objective:
\begin{eqnarray}\label{eq:max_trace_modified}
% \nonumber
  \max : && \, \sum_{i\in V_0 \cup V_1} M_{ii} + (1 - \epsilon ) \sum_{ i\notin V_0 \cup V_1} M_{ii} \\
\nonumber
  s.t. && M \in \mathcal{C}_{LMI}(1,\gamma)
\end{eqnarray}
The optimal dual solution to the max-trace problem is also feasible for this modified problem, which gives the same dual cost.
Now outside $V_1$:
\begin{eqnarray*}
% \nonumber to remove numbering (before each equation)
  && \,\, \left( 1 - \gamma \right) (z_i - z_j)^2 \leq \alpha_{ji} + \alpha_{ij}, \,\,\,\,\,\,\,\,\,\,\,\,\,\,\,\,\,\, \forall j\in V_1, i\in V_2 \, (j,i)\in E  \\
  && \,\, 1 - \epsilon - \gamma z_i^2 + \sum_{2\leq j\neq i,(i,j)\in E}\alpha_{ij} + \alpha_i = \rho_i,\,\, \forall i \in V_2
\end{eqnarray*}
Leaving other dual variables unchanged, we can distribute $\epsilon$ uniformly on those $\alpha_{ij}, \, i\in V_2, j\in V_1$ to create the slackness we want on edges $(j,i)$.
Proof is done.

\end{proof}
%%%%%%%%%%%%%%%%%%%%%%%%%%%%%%%%%%%%%%%%%%%%%%%%%%%%%%%%%%%%%%%%%%%%%%%%%%%%%%%%%%%%%%%%%%%%%%%%%%%
%%%%%%%%%%%%%%%%%%%%%%%%%%%%%%%%%%%%%%%%%%%%%%%%%%%%%%%%%%%%%%%%%%%%%%%%%%%%%%%%%%%%%%%%%%%%%%%%%%%
The proofs of main theorems for Poisson and Gaussian models follow similar lines. Here we only elaborate on the Gaussian case.

\textbf{ Proof for Gaussian model: }
\begin{proof}
The proof consists of 2 parts:
\begin{itemize}
  \item Inseparability: This part generalizes the results of \cite{Castro11} in terms of the internal conductance parameter $\Phi$ rather than the length and width used in \cite{Castro11}. This is shown in Lemma \ref{lem_inseparability}.
  \item Separability: This part itself can be divided into two steps.
  \begin{enumerate}
    \item We first show under $H_0$ the optimal value of the test is upper bounded by using a modified version of $M^*$, the optimal solution to the max-trace problem. This is shown in Lemma \ref{lem:H0_upper}.
    \item We then show that under $H_1$, the feasible solution $M^*$ to the max-trace problem covers a large portion of the ground-truth cluster for our problem.
  \end{enumerate}
\end{itemize}
%For the null hypothesis, note the maximum of $K$ independent standard normal random variables scales as $\sqrt{2\log K}$ \cite{Bovier05}.
By Lemma \ref{lem:H0_upper} we have:
\begin{equation*}
  c^*|_{H_0} \leq N(0, tr(M^*)) + O \left( \sqrt{\frac{\log k}{\gamma}} \right)
\end{equation*}

For the $H_1$ case, for simplicity we consider a band $B$ of size $k$, with width $a$ and length $b$, $ab=k$. The corresponding conductance is $\Phi=\Theta(1/b)$.
Such a band must be contained in a square of size $b\times b$, i.e. $\Theta(1/\Phi^2)$.
On the other hand, for this band we choose $\gamma=\Theta(\Phi^2/\log k)$.
The $M^*$ of the max-trace problem with this $\gamma$ at least contains a square of size $\Theta(1/\Phi^2)$. Therefore by appropriately positioning the anchor, $M^*$ overlaps $B$ at least on $\Theta(k)$ nodes.
%$H^*$ is of size $\Theta(1/\gamma)$ and conductance $\Theta(1/\sqrt{\gamma})$. Lemma \ref{lem:overlapping} shows that $H^*$ contains a triangle of size $\Theta(1/\gamma)$, approximately $\Theta(1/\sqrt{\gamma})\times \Theta(1/\sqrt{\gamma})$.
%For a ground-truth shape $B$ with size $k$ and conductance $\Phi$, by Theorem \ref{thm:C_a_Phi} we set $\gamma = \Theta(\Phi^2/\log (k \Phi))$.
%On the other hand, whether being a snake of size $k = \Theta(1/\Phi)$ or a square of size $k = \Theta(1/\Phi) \times \Theta(1/\Phi) = \Theta(1/\Phi^2)$, $B$ must have an overlapping of size $\Theta(k)$ with the $\Theta(1/\Phi) \times \Theta(1/\Phi)$ square.
%Notice that:
%\begin{equation*}
%  \Theta(\frac{1}{\Phi}) < \Theta(\frac{1}{\sqrt{\gamma}}) = \Theta(\frac{\sqrt{\log k\Phi}}{\Phi})
%\end{equation*}
%
%Therefore $B$ also has an overlapping of size $\Theta(k)$ with $H^*$, provided that this square is positioned appropriately.
%So by positioning the anchor appropriately we have:  $|H^* \cap B|=\Theta(k)$,
This means if we simply adopt $M^*$ as a primal feasible solution, we have:
\begin{equation*}
 c^*|_{H_1} \geq N(0,tr(M^*)) + \Theta(k) \mu
\end{equation*}

Note that $tr(M^*) = O(1/\gamma)$. To asymptotically separate $H_0$ and $H_1$, it suffices that:
\begin{equation*}
  tr(M^*) + O(\sqrt{1/\gamma}) + O(\sqrt{ \log k/\gamma })  \leq    tr(M^*) - O(\sqrt{1/\gamma}) + \Theta(k) \mu ,
\end{equation*}
where the terms $O( \sqrt{ 1/\gamma } )$ on both sides correspond to the standard deviation term.
Plugging in $\gamma = \Phi^2 / log(k)$, we have:
\begin{equation*}
  \mu = \Omega \left(  \frac{ \log k}{ k \Phi } \right)
\end{equation*}
When the anchor is unknown, applying the test for different anchors induces an additional $\sqrt{\log n}$ term due to union bound.
When the shape is unknown, the test sets $\gamma$ according to the smallest conductance, i.e. $\gamma = \Theta(1/k^2)$, to search for the thinnest shape with size $k$. In this case, the requirement on $\mu$, when agnostic to anchor and shape, is:
\begin{equation*}
  \mu = \Omega \left( \log k \sqrt{ \log n } \right)
\end{equation*}
Proof is done.
\end{proof}

%%%%%%%%%%%%%%%%%%%%%%%%%%%%%%%%%%%%%%%%%%%%%%%%%%%%%%%%%%%%%%%%%%%%%%%%%%%%%%%%%%%%%%%%%%%%%%%%
%%%%%%%%%%%%%%%%%%%%%%%%%%%%%%%%%%%%%%%%%%%%%%%%%%%%%%%%%%%%%%%%%%%%%%%%%%%%%%%%%%%%%%%%%%%%%%%%

\begin{lem}\label{lem_inseparability}
The two hypothesis $H_0$ and $H_1$ are asymptotically inseparable if:
\begin{equation*}
  \mu_n K_n \Phi_n \log(K_n) \rightarrow 0
\end{equation*}
\end{lem}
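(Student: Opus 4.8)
The plan is to establish inseparability by the standard Bayesian reduction: I lower bound the minimax risk of \emph{every} test by the Bayes risk against a carefully chosen prior supported on a rich family of connected subgraphs, and then show this Bayes risk tends to $1$ via a second-moment ($\chi^2$) computation. Concretely, following the ``non-decreasing band'' construction of \cite{Castro11}, I take $\Lambda$ to be the collection of thickened monotone staircases anchored at a common lower-left node $a$, each of width $w$ and staircase-length $\ell$ chosen so that every band has exactly $K_n$ nodes and internal conductance $\Phi_n$. By Definition~\ref{def:inner_conductance} this forces $w=\Theta(K_n\Phi_n)$ and $\ell=\Theta(1/\Phi_n)$, and since each staircase is an independent sequence of up/right moves the family has size $|\Lambda|=2^{\Theta(\ell)}=2^{\Theta(1/\Phi_n)}$ (for the construction to fit we need $n$ large relative to $\ell^2$, which holds in the asymptotic regime). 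Put the uniform prior $\pi$ on $\Lambda$ and let $\bar{\p}=|\Lambda|^{-1}\sum_{S\in\Lambda}\p_S$ be the resulting mixture.

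The first step is the reduction. For any test $\pi_n$, bounding the maximum over $\Lambda$ below by its average gives
\[
R(\pi_n)\;\ge\;\p_0(\pi_n=1)+\tfrac{1}{|\Lambda|}\sum_{S\in\Lambda}\p_S(\pi_n=0)\;\ge\;1-\mathrm{TV}(\p_0,\bar{\p}),
\]
so it suffices to show the total variation distance tends to $0$. I bound it by the $\chi^2$-divergence, $\mathrm{TV}(\p_0,\bar{\p})\le\tfrac12\sqrt{\mathbb{E}_0[\bar L^2]-1}$, where $\bar L=d\bar{\p}/d\p_0$, and use the fact that for the homogeneous Gaussian model the pairwise likelihood-ratio correlation has the closed form $\mathbb{E}_0[L_SL_{S'}]=\exp(\mu_n^2\,|S\cap S'|)$ (writing $L_S=d\p_S/d\p_0$). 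Hence
\[
\mathbb{E}_0[\bar L^2]\;=\;\mathbb{E}_{S,S'\sim\pi}\!\left[\exp\!\big(\mu_n^2\,|S\cap S'|\big)\right],
\]
where $S,S'$ are two independent draws, so the whole problem reduces to controlling the moment generating function of the overlap $|S\cap S'|$.

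The technical heart is this overlap bound, and it is where the geometry of non-decreasing bands enters. Decomposing each band diagonal by diagonal, the overlap is $|S\cap S'|=\sum_{d}\max\{0,\,w-|\Delta_d|\}$, where $\Delta_d$ is the gap between the two staircase heights at diagonal $d$; since the two staircases are independent, $\Delta_d$ is a mean-zero lazy random walk on $\mathbb{Z}$ started at $0$. Thus the overlap is governed by the occupation time of the window $(-w,w)$ by $\Delta$, i.e. by how long the two staircases remain within a bandwidth of each other. I will show that under the hypothesis $\mu_n K_n\Phi_n\log(K_n)\to0$ --- equivalently $\mu_n^2 w^2\log^2 K_n\to0$, so in particular $\mu_n^2 w\to0$ --- this MGF is $1+o(1)$. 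The dominant contribution comes from rare long coincident runs of the two staircases (probability $\approx 2^{-t}$ for a run of $t$ diagonals, contributing $\approx wt$ to the overlap), whose geometric tail is tamed precisely because $\mu_n^2 w<\log2$; the extra $\log K_n$ slack in the hypothesis absorbs the secondary contributions from excursions of $\Delta$ that re-enter the window. Controlling these re-entry (re-merging) terms is the step I expect to be the main obstacle and the source of the logarithmic gap to the matching separability bound. If the plain second moment proves too fragile against the heaviest-overlap pairs, the fallback is to truncate $\bar L$ on the event that no band carries an atypically large statistic, in the Ingster--Donoho--Jin style.

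Finally, substituting $w=\Theta(K_n\Phi_n)$ and $\ell=\Theta(1/\Phi_n)$ back into the overlap bound recovers exactly the hypothesis $\mu_n K_n\Phi_n\log(K_n)\to0$ as the condition guaranteeing $\mathbb{E}_0[\bar L^2]\to1$, hence $\mathrm{TV}(\p_0,\bar{\p})\to0$ and $R(\pi_n)\ge1-o(1)$ for every sequence of tests, which is the asserted inseparability. The re-parametrization from the (length, width) description of \cite{Castro11} to the conductance is precisely the identification $\Phi_n=\Theta(1/\ell)$, $K_n=w\ell$ used throughout.
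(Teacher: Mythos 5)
Your overall strategy (Le Cam reduction to a uniform mixture over anchored staircase bands, then a $\chi^2$/second-moment bound) is the standard route to this kind of lower bound, and it is essentially how the result you are re-deriving is proved in the literature. The paper itself does none of this work: its proof simply observes that ${\cal C}_{a,\Phi}\cap{\cal S}_K$ contains the width-$h_n$, length-$l_n$ bands of Theorem~3 of \cite{Castro11}, invokes that theorem's inseparability condition $\mu_n\sqrt{K_n}(l_n/h_n)^{-1/2}\log(l_n)\to 0$, and substitutes $h_n=K_n\Phi_n$, $l_n h_n=K_n$. So you are reproving the cited theorem from scratch, which is legitimate in principle, but your proposal has a genuine gap at the one step that carries all of the difficulty.

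The gap is your treatment of the overlap MGF. You claim the dominant contribution to $\mathbb{E}_{S,S'}\left[\exp\left(\mu_n^2|S\cap S'|\right)\right]$ comes from rare long coincident runs, tamed by $\mu_n^2w<\log 2$, with re-entries of $\Delta_d$ into the window a ``secondary contribution'' absorbed by the $\log K_n$ slack. That is backwards. Because the two staircases share an anchor, $\Delta_d$ is a recurrent one-dimensional walk started at $0$, its occupation time of $(-w,w)$ over $\ell$ diagonals is of order $w\sqrt{\ell}$, and hence $\mathbb{E}|S\cap S'|=\Theta(w^2\sqrt{\ell})$ --- dominated precisely by the re-entries. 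By Jensen, $\mathbb{E}\left[\exp\left(\mu^2|S\cap S'|\right)\right]\geq\exp\left(\mu^2 w^2\sqrt{\ell}\right)$, and $\mu^2w^2\sqrt{\ell}$ is not $o(1)$ under your hypothesis: take $w=1$, $\ell=K$ (so $\Phi=\Theta(1/K)$) and $\mu=1/\log^2K$, which satisfies $\mu K\Phi\log K\to 0$ yet gives $\mu^2w^2\sqrt{\ell}=\sqrt{K}/\log^4K\to\infty$. The untruncated second moment therefore diverges and the $\chi^2$ bound yields nothing. This does not make the lemma false --- the total variation does go to zero --- but the truncated/conditional second moment you relegate to a ``fallback'' is mandatory, not optional, and carrying it out is exactly the technical content of the theorem the paper cites. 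As written, your proposal defers the entire burden of the proof to that unexecuted step.
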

\begin{proof}
The collection of anomalous subgraphs with size $K_n$ and internal conductance $\Phi_n$ contains the bands of width $h_n$ and length $l_n$ defined in Theorem 3 of \cite{Castro11}. So the inseparability result there also holds for our case.
Roughly we have:
\begin{equation*}
  l_n h_n = K_n, \,\,\,\,\,\, \frac{h_n}{K_n} = \Phi_n
\end{equation*}

By Theorem 3 in \cite{Castro11}, $H_0$ and $H_1$ are asymptotically inseparable if: (ignoring the $\log\log()$ term)
\begin{equation*}
  \mu_n \sqrt{K_n} \left( \frac{l_n}{h_n} \right)^{-1/2} \log(l_n) \rightarrow 0
\end{equation*}

Substitute $l_n$ and $h_n$ using $K_n$ and $\Phi_n$, and note that $1/\Phi \geq \sqrt{K_n}$.
We get:
\begin{equation*}
  \mu_n K_n \Phi_n  \log(K_n) \rightarrow 0.
\end{equation*}
\end{proof}
%%%%%%%%%%%%%%%%%%%%%%%%%%%%%%%%%%%%%%%%%%%%%%%%%%%%%%%%%%%%%%%%%%%%%%%%%%%%%%%%%%%%%%%%%%%%%%%%

\begin{lem}\label{lem:H0_upper}
Assume $x_i$ follows standard normal distribution for all nodes $i$.
The optimal cost of problem Eq.(\ref{eq:primal}) with signal $x_i$ for node $i$ is upper bounded by:
\begin{equation*}
  c^*|_{H_0} \leq \sum_{i}x_i M^*_{ii} + \Theta\left(\sqrt{\log\left(\frac{1}{\gamma}\right) /\gamma} \right)
\end{equation*}
where $M^*$ is the optimal solution to the max-trace problem with parameter $\gamma$.
\end{lem}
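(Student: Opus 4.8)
The plan is to bound $c^*|_{H_0}$ from above by exhibiting a single dual-feasible point of the program in Lemma~\ref{lem:primal_dual} (instantiated with the observed objective $y_i = x_i$) whose cost is at most $\sum_i x_i M^*_{ii} + \Theta(\sqrt{\log(1/\gamma)/\gamma})$; weak duality then yields the claim. The dual variables are an embedding $z_i \ge 0$ anchored at $z_1 = 0$, edge multipliers $\alpha_{ij}$, free node slacks $\alpha_i \ge 0$, and $\rho_i$, with objective $x_1 + \sum_{i\ge 2}\rho_i$. Minimizing over $\alpha_i$ forces $\rho_i = \big(x_i - \gamma z_i^2 + \sum_{(i,j)\in E}\alpha_{ij}\big)^+$ (with the $(1-\gamma)z_i^2$ variant for the at most $D$ neighbors of the anchor), so the entire construction reduces to choosing the embedding $z$ and splitting the edge slacks.

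First I would import the geometry of the max-trace solution. By Lemma~\ref{lem:max_trace} every feasible $M$ satisfies $tr(M)\le D/\gamma$, its diagonal support is connected and nested around the anchor, and the accompanying Euclidean embedding confines that support to a ball $B(a,R)$ with $R=\Theta(1/\sqrt\gamma)$ containing $\Theta(1/\gamma)$ nodes; moreover the conductance encoded by $\gamma$ forces any feasible set of size $\Theta(1/\gamma)$ to be ``thick,'' so that $M^*$ is itself a fixed thick blob of this size with few competitors of comparable size. I would reuse the max-trace embedding $z^*$ (which grows with graph distance from $a$ so that the potential $\gamma z_i^{*2}$ crosses the signal scale $\Theta(\sqrt{\log(1/\gamma)})$ near the boundary of this ball) together with its matching $\alpha^*$; the edge constraints are exactly those already certified by the max-trace dual, so feasibility is inherited.

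With this embedding the potential $\gamma z_i^{*2}$ dominates $x_i$ for every node outside $B(a,R)$ with high probability, since over the $\Theta(1/\gamma)$ relevant nodes $\max_i x_i = \Theta(\sqrt{\log(1/\gamma)})$; this forces $\rho_i = 0$ there and is precisely why the error carries $\log(1/\gamma)$ rather than $\log n$ -- the embedding localizes the certificate to the ball instead of the whole graph. Inside $B(a,R)$ I would split $\sum_i\rho_i$ into a deterministic part, which telescopes against the max-trace identity $\sum_{i\ge 2}\rho^*_i = \sum_{i\ge 2}M^*_{ii}$ and reproduces the main term $\sum_i x_i M^*_{ii}$, and a signal fluctuation. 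The fluctuation is a sum of truncated centered Gaussians over $\Theta(1/\gamma)$ nodes, which by a Borell--TIS / maximal-inequality argument over the low-complexity thick family concentrates at $\Theta(\sqrt{\log(1/\gamma)/\gamma})$, using $\sup_M\|\mathrm{diag}(M)\|_2 \le \sqrt{tr(M^*)} = O(1/\sqrt\gamma)$.

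The main obstacle is the tightness of this final step. The naive choice $\rho_i = (x_i - \gamma z_i^{*2})^+$ with a potential that is small on the interior of $M^*$ overcounts: each interior node then contributes $\Theta(x_i^+)$, producing an error of order $1/\gamma$ rather than $\sqrt{\log(1/\gamma)/\gamma}$. Closing this gap requires genuinely exploiting the cancellation against $\sum_i x_i M^*_{ii}$ together with the rigidity of thick shapes, so that optimizing the random signal over the feasible family beats the fixed $M^*$ by only a logarithmic and not a polynomial factor. Simultaneously one must control the accumulated edge slacks $\sum_{(i,j)\in E}\alpha^*_{ij}$ and the $\Theta(D)$ anchor-neighbor terms so that they do not inflate the cost. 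Balancing the ramp-up of $z^*$ away from the anchor -- which is unavoidable because $z_1=0$ and the edge constraints penalize steep gradients -- against the need for a high potential throughout the support is the delicate heart of the Euclidean embedding argument.
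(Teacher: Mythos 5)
Your high-level framework is the right one --- upper bound $c^*|_{H_0}$ by exhibiting a dual feasible point derived from the max-trace solution and invoke weak duality, with the error localized to a $\Theta(1/\gamma)$-sized region so that $x_{\max}=\Theta(\sqrt{\log(1/\gamma)})$ rather than $\Theta(\sqrt{\log n})$. But the proposal does not actually prove the lemma: you yourself flag that the naive certificate $\rho_i=(x_i-\gamma z_i^{*2})^+$ overcounts on the interior by $\Theta(1/\gamma)$, and the ``cancellation against $\sum_i x_i M^*_{ii}$'' that you say must be exploited is precisely the step you leave open. That cancellation is the content of the lemma, so what remains is a genuine gap, not a technicality.

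The missing idea in the paper is a normalization-and-perturbation trick: instead of certifying the objective $y_i=x_i$ directly, set $y_i = 1 + x_i/N = (1+x_{\max}/N) - \eta_i$ with $\eta_i=(x_{\max}-x_i)/N \ge 0$, so the program becomes a small perturbation of the ($(1+x_{\max}/N)$-scaled) max-trace problem whose exact dual optimum is known. Strict complementary slackness gives $\rho_i \ge \delta > 0$ on the interior $V_0$, so for $N$ large enough the perturbation is absorbed by setting $\rho_i' = \rho_i - \eta_i \ge 0$ there (and by inflating the free slacks $\alpha_i$ on $H^*\setminus V_0$, and zeroing $y_i$ outside $H^*$ --- which also disposes of your exterior nodes without needing the potential to dominate $x_i$ globally). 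The dual cost then drops by exactly $\sum_{i\in V_0}\eta_i$, which after un-normalizing produces the main term $\sum_i x_i M^*_{ii}$ identically, with no concentration or chaining argument over the feasible family; the only residue is the fractional boundary $tr(M^*)-|V_0| = O(|V_1|)=O(1/\sqrt{\gamma})$ multiplied by $x_{\max}/N$, giving the $x_{\max}\cdot O(1/\sqrt{\gamma}) = \Theta(\sqrt{\log(1/\gamma)/\gamma})$ error. Without this shift to the all-ones objective your ``deterministic part that telescopes against $\sum_{i\ge2}\rho_i^*=\sum_{i\ge2}M^*_{ii}$'' has no mechanism to produce the signed sum $\sum_i x_i M^*_{ii}$, and the Borell--TIS step you sketch would still have to beat the interior overcount you correctly identified as order $1/\gamma$.
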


\begin{proof}
Let $y_i = 1  + x_i / N$, where $N$ is a normalization constant to be decided.
We show that for appropriately chosen $N$, the modified problem Eq.(\ref{eq:primal}) with signal $y_i$ has the optimal cost with some upper bound. We then recover the original problem by first subtracting $tr(M^*)$, following by multiplying $N$.

Write $y_i = (1 + x_{max}/N) - (x_{max} - x_i)/N = (1+x_{max}/N) - \eta_i$, where $x_{max} = \max_{i\in H^*(M^*)} \, |x_i|$, $\eta_i = (x_{max} - x_i)/N$. Note that $x_{max}$ scales as $\Theta(\sqrt{|H^*|})$ for i.i.d. standard normal random variables, where $H^*(M^*)$ is the resulting fat shape corresponding to the max-trace problem.
Note that $0\leq \eta_i \leq 2 x_{max}/N$ for $i\in H^*$.
Consider the dual solution of the max-trace problem. We know that for nodes $i\in V_0$ the dual variables $\rho_i > 0$. Let $\delta = \min_{i\in V_0} \rho_i > 0$, which is a constant depending only on $\gamma$.
Consider the following problem:
\begin{eqnarray}\label{eq:max_trace_stretched}
% \nonumber
  \max : && \, (1+x_{max}/N) tr(M) \\
\nonumber
  s.t. && M \in \mathcal{C}_{LMI}(1,\gamma)
\end{eqnarray}
Since $x_{max}$ is just a constant, the optimal dual solution to this problem is just the $(1+x_{max}/N)$-stretched version of that of the max-trace problem. So $\min_{i\in V_0} \rho'_i = (1+x_{max}/N)\delta > \delta$.
Now choose $N$ sufficiently large such that
\begin{equation*}
  \eta_i \leq 2 x_{max}/N \leq \delta < \min_{i\in V_0} \rho'_i
\end{equation*}

We modify this dual solution of Eq.(\ref{eq:max_trace_stretched}) to build a dual feasible solution for:
\begin{eqnarray}\label{eq:max_yi_normalized}
% \nonumber
  \max : && \, \sum_i y_i M_{ii} \\
\nonumber
  s.t. && M \in \mathcal{C}_{LMI}(1,\gamma)
\end{eqnarray}

Let $\tilde{c}$ denote the optimal cost. By Lemma \ref{lem:primal_dual} the corresponding dual problem is:
\begin{eqnarray}\label{eq:max_yi_normalized_dual}
% \nonumber to remove numbering (before each equation)
  \min: && \,\, y_1 + \sum_{i\geq 2}\rho_i  \\
\nonumber
  s.t.  && \,\, 1 + \frac{x_{max}}{N} - \eta_i + \left( 1 - \gamma \right)z_i^2 + \sum_{2\leq j\neq i,(i,j)\in E}\alpha_{ij} + \alpha_i = \rho_i,\,\,\forall i\geq 2,\,(1,i)\in E \\
\nonumber
        && \,\, 1 + \frac{x_{max}}{N} - \eta_i - \gamma z_i^2 + \sum_{2\leq j\neq i,(i,j)\in E}\alpha_{ij} + \alpha_i = \rho_i,\,\,\forall i\geq 2,\,(1,i)\notin E \\
\nonumber
        && \,\, \left( 1 - \gamma \right) (z_i - z_j)^2 \leq \alpha_{ij} + \alpha_{ji}, \,\, \forall 2\leq i<j,\, (i,j)\in E  \\
\nonumber
        && \,\, \rho_i\geq 0, \alpha_{ij}\geq 0, \alpha_i\geq 0, z_i\geq 0
\end{eqnarray}
The only differences between Eq.(\ref{eq:max_yi_normalized_dual}) and the dual problem of Eq.(\ref{eq:max_trace_stretched}) are those $-\eta_i$ at node constraints.
Based on the dual optimal solution of Eq.(\ref{eq:max_trace_stretched}), we modify dual variables to build a dual feasible solution for Eq.(\ref{eq:max_yi_normalized_dual}).
Two cases need to be considered.
\begin{itemize}
  \item For nodes $i \in V_0$, simply let $\rho'_i = \rho_i - \eta_i$. Note that we still have dual feasibility: $\rho'_i \geq 0$ by construction of $N$.
  \item For nodes $i \in H^* - V_0$ where $\rho'_i = \rho_i = 0$, we increase the free variables, $\alpha'_i = \alpha_i + \eta_i$, to absorb the difference, while keeping $\rho'_i=0$ unchanged.
  \item For nodes outside $H^*$, since we know the size $k$, for ease of proof we simply zero out all $y_i$.
\end{itemize}

In this way we have built a dual feasible solution of Eq.(\ref{eq:max_yi_normalized_dual}).
The corresponding dual cost, thus an upper bound on the primal optimum of Eq.(\ref{eq:max_yi_normalized}) by weak duality, is:
\begin{eqnarray*}
% \nonumber to remove numbering (before each equation)
  \tilde{c} &\leq& (1 + \frac{x_{max}}{N}) tr(M^*) - \sum_{i\in V_0} \eta_i  \\
   &=& tr(M^*) + \frac{x_{max}}{N} tr(M^*) - \frac{x_{max}}{N} |V_0| + \sum_{i\in V_0} \frac{x_i}{N} \\
   &=& tr(M^*) + \frac{ x_{max} }{N} \beta |V_1| + \sum_{i\in V_0} \frac{x_i}{N}  \\
   &\leq& tr(M^*) + \sum_{i}\frac{x_i M^*_{ii}}{N} + \frac{x_{max}}{N} \beta |V_1|
\end{eqnarray*}
where $\beta|V_1| = tr(M^*) - |V_0|$ is the fractional boundary part of $M^*$. This part can be 0 for some values of $\gamma$, or can be maximally $|V_1|$.
Note that $H^*$ is a fat shape in lattice, so the boundary is: $|V_1| = \Theta\left( \sqrt{tr(M^*)} \right) = O(1/\sqrt{\gamma})$.

Since $y_i = 1 + x_i/N$, we restore the solution by subtracting $tr(M^*)$ and then multiplying $N$, which gives:
\begin{equation*}
  c^*|_{H_0} \leq \sum_{i}x_i M^*_{ii} + x_{max} O(\sqrt{1/\gamma})
\end{equation*}

\end{proof}

%%%%%%%%%%%%%%%%%%%%%%%%%%%%%%%%%%%%%%%%%%%%%%%%%%%%%%%%%%%%%%%%%%%%%%%%%%%%%%%%%%%%%%%%%%%%%%%%

\begin{lem}\label{lem:overlapping}
$G=(V,E)$ is a connected subgraph on an infinitely large 2D lattice. $G$ also satisfies:
\begin{enumerate}
  \item $|V| = \Omega(k)$;
  \item the conductance of $G$ is $\Theta(1/\sqrt{k})$:
\end{enumerate}
Then $G$ must contain a triangle of size $\Theta(k)$.
\end{lem}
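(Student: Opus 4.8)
The plan is to read ``triangle of size $\Theta(k)$'' as a filled staircase right-triangle $T=\{(x,y):x\ge x_0,\ y\ge y_0,\ (x-x_0)+(y-y_0)\le L\}$ with $L=\Theta(\sqrt k)$, so that $|T|=\Theta(k)$; this is exactly the ``lower triangle'' whose LMI feasibility is established in Lemma~\ref{lem:lattice_tree}. First I would dispose of one direction: the bound $\phi(G)=O(1/\sqrt k)$ is automatic, since any size-$\Omega(k)$ connected subgraph of the lattice admits a roughly balanced cut with only $O(\sqrt k)$ crossing edges. Hence the hypothesis $\phi(G)=\Theta(1/\sqrt k)$ really supplies a \emph{lower} bound $\phi(G)\ge c/\sqrt k$, whose meaning is that $G$ has no thin bottleneck: every roughly balanced cut severs $\Omega(\sqrt k)$ edges. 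The entire proof is then about converting ``no thin bottleneck'' into ``contains a fat solid corner.''

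The main tool would be a sweep in the lattice directions. Sweeping a vertical line, let $b_j$ be the number of edges of $G$ between columns $j$ and $j+1$, so that the cut $A_j=\{(x,y)\in S:x\le j\}$ has edge boundary exactly $b_j$. For every column $j$ whose split is balanced, i.e. $\min(|A_j|,|S\setminus A_j|)\ge k/4$, the conductance bound gives $b_j\ge \Phi\,k/4=\Omega(\sqrt k)$. Since $\sum_j b_j$ counts each horizontal edge once and $|E|=O(k)$, only $O(\sqrt k)$ columns can be balanced, whereas the balanced prefix counts span an interval of size $\Theta(k)$; this forces the typical balanced column to meet $G$ in $\Omega(\sqrt k)$ rows, and a separate short-cut argument (a protruding tail of length $\ell$ yields a cut of boundary $1$ and ratio $1/\ell$, forcing $\ell=O(\sqrt k)$) confines $G$ to a box of side $O(\sqrt k)$. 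Repeating the sweep horizontally and along the two diagonals shows that a constant fraction of rows, columns and diagonals meet $G$ in $\Omega(\sqrt k)$ nodes, so $G$ is genuinely two-dimensionally thick rather than a long thin curve folded into the box.

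The last step would extract the triangle: define the lattice depth $d(v)$ as the distance from $v\in S$ to the nearest node outside $S$; a node with $d(v)=\Omega(\sqrt k)$ anchors, in one of its four diagonal quadrants, a filled triangle of the required size, and the thickness bounds should force a constant fraction of the $\Theta(k)$ nodes to sit at depth $\Omega(\sqrt k)$. I expect this extraction to be the genuine obstacle, because ``thick in every direction'' does not by itself imply ``solid'': a Swiss-cheese set (a $\Theta(\sqrt k)$-box with a periodic pattern of unit holes) is connected, has $\phi=\Theta(1/\sqrt k)$, yet contains no solid triangle larger than $O(1)$. Resolving this therefore requires either (i) relaxing the conclusion to a triangular region occupied at constant density—which already suffices for the $\Theta(k)$-node overlap used in the separability proof—or (ii) invoking the nested structure of the max-trace optimizer from Lemma~\ref{lem:max_trace}, whose all-ones interior $V_0$ is solid by construction, and arguing that $G$ overlaps this solid region. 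I would pursue (i), since the downstream use only needs an overlap of $\Theta(k)$ nodes rather than a literally hole-free triangle.
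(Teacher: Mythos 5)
Your sweep is in the right family, but you only invoke the conductance bound at \emph{balanced} cuts, which is why you end up with ``thick in every direction'' and then get stuck converting thickness into a triangle. The paper's argument hinges on applying the bound at \emph{every} prefix cut, including very unbalanced ones, and on tracking the \emph{increment}: if the vertical cut at some column crosses $a$ edges and the smaller side has $b$ nodes with $a/b=\Phi$ tight, then advancing the sweep one column removes about $a$ nodes from the smaller side, and feasibility of the new cut, $\frac{a-\delta}{b-a}\ge\Phi$, forces $\delta\le\Phi a=O(1)$ (since $a=O(\sqrt k)$ and $\Phi=\Theta(1/\sqrt k)$). So the cut profile can shrink by only $O(1)$ per column; starting from the $\Omega(\sqrt k)$ balanced cut it needs $\Omega(\sqrt k)$ columns to contract to a single node, and the area under this linearly decaying profile is $\Theta(k)$ --- that decaying profile \emph{is} the ``triangle.'' This one-step contraction estimate is the idea your plan is missing; without it your route yields only a total node count, a bounding box, and many thick columns, not the triangular shape, and your proposed final extraction via the depth function $d(v)$ is explicitly abandoned rather than carried out.

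That said, your Swiss-cheese objection is well taken: the paper's own sketch also only establishes a triangular cut-size profile, i.e.\ a triangular region occupied at constant density, not a literally hole-free filled triangle, so the relaxation you propose in (i) is essentially what the argument delivers, and it is all the downstream use (overlapping $M^*$ with the ground truth on $\Theta(k)$ nodes) requires. Your diagnosis of what the statement should say is therefore sound, but the proof plan still needs the contraction step above to produce the $\Theta(k)$-node triangular region at all.
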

\begin{proof}
We provide an intuitive sketch.
Consider all horizontal cuts on $G$. The most ``balanced'' horizontal cut $C_h$, where both parts are of size $\Theta(k)$, must have length $\Omega(\sqrt{k})$, otherwise (2) will be violated.
Consider all vertical cuts within the range of the balanced horizontal cut range.
Similar arguments follow that the most balanced vertical cut $C_v$ has size $\Omega(\sqrt{k})$.

Consider vertical cuts that start from $C_v$ and move aside stepwise along $C_h$.
Assume at some step the vertical cut passes through $a$ edges, the smaller part has $b$ nodes, and the conductance here is tight: $\frac{a}{b} = \Phi = \Omega(1/\sqrt{k})$. For the next vertical cut, assume the cut decreases by $\delta$ edges. The conductance at the new vertical cut is: $\frac{a-\delta}{b-a}\geq \Phi$.
Then we have $\frac{\delta}{a}\leq \Phi$, or $\delta = O(a \Phi) = O(1)$. This means that the shape can only contract by a constant number of nodes at each step, thus at least $\Theta(\sqrt{k})$ steps to shrink to 1 node. This triangle shape has size $\Theta(k)$.

\end{proof}

\end{document}